\title{What Exactly Does Guidance Do in Masked Discrete Diffusion Models}
\date{\vspace{-5ex}}
\author{
 Ye He \thanks{Georgia Institute of Technology. \texttt{yhe367@gatech.edu}}
 \and
 Kevin Rojas \thanks{Georgia Institute of Technology. \texttt{kevin.rojas@gatech.edu}}
  \and
 Molei Tao \thanks{Georgia Institute of Technology. \texttt{mtao@gatech.edu}}
}
\begin{document}

\maketitle

\begin{abstract}
   We study masked discrete diffusion models with classifier-free guidance (CFG). Assuming no score error nor discretization error, we derive an explicit solution to the guided reverse dynamics, so that how guidance influences the sampling behavior can be precisely characterized. When the full data distribution is a mixture over classes and the goal is to sample from a specific class, guidance amplifies class-specific regions while suppresses regions shared with other classes. This effect depends on the guidance strength $w$ and induces distinct covariance structures in the sampled distribution. Notably, we observe quantitatively different behaviors in $1$D and $2$D.  We also show that for large $w$, the decay rate of the total variation ($\TV$) along the reverse dynamics is double-exponential in $w$ for both $1$D and $2$D. These findings highlight the role of guidance, not just in shaping the output distribution, but also in controlling the dynamics of the sampling trajectory. Our theoretical analysis is supported by experiments that illustrate the geometric effects of guidance and its impact on convergence.
\end{abstract}

\section{Introduction}\label{sec:intro}
Diffusion models have become an influential tool for generative modeling, offering a flexible framework that performs well across a range of data types including images, audio, and text~\citep{dhariwal2021diffusion,kongdiffwave,li2022diffusion,ho2022video}. Originally formulated in continuous state spaces~\citep{ho2020denoising,songscore}, these models simulate a forward noising process—typically modeled by a stochastic differential equation and learn a reverse process to denoise and reconstruct the original data. More recently, the diffusion framework has been extended to discrete state spaces~\citep{campbell2022continuous,lou2023discrete}, where the forward process is defined via a continuous-time Markov chain over a finite state space. This has enabled generative modeling for discrete domains such as language modeling, molecule generation, and protein design~\citep{lou2023discrete,nie2025large,huang2023conditional,gruver2023protein}.

A key innovation that has enhanced the performance and flexibility of diffusion models is guidance, which introduces an auxiliary parameter to steer the reverse process toward desired outputs. In the continuous setting, classifier guidance~\citep{dhariwal2021diffusion} and classifier-free guidance~\citep{ho2021classifier,nichol2022glide} are widely used for conditional generation based on class labels or text prompts, significantly improving sample quality and alignment with conditioning signals. This technique has been critical to the success of models such as GLIDE~\citep{nichol2022glide} and Imagen~\citep{saharia2022photorealistic}. Theoretical analyses of guided diffusion models in continuous state spaces have examined how guidance modifies the reverse dynamics, most of which focus on simple settings such as low-dimensional and mixture of Gaussian models~\citep{bradley2024classifier,wu2024theoretical,chidambaram2024does}.

Classifier-free guidance (CFG) has also been recently introduced to discrete diffusion models, for applications such as text generation and controlled molecule design~\citep{huang2023conditional,nisonoff2024unlocking}. On the surface, the methodology appears to be very similar to the continuous diffusion case, but are guidance mechanism in continuous and discrete cases really similar? A closer look can actually reveal inherent differences: the lack of gradients and smooth geometry requires alternative strategies such as modifying transition probabilities or reweighting proposal distributions~\citep{nisonoff2024unlocking,sahoo2024simple}. While empirical results demonstrate that guidance improves sample quality and controllability~\citep{sahoo2024simple,xiong2025guide}, the theoretical understanding of how guidance affects the dynamics of the diffusion process in discrete state spaces remains limited.

In this paper, we provide a rigorous and quantitative framework for analyzing the effects of CFG on the discrete diffusion generative process~\citep{nisonoff2024unlocking}. We focus on masked discrete diffusion models~\citep{campbell2022continuous,shi2024simplified,sahoo2024simple,ou2024your}—a common subclass of discrete diffusion models. Assuming there is no errors from the score approximation and the numerical integration, we address the following two questions in the low-dimensional setting ($1$D and $2$D).
\begin{question}
    \centering
    How does guidance affect the distribution of the generated samples?
\end{question}
Towards this question, we derive explicit formulas for the sampled distributions. Assuming the full distribution is a mixture of different class distributions, i.e., the data distribution $p$ satisfies Assumption \ref{assup:full distribution}, the sampled distribution for a single class amplifies the probability mass in the area that is only supported in that class, and reduces the probability mass in the area that overlaps with other classes, eventually to zero as guidance strength $w$ goes to $\infty$. The strength of the amplification/reduction depends on $w$. The covariance structure of the sampled distribution varies for $1$D and $2$D.  
\begin{question}
    \centering
    How does guidance affect the rate of convergence of the reverse sampling dynamics?
\end{question}
To answer this question, we quantify $\TV$ between the distribution along the reverse sampling dynamics and the sampled distribution. Our results reflect that for both $1$D and $2$D, the decay rates of $\TV$ along the reverse sampling dynamics exhibit a double-exponential dependency on the guidance strength $w$ for $w\gg 1$. 

By characterizing the above influence of guidance on sampling trajectories, distributional shifts and convergence rates, our work bridges the gap between practical heuristics and theoretical understanding in guided discrete diffusion generation. 
\begin{assumption}\label{assup:full distribution} Let $\{z_k\}_{k=1}^M$ be the set of $M$ labels, each of which is associated with a class distribution $p(\cdot|z_k)$ supported on $\mc{X}_k\subsetneq S$. The full data distribution $p$ is a mixture of distributions $\{p(\cdot|z_k)\}_{k=1}^M$ with weights $\{a_k\}_{k=1}^M$, i.e., $p(\cdot)=\sum_{k=1}^M a_k p(\cdot|z_k)$. 
\end{assumption}
\noindent\textbf{Paper Organization.}
The remainder of the paper is organized as follows. Section~\ref{sec:prelim} introduces preliminaries on diffusion models relevant to our analysis. Section~\ref{sec:without guidance} quantifies the density evolution in masked discrete diffusion without guidance, which serves as a foundation for the guided case. Section~\ref{sec:with guidance} presents our theoretical analysis of the guided diffusion process, and Section~\ref{sec:experiment} provides numerical examples supporting our findings. Conclusions are discussed in Section~\ref{sec:conclusion}, and additional related work is presented in Appendix~\ref{append:related work}.
\vspace{-0.1in}
\section{Preliminaries}\label{sec:prelim}
\vspace{-.05in}
\subsection{Notations}
\vspace{-0.1in}
In this paper, for any $x\in \mb{R}^D$ and $A\subset \{1,2,\cdots,D\}$, we use $x_A\in \mb{R}^{|A|}$ to denote the vector by preserving dimensions whose indices are in $A$. $\setminus i$ is used to denote $\{1,2,\cdots,N\}\setminus \{i\}$. 
For any distribution $p$, $p(x_A)$ denotes the $A$-marginal density evaluated at $x_A$. For functions $f,g$, we use $f(w)\sim g(w)$ to represent $\lim_{w\to\infty} {f(w)}/{g(w)}=1$ and $f(w)=\Theta(g(w))$ to indicate that $c_1g(w)\le f(w)\le c_2g(w)$ for some $c_1,c_2,w_0>0$ and all $w>w_0$. 
\vspace{-0.05in}
\subsection{Discrete Diffusion Models}
\vspace{-0.1in}
We consider the probability state space $S=\{1,2,\cdots, N\}^D$. The data distribution $p$ is represented as a vector in $\mb{R}^{N^D}$ that sums up to $1$. The discrete diffusion process is defined as a continuous-time Markov process~\citep{campbell2022continuous,lou2023discrete}, given by the differential equation
\begin{align}\label{eq:forward process}
    \frac{\dee p_t}{\dee t} = Q_t p_t,\quad p_0= p,
\end{align}
where $Q_t\in \mb{R}^{N^D\times N^D}$ are the transition rate matrices for all $t\ge 0$ s.t. (1) $Q(y,x)\ge 0$ for all $x,y\in S$ and $x\neq y$; (2) $\sum_{y\in S}Q_t(y,x)=0$ for all $x\in S$. In this paper, we focus on a widely used effective forward process, the absorbing forward process~\citep{austin2021structured,lou2023discrete,shi2024simplified,ou2024your} 
, which independently transforms all the states to the masked state across different dimensions. The explicit expression of the transition rate matrices and their properties will be discussed in Section \ref{sec:without guidance}. The process \eqref{eq:forward process} has a reverse process defined by
\begin{align}\label{eq:reverse process}
    \frac{\dee q_t}{\dee t} = \Bar{Q}_{T-t}q_t,\quad q_0=p_T,
\end{align}
where the $\{\bQ_t\}_{0\le t\le T}$ is a sequence of reverse transition rate matrices given by 
\begin{align}\label{eq:reverse diffusion matrix}
    \bQ_t(y,x)=\left\{
    \begin{aligned}
       & \frac{p_t(y)}{p_t(x)} Q_t(x,y),\quad & y\neq x ,\\
       & -\sum_{ s\neq x} \bQ_t(s,x), & y=x.
    \end{aligned}
    \right.
\end{align}
It is well-known that \eqref{eq:reverse process} is the reverse of \eqref{eq:forward process}, i.e., $q_t=p_{T-t}$ for all $t\in [0,T]$. The ratios $\tfrac{p_t(y)}{p_t(x)}$ are known as the concrete scores~\citep{meng2022concrete} which generalize the typical score function $\nabla\log p_t(x)$ in continuous diffusion models. If the concrete scores $\tfrac{p_t(y)}{p_t(x)}$ are learned efficiently, we can generate samples from the data distribution $p$ by simulating the reverse processes \eqref{eq:reverse process}. In practice, people usually learn the concrete score via denoising entropy matching~\citep{lou2023discrete}: minimizing the following denoising score entropy:
\begin{align}\label{eq:denosiing score entropy}
    \mc{L}_{\mathrm{DSE}} = \mb{E}_{x_0\sim p} \mb{E}_{x\sim p_{t|0}(\cdot|x_0)} \big[ \sum_{y\neq x} s_t^\theta(x,y)- \frac{p_{t|0}(y|x_0)}{p_{t|0}(x|x_0)}\log s_t^\theta(x,y) \big],
\end{align}
where $s_t^\theta(x,y)$ is the parametrized score to approximate $\tfrac{p_t(y)}{p_t(x)}$. Last, new samples from the data distribution are generated by simulating the following reverse process: 
\begin{align}\label{eq:approximate reverse process}
    \frac{\dee q_t^\theta}{\dee t} = \Bar{Q}_{T-t}^\theta q_t^\theta,\quad q_0^\theta=\delta_{[M]},
\end{align}
where $\bQ^\theta_t$ is obtained from $\bQ_t$ by replacing $\tfrac{p_t(y)}{p_t(x)}$ with $s_t^\theta(x,y)$. The initial condition is a point mass at the masked state $[M]\coloneqq (N,\cdots, N)^\intercal$. Various numerical methods can be used to simulate \eqref{eq:approximate reverse process}, such as the Gillespie’s Algorithm~\citep{gillespie1976general}, Tau-leaping~\citep{gillespie2001approximate,campbell2022continuous} 
and uniformization~\citep{grassmann1977transient,chen2024convergence}, etc. Throughout the rest of the paper, we assume that there is no score approximation error and numerical error. We focus on the generation ability along the continuous-time reverse sampling dynamics. To understand the effect of score approximation and numerical schemes on the generation ability will be left as future work.

\subsection{Discrete Diffusion Models with CFG} 

To generate high quality samples conditioned on a specific label class $z$, \citet{nisonoff2024unlocking} introduced discrete diffusion process with CFG. One way to understand CFG intuitively is to think of sampling from a distribution, $p^{z,w}$, that is the full data distribution tilted by the conditional likelihood
\begin{align}\label{eq:tilt}
    p^{z,w}(\cdot) \propto p(\cdot) p(z|\cdot)^{1+w} \propto p(\cdot)^{-w} p(\cdot|z)^{1+w},
\end{align}
where the guidance parameter $w\ge -1$ and the second equation follows from the Bayesian rule. When $w=-1$, the tilted distribution recovers the full data distribution. When $w=0$, the tilted distribution becomes the conditional distribution on class $z$. By varying $w$, we change the emphasize of the likelihood, hence adjust the quality and diversity of the generated samples. To implement this idea in discrete diffusion models, \citet{nisonoff2024unlocking} proposed to tilt the reverse transition rate matrix $\Bar{Q}_t$ in \eqref{eq:reverse diffusion matrix} accordingly. First, define another forward process that evolves the conditional distribution $p(\cdot|z)$ with the same transition rate matrix $Q_t$ as used in \eqref{eq:forward process}: 
\begin{align}\label{eq:conditional forward process}
    \frac{\dee p_t(\cdot|z)}{\dee t} = Q_t p_t(\cdot|z),\quad p_0= p(\cdot|z).
\end{align}
Since the reverse transition rate matrices depend on both the forward transition rate matrices and the distributions along the forward process, the associated reverse transition rate matrices to \eqref{eq:conditional forward process}, denoted as $ \bQ^{z}_t$,  are different from those defined in \eqref{eq:reverse diffusion matrix}. $ \bQ^{z}_t$ is given by
\begin{align}\label{eq:conditional reverse diffusion matrix}
     \bQ^{z}_t(y,x)=\left\{
    \begin{aligned}
       & \frac{p_t(y|z)}{p_t(x|z)} Q_t(x,y),\quad & y\neq x ,\\
       & -\sum_{ s\neq x} \bQ^z_t(s,x), & y=x.
    \end{aligned}
    \right.
\end{align}
Using the tilting strategy in \eqref{eq:tilt}, the CFG reverse discrete process is given by
\begin{align}\label{eq:guided reverse process}
    \frac{\dee q^{z,w}_t}{\dee t} = \hat{Q}^{z,w}_{T-t} q^{z,w}_t,\quad q_0^{z,w}= \delta_{[M]},
\end{align}
where the initial condition is a point mass at the masked state $[M]\coloneqq (N,\cdots, N)^\intercal$. The reverse transition rate matrix is defined as 
\begin{align}\label{eq:guided reverse diffusion matrix}
     \hat{Q}^{z,w}_t(y,x)=\left\{
    \begin{aligned}
       & \bQ_t(y,x)^{-w} \bQ^z_t(y,x)^{1+w},\quad & y\neq x ,\\
       & -\sum_{ s\neq x} \hat{Q}^{z,w}_t(s,x), & y=x.
    \end{aligned}
    \right.
\end{align}
When $w=-1$, the CFG rate matrix $\hat{Q}_t^{z,w}$ is the unguided rate matrix $\bQ_t$ in \eqref{eq:reverse diffusion matrix}. When $w=0$, the CFG rate matrix $\hat{Q}_t^{z,w}$ is nothing but the conditional rate matrix $\bQ_t^z$ in \eqref{eq:conditional reverse diffusion matrix}.

\section{Analysis of Masked Discrete Diffusion Models without Guidance}\label{sec:without guidance} 
This section analyzes the behavior of masked discrete diffusion in the absence of guidance. By quantifying the density evolution of the sampling process, we establish a baseline understanding of the unguided dynamics. These results provide essential groundwork for the theoretical analysis of discrete diffusion with CFG in the Section \ref{sec:with guidance}.
\subsection{Density evolution along the forward process} The forward process in the masked discrete diffusion process gradually absorbs all the mass to the masked state $N$. In practice~\citep{campbell2022continuous,lou2023discrete}, the forward transition rate matrix is parametrized by $Q_t=\sigma(t) \big(\sum_{d=1}^D I_N \otimes \cdots \underbrace{ Q }_{d^{th}}\cdots \otimes I_N  \big) $ where 
\begin{align}\label{eq:absorbing transition rate matrix}
    Q = \begin{pmatrix}
        -1  & \cdots & 0 & 0 \\
        \vdots & \ddots & \vdots & \vdots  \\
        0  & \cdots & -1 & 0 \\
        1 &  \cdots &  1 & 0
    \end{pmatrix}_{N\times N}
\end{align} 
For simplicity, we consider $\sigma(t)\equiv 1$ in the paper. According \eqref{eq:absorbing transition rate matrix}, we express the densities along the forward process in the following proposition whose proof is deferred to Appendix \ref{append:forward process}. 
\begin{proposition}\label{prop:forward density evolution} Let $\mu_t$ be the solution to $\tfrac{\dee}{\dee t}\mu_t = Q_t \mu_t$ with initial distribution $\mu_0=\mu$ and $Q_t$ given above. Then
    \begin{align}\label{eq:forward density absorbing}
        \mu_t & =\begin{pmatrix}
        e^{-t} & 0 & \cdots & 0 & 0 \\
        0 & e^{-t} & \cdots & 0 & 0 \\
        \vdots & \vdots & \ddots & \vdots & \vdots  \\
        0 & 0 & \cdots & e^{-t} & 0 \\
        1-e^{-t} & 1-e^{-t} & \cdots &  1-e^{-t} & 1
    \end{pmatrix}_{N\times N}^{\otimes D} \mu\coloneqq A_t^{\otimes D} \mu.
    \end{align}
    As a consequence, for any $x\in S$ with $\UM = \UM(x) \coloneqq |\{i: x_i <N\}|$,
    \begin{align*}
        \mu_t(x) = e^{ -|\UM| t} (1-e^{-t})^{D- |\UM|} \sum_{y: y_{\UM}=x_{\UM}} \mu(y).
    \end{align*}
\end{proposition}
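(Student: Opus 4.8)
The plan is to exploit two structural features of the absorbing generator: with $\sigma(t)\equiv 1$ it is constant in $t$, and it is a sum of single-site operators acting on disjoint tensor factors, so its propagator is a tensor power of a single $N\times N$ matrix exponential that can be computed in closed form. Write $\mathbf{Q}\coloneqq\sum_{d=1}^D I_N^{\otimes(d-1)}\otimes Q\otimes I_N^{\otimes(D-d)}$ for the total generator, with $Q$ the matrix in \eqref{eq:absorbing transition rate matrix}; then $\tfrac{\dee}{\dee t}\mu_t=\mathbf{Q}\mu_t$, $\mu_0=\mu$, has the solution $\mu_t=e^{t\mathbf{Q}}\mu$. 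The $D$ summands of $\mathbf{Q}$ pairwise commute — two of them, supported on coordinates $d\neq d'$, compose in either order to the operator acting by $Q$ on factors $d,d'$ and by the identity elsewhere — so $e^{t\mathbf{Q}}=\prod_{d=1}^D e^{t(I_N^{\otimes(d-1)}\otimes Q\otimes I_N^{\otimes(D-d)})}=(e^{tQ})^{\otimes D}$, using $e^{I^{\otimes(d-1)}\otimes tQ\otimes I^{\otimes(D-d)}}=I^{\otimes(d-1)}\otimes e^{tQ}\otimes I^{\otimes(D-d)}$.

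Next I would compute the single-site exponential. Partitioning the states into the $N-1$ non-absorbing ones and the absorbing state $N$, one checks by direct multiplication that $Q^2=-Q$, hence $Q^k=(-1)^{k-1}Q$ for all $k\geq 1$. Summing the exponential series then gives $e^{tQ}=I_N+\bigl(\sum_{k\geq 1}(-1)^{k-1}t^k/k!\bigr)Q=I_N+(1-e^{-t})Q$. Reading off entries, $I_N+(1-e^{-t})Q$ has $e^{-t}$ on the first $N-1$ diagonal positions, $1$ at $(N,N)$, $1-e^{-t}$ along the last row in the first $N-1$ columns, and $0$ elsewhere — that is exactly $A_t$. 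Combining with the first step gives $\mu_t=A_t^{\otimes D}\mu$, which is \eqref{eq:forward density absorbing}.

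For the pointwise consequence I would expand $\mu_t(x)=\sum_{y\in S}\bigl(\prod_{d=1}^D (A_t)_{x_d,y_d}\bigr)\mu(y)$ and use that $(A_t)_{a,b}$ equals $e^{-t}$ if $a=b<N$, equals $1-e^{-t}$ if $a=N$ and $b<N$, equals $1$ if $a=b=N$, and vanishes otherwise. Hence the product over $d$ is nonzero only when $y_d=x_d$ for every coordinate $d$ with $x_d<N$, i.e. $y_{\UM}=x_{\UM}$; and in that case it equals $e^{-|\UM|t}$ times one factor $1-e^{-t}$ for each masked coordinate of $x$ at which $y_d<N$ and a factor $1$ for each at which $y_d=N$. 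Since the data distribution $\mu=p$ puts no mass on configurations containing the mask symbol, i.e. it is supported on $\{1,\dots,N-1\}^D$, every surviving term has $y_d<N$ at all $D-|\UM|$ masked coordinates of $x$, so the product collapses to the constant $e^{-|\UM|t}(1-e^{-t})^{D-|\UM|}$; pulling it out of the sum over the remaining free coordinates of $y$ yields $\mu_t(x)=e^{-|\UM|t}(1-e^{-t})^{D-|\UM|}\sum_{y:\,y_{\UM}=x_{\UM}}\mu(y)$, as claimed.

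The argument is entirely elementary; the only points that deserve care are the commuting-summands / tensor-exponential identity in the first step, which should be stated explicitly rather than merely asserted, and the small algebraic observation $Q^2=-Q$ that makes $e^{tQ}$ collapse to $A_t$. The one genuinely implicit hypothesis is that $p$ places no mass on mask tokens — without it, the exponent of $1-e^{-t}$ in the final display would have to remain inside the sum as $\#\{d:x_d=N,\ y_d<N\}$ — and I expect the paper to assume, as is standard for masked diffusion, that the data is supported on $\{1,\dots,N-1\}^D$.
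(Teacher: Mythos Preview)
Your proof is correct and follows the same overall structure as the paper's: reduce to a tensor power via commuting single-site generators, then compute the single-site exponential $e^{tQ}$. The one genuine difference is in that last step. The paper diagonalizes $Q$ explicitly (Lemma~\ref{lem:diag of Q} gives $Q=X\Lambda X^{-1}$ with $\Lambda=\mathrm{Diag}(-1,\dots,-1,0)$ and an explicit $X=X^{-1}$) and then writes $e^{tQ}=X e^{t\Lambda}X^{-1}$; you bypass eigenvectors entirely via the relation $Q^2=-Q$, which collapses the exponential series to $I_N+(1-e^{-t})Q$ in one line. Your route is more elementary and arguably cleaner for this particular matrix; the paper's diagonalization is the standard template and would extend more readily to generators lacking such a convenient algebraic identity. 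You also work out the pointwise consequence explicitly, which the paper leaves to the reader, and you correctly flag the implicit assumption that the initial distribution puts no mass on mask tokens --- without it the exponent of $1-e^{-t}$ would depend on $y$ and could not be pulled outside the sum.
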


\subsection{Density evolution along the reverse process} For any distribution $\mu$ on $S$, the forward process \eqref{eq:forward process} initiated at $\mu$ induces a reverse process, whose transition rate matrix is denoted as $\Bar{Q}_t[\mu]$. An explicit expression of $\Bar{Q}_t[\mu]$ can be derived from the property in \eqref{eq:reverse diffusion matrix} and the forward densities in Proposition \ref{prop:forward density evolution}.

\begin{proposition}\label{prop:reverse transition rate} The sequence of reverse transition rate matrices associated with $\mu$ (initialization of the forward process) satisfies that for all $0\le t\le T$,
    \begin{align*}
        \Bar{Q}_t[\mu] (y,x) & = \left\{
        \begin{aligned}
            & \frac{e^{-t}}{1-e^{-t}} \frac{\sum_{u:u_{\UM}=y_{\UM}} \mu(u) }{\sum_{u:u_{\UM}=x_{\UM}} \mu(u)}      ,\quad & x_i\neq N =y_i, x_{\setminus i}=y_{\setminus i} , \\
            & - \sum_{u\in \mc{N}(x)} \Bar{Q}_t[\mu] (u,x), \quad & y=x,\\
            & 0 , \quad &\text{otherwise}.
        \end{aligned}
        \right. 
    \end{align*}
\end{proposition}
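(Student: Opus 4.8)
This is a direct substitution. By construction $\Bar{Q}_t[\mu]$ is obtained from \eqref{eq:reverse diffusion matrix} upon replacing the forward densities $p_t$ by $\mu_t$, the forward evolution of $\mu$, so the plan is: first locate the support of $\Bar{Q}_t[\mu](y,x)$, then plug the closed form of $\mu_t$ from Proposition~\ref{prop:forward density evolution} into the ratio $\mu_t(y)/\mu_t(x)$ and cancel. For the support: by \eqref{eq:reverse diffusion matrix}, for $y\neq x$ one has $\Bar{Q}_t[\mu](y,x)=\tfrac{\mu_t(y)}{\mu_t(x)}Q_t(x,y)$, which vanishes unless $Q_t(x,y)\neq 0$. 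Since $Q_t$ is a sum over $d$ of operators acting by the single-coordinate absorbing matrix $Q$ of \eqref{eq:absorbing transition rate matrix} on the $d$-th coordinate and by the identity elsewhere, its off-diagonal entries are supported on pairs of states differing in exactly one coordinate $i$, and there only on the ``masking'' moves, since $Q$ has off-diagonal entry $1$ from an unmasked symbol into the masked symbol $N$ and $0$ otherwise. Hence $Q_t(x,y)\neq 0$ forces $x_{\setminus i}=y_{\setminus i}$ with exactly one of $x_i,y_i$ equal to $N$, and then $Q_t(x,y)=1$; which of the two arguments is the masked one is read off directly from \eqref{eq:reverse diffusion matrix}. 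This already produces the ``otherwise $=0$'' branch.

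For such a pair $(y,x)$, writing $\UM(x)=\{i:x_i<N\}$, the sets $\UM(x)$ and $\UM(y)$ are nested and differ by a single element, so $|\UM|$ changes by exactly $1$ between them. Substituting the closed form $\mu_t(x)=e^{-|\UM(x)|t}(1-e^{-t})^{D-|\UM(x)|}\sum_{u:\,u_{\UM(x)}=x_{\UM(x)}}\mu(u)$ from Proposition~\ref{prop:forward density evolution} (and the analogous expression for $y$) into $\mu_t(y)/\mu_t(x)$, the $e^{-t}$ prefactors contribute an exponent equal to the change in $|\UM|$ and the $(1-e^{-t})$ prefactors an exponent equal to its negative, so they collapse to the single factor $\tfrac{e^{-t}}{1-e^{-t}}$, leaving the ratio of partial sums $\tfrac{\sum_{u:\,u_{\UM}=y_{\UM}}\mu(u)}{\sum_{u:\,u_{\UM}=x_{\UM}}\mu(u)}$, with each $\UM$ being the unmasked set of its own state. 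Together with $Q_t(x,y)=1$ this is exactly the first branch. The diagonal case $y=x$ is just the second line of \eqref{eq:reverse diffusion matrix}, with the sum restricted to the neighborhood $\mathcal{N}(x)$ identified above, since every other reverse rate out of $x$ vanishes by the support analysis.

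The only delicate point is the bookkeeping shared by the two steps: one must correctly determine, from \eqref{eq:reverse diffusion matrix} together with \eqref{eq:absorbing transition rate matrix}, which of $x,y$ carries the masked symbol in the nonzero case, since that choice fixes whether the prefactor mismatch yields $\tfrac{e^{-t}}{1-e^{-t}}$ or its reciprocal and, correspondingly, which partial sum sits in the numerator. Apart from this — and the standing assumption that $\mu$ is supported on fully unmasked configurations, so that the sums in Proposition~\ref{prop:forward density evolution} are honest marginals of $\mu$ — the argument is pure cancellation and needs no estimates.
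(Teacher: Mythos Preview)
Your proposal is correct and follows exactly the approach the paper indicates: the paper does not write out a separate proof of this proposition, stating only that it ``can be derived from the property in \eqref{eq:reverse diffusion matrix} and the forward densities in Proposition~\ref{prop:forward density evolution}'', which is precisely the direct substitution you carry out. Your care about which of $x,y$ carries the mask (and hence whether the prefactor comes out as $\tfrac{e^{-t}}{1-e^{-t}}$ or its reciprocal), and your remark that the closed form in Proposition~\ref{prop:forward density evolution} tacitly uses that $\mu$ is supported on fully unmasked configurations, are both on point.
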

With the above expression of the reverse transition rate matrix, in the low-dimensional setting of $D=1$, we derive the density formulas along the reverse sampling dynamics in the following theorem, with the proof deferred to Appendix \ref{append:forward process}. 
\begin{theorem}\label{thm:1d reverse no guidance} In the discrete diffusion models on $S$, if $D=1$ and $q_{t}$ satisfies the sampling dynamics $\tfrac{\dee}{\dee t}q_t = \Bar{Q}_{T-t}[p] q_t$ with initial condition $q_0=\delta_{N}$, we have that for all $0\le t\le T$,
    \begin{align}\label{eq:1d reverse density}
        q_t(x) =\left\{
    \begin{aligned}
        & \big( 1- \frac{1-e^{-(T-t)}}{1-e^{-T}}\big) p(x) , \quad & x=1,2,\cdots, N-1, \\
        & \frac{1-e^{-(T-t)}}{1-e^{-T}}, \quad & x=N.
    \end{aligned}
    \right.
    \end{align}
\end{theorem}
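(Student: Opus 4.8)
The plan is to reduce the theorem to two scalar ODEs that can be solved in closed form. First I would specialize Proposition~\ref{prop:forward density evolution} and Proposition~\ref{prop:reverse transition rate} to $D=1$. Since the data distribution is supported on the unmasked letters (so $p(N)=0$), the forward densities become $p_s(x)=e^{-s}p(x)$ for $x<N$ and $p_s(N)=1-e^{-s}$; correspondingly the only nonzero off-diagonal entries of $\bQ_s[p]$ are the unmasking rates $\bQ_s[p](j,N)=\tfrac{e^{-s}}{1-e^{-s}}p(j)$ for $j=1,\dots,N-1$, while from any unmasked state there is no transition at all (the forward process never unmasks and never moves between unmasked letters). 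Thus the sampling dynamics $\tfrac{\dee}{\dee t}q_t=\bQ_{T-t}[p]q_t$ only transports mass out of the masked coordinate.

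Next I would write out this dynamics explicitly. Setting $\lambda(t):=\sum_{j=1}^{N-1}\bQ_{T-t}[p](j,N)=\tfrac{e^{-(T-t)}}{1-e^{-(T-t)}}$ for the total unmasking rate (here $p(N)=0$ makes the $\sum_j p(j)=1$ drop out), the equation for the masked coordinate decouples into the scalar linear ODE $\tfrac{\dee}{\dee t}q_t(N)=-\lambda(t)\,q_t(N)$ with $q_0(N)=1$, and the equations for $x=j<N$ read $\tfrac{\dee}{\dee t}q_t(j)=\bQ_{T-t}[p](j,N)\,q_t(N)$ with $q_0(j)=0$.

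Then I would solve them. The scalar ODE gives $q_t(N)=\exp\big(-\int_0^t\lambda(\tau)\,\dee\tau\big)$; substituting $s=T-\tau$ and using $\int\tfrac{e^{-s}}{1-e^{-s}}\,\dee s=\log(1-e^{-s})$ evaluates the exponent to $-\log\tfrac{1-e^{-T}}{1-e^{-(T-t)}}$, hence $q_t(N)=\tfrac{1-e^{-(T-t)}}{1-e^{-T}}$, which is the second line of \eqref{eq:1d reverse density}. For $j<N$ I would then plug this in and observe the cancellation $\bQ_{T-t}[p](j,N)\,q_t(N)=\tfrac{e^{-(T-t)}}{1-e^{-(T-t)}}p(j)\cdot\tfrac{1-e^{-(T-t)}}{1-e^{-T}}=\tfrac{e^{-(T-t)}}{1-e^{-T}}p(j)$, so a direct integration with $q_0(j)=0$ yields $q_t(j)=\tfrac{e^{-(T-t)}-e^{-T}}{1-e^{-T}}p(j)=\big(1-\tfrac{1-e^{-(T-t)}}{1-e^{-T}}\big)p(j)$, the first line. (Equivalently, conditioning on the random unmasking time one checks the target letter is always drawn from $p$, so $q_t(j)=(1-q_t(N))\,p(j)$.) A final check that $q_t(N)+\sum_{j<N}q_t(j)=1$ confirms consistency.

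There is no serious obstacle here; the computation is elementary once the $D=1$ structure from the two propositions is unpacked. The only points worth care are (i) recording that $p$ charges only unmasked states, which is exactly what makes $\lambda(t)$ independent of $p$ and the target-letter law equal to $p$, and (ii) noting that the answer is \emph{not} the naive time-reversal $q_t=p_{T-t}$: the extra factor $\tfrac{1}{1-e^{-T}}$ is precisely the bias from initializing the reverse chain at the pure mask $\delta_N$ instead of at $p_T$, and it disappears in the $T\to\infty$ limit.
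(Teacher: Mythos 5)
Your proposal is correct, and the computations check out line by line: the reverse generator in $D=1$ indeed has nonzero off-diagonal entries only in the column of the mask state $N$, the decoupled scalar ODE for $q_t(N)$ integrates (via $\int \tfrac{e^{-s}}{1-e^{-s}}\,\dee s = \log(1-e^{-s})$) to $q_t(N)=\tfrac{1-e^{-(T-t)}}{1-e^{-T}}$, and the remaining equations integrate directly after the cancellation you note, giving the first line of \eqref{eq:1d reverse density}. The route is, however, genuinely different from the paper's. The paper diagonalizes the time-independent factor $\bar Q$ of $\bQ_t=\tfrac{e^{-t}}{1-e^{-t}}\bar Q$, builds the full solution operator $\exp\big(\int_0^{t}\bQ_{T-s}\,\dee s\big)=\bar X\exp\big(\ln\tfrac{1-e^{-T}}{1-e^{-(T-t)}}\bar D\big)\bar X^{-1}$ as an explicit $N\times N$ matrix, and only at the end applies it to $q_0=\delta_N$; this yields a formula \eqref{eq:absorbing reverse density 1d} valid for an arbitrary initial law and is the computational template the paper reuses verbatim for the guided case (Theorem~\ref{thm:1d reverse}) and, in block form, for $D=2$ (Theorem~\ref{thm:2d reverse}). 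You instead observe that because mass only flows out of $N$, the system decouples into two scalar linear ODEs that can be solved in closed form without ever mentioning eigenvectors or the matrix exponential; this is more elementary and arguably more transparent, but it is tied to the specific initialization $\delta_N$ and does not produce the reusable full-semigroup expression. One small point worth making explicit in your write-up: the step $\sum_{j<N}p(j)=1$ (hence $\lambda(t)$ independent of $p$, and the later cancellation) uses that the data distribution puts no mass on the mask state, which you flag correctly; this is the same implicit normalization the paper relies on when writing the $(N,N)$ entry of $\bar Q$ as $-1$.
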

\begin{remark}[No initialization error]\label{rem:1d generation} Unlike other diffusion processes, the absorbing discrete diffusion does not induce any initialization error. Even though we approximate the initialization in \eqref{eq:reverse process} by the point mass at the masked state, the sampled distribution recovers the data distribution, i.e., $q_T=p$. Same property also holds for masked discrete diffusion with CFG, as shown in Section \ref{sec:with guidance}. 
\end{remark}

\section{Analysis of Masked Discrete Diffusion Models with CFG}\label{sec:with guidance}

In this section, we analyze the reverse sampling dynamics in the masked discrete diffusion model with CFG and provide quantitative understandings to the sampled distributions and the convergence rates in the generation process. Our analysis follows the approach in Section \ref{sec:without guidance}: representing the reverse transition rate matrix in \eqref{eq:guided reverse diffusion matrix}, then solving the reverse process \eqref{eq:guided reverse process}. WLOG, we consider the task of sampling from the label class $z_1$ (denoted as $z$ for simplicity). Due to the different behaviors of the sampling dynamics in low-dimensional settings, we state our results in two different settings: $D=1$ that corresponds to single-token generation and $D=2$ that illustrates multiple-token generation. 
\subsection{$D=1$: single-token generation}\label{sec:1d generation}

 For $D=1$, the reverse transition rate matrix defined in \eqref{eq:guided reverse diffusion matrix} is exactly the reverse transition rate matrix induced by the tilted distribution $p^{z,w}(\cdot)\propto p(\cdot)^{-w}p(\cdot|z)^{1+w}$: 
\begin{align*}
    \hat{Q}_t^{z,w}(y,x) = \left\{
    \begin{aligned}
        &\frac{e^{-t}}{1-e^{-t}} p(x)^{-w}p(x|z)^{1+w} , \quad & x=N\neq y \\
        & -\frac{e^{-t}}{1-e^{-t}}\sum_{x=1}^{N-1} p(x)^{-w}p(x|z)^{1+w} , \quad & x=y=N \\
        & 0  ,\quad &\text{otherwise},
    \end{aligned}
    \right.
\end{align*}
i.e.,  $\hat{Q}_t^{z,w}=\mc{Z}^{z,w} \Bar{Q}_t[p^{z,w}]$, where $\mc{Z}^{z,w}\coloneqq \sum_{x=1}^{N-1} p(x)^{-w}p(x|z)^{1+w}$ and $\Bar{Q}_t[p^{z,w}]$ was derived in Proposition \ref{prop:reverse transition rate}. As a consequence, the reverse sampling dynamics with CFG in \eqref{eq:guided reverse process} is adapted from the reverse dynamics of \eqref{eq:forward process}, by replacing the initial distribution by $p^{z,w}$ and scaling the velocity by the factor $\mc{Z}^{z,w}$. Similar to Theorem \ref{thm:1d reverse no guidance}, we derive the following theorem that provides explicit formulas for the densities along the reverse sampling dynamics with CFG.
\begin{theorem}\label{thm:1d reverse} In the discrete diffusion models on $S$ with CFG, if $D=1$ and $q_t^{z,w}$ satisfies the sampling dynamics \eqref{eq:guided reverse process}, we have that for all $0\le t\le T$,
    \begin{align}\label{eq:1d reverse density}
        q_t^{z,w}(x) =\left\{
    \begin{aligned}
        & \big( 1- \big(\frac{1-e^{-(T-t)}}{1-e^{-T}}\big)^{\mc{Z}} \big) p^{z,w}(x) , \quad & x=1,2,\cdots, N-1, \\
        & \big(\frac{1-e^{-(T-t)}}{1-e^{-T}}\big)^{\mc{Z}}, \quad & x=N,
    \end{aligned}
    \right.
    \end{align}
where $\mc{Z}= \mc{Z}^{z,w}=\sum_{x=1}^{N-1} p(x)^{-w}p(x|z)^{1+w}$.
\end{theorem}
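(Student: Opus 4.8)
The plan is to solve the linear ODE system \eqref{eq:guided reverse process} in closed form, using the identity $\hat{Q}^{z,w}_t=\mc{Z}^{z,w}\,\Bar{Q}_t[p^{z,w}]$ recorded just above the statement: for $D=1$ the guided reverse dynamics is exactly the unguided reverse dynamics for the tilted data distribution $p^{z,w}$, with the velocity globally rescaled by the constant $\mc{Z}:=\mc{Z}^{z,w}>0$. Writing $\beta(t):=\tfrac{e^{-(T-t)}}{1-e^{-(T-t)}}$ and combining this identity with Proposition~\ref{prop:reverse transition rate} applied to $\mu=p^{z,w}$, the matrix $\hat{Q}^{z,w}_{T-t}$ has only the off-diagonal (de-masking) entries $\hat{Q}^{z,w}_{T-t}(y,N)=\mc{Z}\,\beta(t)\,p^{z,w}(y)$ for $y<N$, the diagonal entry $\hat{Q}^{z,w}_{T-t}(N,N)=-\mc{Z}\,\beta(t)$ (using $p^{z,w}(N)=0$ and $\sum_{y<N}p^{z,w}(y)=1$), and zeros elsewhere; in particular no probability mass ever re-enters the masked state, so the scalar $q^{z,w}_t(N)$ decouples from the remaining coordinates.

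First I would integrate the decoupled scalar equation $\tfrac{\dee}{\dee t}q^{z,w}_t(N)=-\mc{Z}\,\beta(t)\,q^{z,w}_t(N)$ with $q^{z,w}_0(N)=1$, giving $q^{z,w}_t(N)=\exp\!\big(-\mc{Z}\int_0^t\beta(s)\,\dee s\big)$; since $s\mapsto-\log\!\big(1-e^{-(T-s)}\big)$ is a primitive of $\beta$, this equals $\big(\tfrac{1-e^{-(T-t)}}{1-e^{-T}}\big)^{\mc{Z}}$. Substituting back, each remaining coordinate obeys the inhomogeneous scalar equation $\tfrac{\dee}{\dee t}q^{z,w}_t(x)=\mc{Z}\,\beta(t)\,p^{z,w}(x)\,q^{z,w}_t(N)$ with $q^{z,w}_0(x)=0$, hence $q^{z,w}_t(x)=\mc{Z}\,p^{z,w}(x)\int_0^t\beta(s)\,q^{z,w}_s(N)\,\dee s$, and the change of variables $u=1-e^{-(T-s)}$ turns the integral (up to the factor $(1-e^{-T})^{-\mc{Z}}$) into $\int u^{\mc{Z}-1}\,\dee u$, which after evaluation yields $q^{z,w}_t(x)=\big(1-\big(\tfrac{1-e^{-(T-t)}}{1-e^{-T}}\big)^{\mc{Z}}\big)\,p^{z,w}(x)$, as claimed. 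As consistency checks I would verify $q^{z,w}_0=\delta_N$, $q^{z,w}_T=p^{z,w}$ (the no-initialization-error phenomenon of Remark~\ref{rem:1d generation}), $q^{z,w}_t\ge 0$, and $\sum_x q^{z,w}_t(x)=1$ for all $t\in[0,T]$.

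The only delicate point is well-posedness at the endpoint $t=T$, where $\beta(t)$ diverges — the usual singularity of reverse diffusion at the data marginal. I would therefore carry out the uniqueness argument on $[0,T-\e]$, where the system has analytic coefficients and Picard--Lindel\"of gives a unique solution, and then let $\e\downarrow 0$, noting that the explicit formulas and the integral $\int_0^t\beta(s)\,q^{z,w}_s(N)\,\dee s$ extend continuously to $t=T$ precisely because $\mc{Z}>0$. Beyond this I do not expect any obstruction: the computation is the $D=1$ case of Theorem~\ref{thm:1d reverse no guidance} run with data distribution $p^{z,w}$, the only new ingredient being the exponent $\mc{Z}$ produced by the velocity rescaling. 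It is worth recording the equivalent, more conceptual derivation: if $\widetilde{q}_s$ is the unguided reverse solution with data $p^{z,w}$ from Theorem~\ref{thm:1d reverse no guidance}, then $q^{z,w}_t=\widetilde{q}_{\phi(t)}$ for the deterministic time change $\phi$ solving $\phi'(t)\,\beta(\phi(t))=\mc{Z}\,\beta(t)$ with $\phi(0)=0$; integrating gives $\tfrac{1-e^{-(T-\phi(t))}}{1-e^{-T}}=\big(\tfrac{1-e^{-(T-t)}}{1-e^{-T}}\big)^{\mc{Z}}$, from which the formula is immediate and the appearance of the power $\mc{Z}$ is transparent.
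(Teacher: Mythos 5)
Your proposal is correct, and it shares the paper's crucial observation — that for $D=1$ the guided reverse rate matrix factors as $\hat{Q}^{z,w}_t=\mc{Z}\,\Bar{Q}_t[p^{z,w}]$ — but the computation that follows takes a different, more elementary route. The paper reuses the eigendecomposition $\Bar{Q}=\bar{X}\bar{D}\bar{X}^{-1}$ established in the proof of Theorem~\ref{thm:1d reverse no guidance}, computes the matrix exponential $\exp\big(\int_0^t\hat{Q}^{z,w}_{T-s}\,\dee s\big)=\bar{X}\exp\big(\mc{Z}\ln\!\tfrac{1-e^{-T}}{1-e^{-(T-t)}}\,\bar{D}\big)\bar{X}^{-1}$ explicitly as an $N\times N$ matrix, and then hits $\delta_N$ with it. You instead exploit the triangular structure of the de-masking dynamics directly: the masked coordinate $q_t(N)$ decouples into an autonomous scalar ODE, and the remaining coordinates reduce to inhomogeneous scalar ODEs driven by $q_t(N)$, both integrable in closed form by the substitution $u=1-e^{-(T-s)}$. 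The two routes are computationally equivalent — your back-substitution implicitly produces the same column of the matrix exponential — but yours avoids constructing the eigenvectors and is arguably cleaner for $D=1$, while the paper's diagonalization approach is the one that scales to the block-triangular structure needed for $D=2$ in Theorem~\ref{thm:2d reverse}, which is presumably why the authors chose it here. Your closing time-change reinterpretation $q^{z,w}_t=\widetilde q_{\phi(t)}$ with $\tfrac{1-e^{-(T-\phi(t))}}{1-e^{-T}}=\big(\tfrac{1-e^{-(T-t)}}{1-e^{-T}}\big)^{\mc{Z}}$ is a nice conceptual addition not in the paper: it makes the appearance of the exponent $\mc{Z}$ transparent as a reparametrization of the unguided dynamics for the tilted data distribution, and it also explains the accelerated $\TV$ decay of Proposition~\ref{prop:convergence rate} at a glance. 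The well-posedness remark near $t=T$ is a sensible addendum (the paper does not address it) and is handled correctly since $\mc{Z}>0$ keeps the explicit formulas continuous up to the endpoint.
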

The convergence rate of the reverse sampling dynamics and the properties of the sampled distributions can be instantly derived from the above expression. We state the results in the following two Propositions with the proofs deferred to Appendix~\ref{append:1d density}. 
\begin{proposition}\label{prop:convergence rate} Under the assumptions in Theorem \ref{thm:1d reverse}, we have that for all $0\le t\le T$ and $w>0$, $\TV(q_t^{z,w}, p^{z,w}) = \big(\tfrac{1-e^{-(T-t)}}{1-e^{-T}}\big)^{\mc{Z}}$ where $\mc{Z}= \mc{Z}^{z,w}=\sum_{x=1}^{N-1} p(x)^{-w}p(x|z)^{1+w}$.
\end{proposition}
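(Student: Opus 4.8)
The plan is to read the claim straight off the closed-form densities in Theorem~\ref{thm:1d reverse}: once both $q_t^{z,w}$ and $p^{z,w}$ are written explicitly, the total variation distance is a one-line computation. First I would record the structural fact that the tilted distribution $p^{z,w}(x)\propto p(x)^{-w}p(x|z)^{1+w}$ is supported on the non-masked states $\{1,\dots,N-1\}$, i.e. $p^{z,w}(N)=0$; this is inherited from the fact that the data distribution $p$ (hence each class conditional $p(\cdot|z)$) puts no mass on the masked symbol. For brevity write $r_t \coloneqq \big(\tfrac{1-e^{-(T-t)}}{1-e^{-T}}\big)^{\mc{Z}}$, and note $r_t\in[0,1]$ for $t\in[0,T]$ since $\tfrac{1-e^{-(T-t)}}{1-e^{-T}}\in[0,1]$ and $\mc{Z}>0$. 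Theorem~\ref{thm:1d reverse} then reads $q_t^{z,w}(x)=(1-r_t)\,p^{z,w}(x)$ for $x=1,\dots,N-1$ and $q_t^{z,w}(N)=r_t$.

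Next I would evaluate $\TV(q_t^{z,w},p^{z,w})=\tfrac12\sum_{x\in S}\big|q_t^{z,w}(x)-p^{z,w}(x)\big|$. On the non-masked states the summand equals $r_t\,p^{z,w}(x)$, and these sum to $r_t$ because $p^{z,w}$ is a probability vector supported there; on the masked state the summand is $|r_t-0|=r_t$. Adding the two contributions gives $\sum_{x\in S}|q_t^{z,w}(x)-p^{z,w}(x)|=2r_t$, hence $\TV(q_t^{z,w},p^{z,w})=r_t$, which is exactly the asserted identity. Equivalently, since $q_t^{z,w}$ is obtained from $p^{z,w}$ by transporting precisely mass $r_t$ onto the single state $N$, the optimal test set in $\TV=\sup_{A\subseteq S}|q_t^{z,w}(A)-p^{z,w}(A)|$ is $A=\{N\}$, yielding the same value; I would include this second viewpoint as a sanity check.

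There is essentially no obstacle here; the only points that need a moment of care are (i) making explicit that $p^{z,w}$ does not charge the masked state, so that the comparison on $\{1,\dots,N-1\}$ and on $\{N\}$ separates cleanly and the mass on the data states is exactly scaled by $1-r_t$; and (ii) using the $\tfrac12\lv\cdot\rv_1$ normalization of $\TV$ consistently. The hypothesis $w>0$ plays no role in the identity itself (which holds for all $w\ge -1$); it is only there to make the stated exponential-type decay nontrivial, since it guarantees $r_t<1$ for $t<T$.
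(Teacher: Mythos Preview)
Your proof is correct and follows exactly the paper's approach: invoke the closed-form densities of Theorem~\ref{thm:1d reverse} and compute $\TV$ via the $\tfrac12\lv\cdot\rv_1$ formula. The paper's proof is in fact a single sentence to this effect; your version simply spells out the arithmetic (and the support observation $p^{z,w}(N)=0$) that the paper leaves implicit.
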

\begin{remark}[Double exponential dependency on $w$]\label{rem:1d convergence rate} 
The $\TV$ exponentially decays along the sampling dynamics \eqref{eq:guided reverse process}. The exponential rate $\mc{Z}$ is the normalization constant appearing in the construction of the tilted distribution $p^{z,w}$. For all $w>0$, alternatively we can represent  $\mc{Z}=\exp(w\mc{D}_{1+w}(p(\cdot|z)|p))$, where $\D_\alpha(\mu_1|\mu_2) \coloneqq \frac{1}{\alpha-1}\log\big( \sum_x \tfrac{\mu_1(x)^\alpha}{\mu_2(x)^{\alpha-1}} \big) $ is the $\alpha$-divergence from $\mu_1$ to $\mu_2$ for all $\alpha\in(0,\infty)\setminus \{1\}$. According to the property of $\alpha$-divergence, we immediately get the following properties for $\mc{Z}$: (a) $\mc{Z}^{z,w}\ge 1$; (b) $w\mapsto \mc{Z}^{z,w}$ is monotone increasing; (c) $\log(\mc{Z}^{z,w}) \sim  w\sup_x \tfrac{p(x|z)}{p(x)}$ for $w\gg 1$. Therefore, for $w\gg 1$, the \textbf{exponential} decay rate of $\TV$ is \textbf{exponential} in $w$.
\end{remark}
 
\begin{proposition}\label{prop:1d sampled distribution property} Assume the full distribution satisfies Assumption \ref{assup:full distribution}, depending on the support of $p(\cdot|z_1)$, the sampled distribution $q_T^{z_1,w}$ admits the following different behaviors:
\begin{itemize}
    \item [(1)] if $\mc{X}_1\cap \mc{X}_k = \emptyset$  for all $k=2,\cdots,M$, the sampled distribution $q_T^{z_1,w}=p(\cdot|z_1)$ for all $w\ge 0$.
    \item [(2)] if $S_1\coloneqq \mc{X}_1 \cap \big( \cup_{k=2}^M \mc{X}_k \big)\neq \emptyset$ and $I_1\coloneqq \{ k: \mc{X}_k\cap \mc{X}_1\neq \emptyset \}$, we have
    \begin{align*}
        q_T^{z_1,w}(x)\propto \left\{
        \begin{aligned}
            &  p(x|z_1), \quad &  x\in \mc{X}_1\setminus S_1 \\
            & \big( \frac{a_1 p(x|z_1)}{\sum_{k\in I_1} a_k p(x|z_k) } \big)^{w} p(x|z_1) \quad & x\in  S_1 \\
            & 0 ,\quad & \text{otherwise}
        \end{aligned}
        \right.
    \end{align*}
    As a consequence, as $w\to\infty$, $q_T^{z_1,w}\to p_{\mc{X}_1 \setminus S_1}(\cdot|z_1)$ pointwisely. $p_{\mc{X}_1 \setminus S_1}(\cdot|z_1)$ is the restriction of $p(\cdot|z_1)$ to the set $\mc{X}_1 \setminus S_1$.
\end{itemize}    
\end{proposition}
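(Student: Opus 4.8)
The plan is to invoke Theorem \ref{thm:1d reverse} at time $t=T$ to obtain the exact sampled distribution, then unpack the tilted density $p^{z_1,w}$ using Assumption \ref{assup:full distribution}. Evaluating \eqref{eq:1d reverse density} at $t=T$ gives $\big(\tfrac{1-e^{-(T-t)}}{1-e^{-T}}\big)^{\mc{Z}}\big|_{t=T} = 0$, so $q_T^{z_1,w}(x) = p^{z_1,w}(x)$ for $x=1,\dots,N-1$ and $q_T^{z_1,w}(N)=0$; thus it suffices to analyze $p^{z_1,w}(x)\propto p(x)^{-w}p(x|z_1)^{1+w}$ on the non-masked states. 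Writing this as $p^{z_1,w}(x)\propto \big(\tfrac{p(x|z_1)}{p(x)}\big)^{w} p(x|z_1)$ shows the sampled distribution is $p(\cdot|z_1)$ reweighted by the $w$-th power of the likelihood ratio $p(x|z_1)/p(x)$.

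For part (1), if $\mc{X}_1$ is disjoint from every other $\mc{X}_k$, then for $x\in\mc{X}_1$ the mixture $p(x)=\sum_k a_k p(x|z_k)$ collapses to $a_1 p(x|z_1)$, so the ratio $p(x|z_1)/p(x) = 1/a_1$ is constant on the support, and for $x\notin\mc{X}_1$ we have $p(x|z_1)=0$ hence $p^{z_1,w}(x)=0$. The constant factor $(1/a_1)^w$ is absorbed into the normalization, leaving $q_T^{z_1,w}=p(\cdot|z_1)$ for all $w\ge 0$. For part (2), I would split the support of $p(\cdot|z_1)$ into $\mc{X}_1\setminus S_1$ and $S_1$. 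On $\mc{X}_1\setminus S_1$, no other class charges $x$, so again $p(x)=a_1 p(x|z_1)$ and the reweighting factor is the constant $a_1^{-w}$; on $S_1$, $p(x)=\sum_{k\in I_1} a_k p(x|z_k)$, giving the reweighting factor $\big(\tfrac{a_1 p(x|z_1)}{\sum_{k\in I_1}a_k p(x|z_k)}\big)^{w}$ up to the same global constant $a_1^{-w}$ which cancels in the proportionality statement; off $\mc{X}_1$ the density vanishes. This yields the three-case formula as stated (the normalization is common to all cases and hence suppressed by $\propto$).

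For the $w\to\infty$ limit in part (2), the idea is that on $S_1$ the base $\tfrac{a_1 p(x|z_1)}{\sum_{k\in I_1}a_k p(x|z_k)}$ is strictly less than $1$ (since at least one $k\neq 1$ contributes a strictly positive term), so the unnormalized mass on each such $x$ decays geometrically to $0$, whereas on $\mc{X}_1\setminus S_1$ the unnormalized mass stays fixed at $p(x|z_1)$. Dividing by the normalization constant, which converges to $\sum_{x\in\mc{X}_1\setminus S_1}p(x|z_1) = p(\mc{X}_1\setminus S_1\mid z_1)$, gives pointwise convergence of $q_T^{z_1,w}$ to the restriction (renormalized) $p_{\mc{X}_1\setminus S_1}(\cdot|z_1)$. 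I should note the implicit nondegeneracy assumption that $\mc{X}_1\setminus S_1\neq\emptyset$, so the limiting object is well-defined; if the paper intends $S_1\subsetneq\mc{X}_1$ this follows from $\mc{X}_k\subsetneq S$ in Assumption \ref{assup:full distribution} only with additional structure, so I would either state it as a hypothesis or remark on the degenerate case separately.

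The main obstacle is not any hard estimate — everything reduces to manipulating the explicit formula from Theorem \ref{thm:1d reverse} — but rather bookkeeping the normalization constants carefully so that the global $a_1^{-w}$ factor is seen to cancel uniformly across the two regions, and handling the edge case $\mc{X}_1\setminus S_1=\emptyset$ (or more generally ensuring $p(\mc{X}_1\setminus S_1\mid z_1)>0$) so the limiting statement is meaningful. A secondary point to be careful about is that $q_T^{z_1,w}(N)=0$ must be stated, since the ``otherwise'' case in the displayed formula should be read as covering both $x=N$ and $x\notin\mc{X}_1$.
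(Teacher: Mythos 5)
Your proof is correct and follows essentially the same route as the paper's: both evaluate the explicit formula of Theorem~\ref{thm:1d reverse} at $t=T$ to reduce the problem to $q_T^{z_1,w}=p^{z_1,w}$, then compute the likelihood ratio $p(x|z_1)/p(x)$ region-by-region under Assumption~\ref{assup:full distribution}. You go somewhat further than the paper's terse proof by explicitly tracking the normalization constants, spelling out the $w\to\infty$ convergence argument (geometric decay on $S_1$ against a fixed numerator on $\mc{X}_1\setminus S_1$), and flagging the nondegeneracy condition $\mc{X}_1\setminus S_1\neq\emptyset$ needed for the limiting restriction to be well-defined — all reasonable additions, not a genuinely different approach.
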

\begin{remark}[Local mean/variance preservation]\label{rem:local variance preservation 1d}  Proposition \ref{prop:1d sampled distribution property} suggests that is obtained from the class-1 distribution $p(\cdot|z_1)$ by transforming the probability mass from the overlapping region ($S_1$) to the unique region of class $z_1$ ($\mc{X}_1\setminus S_1$). In particular, this transformation preserve the local mean and variance within the unique region. Please refer to Appendix~\ref{append:1d density} for the detailed argument. 
\end{remark}
\subsection{$D=2$: multiple-token generation} Unlike the case for $D=1$, the reverse transition rate matrix defined in \eqref{eq:guided reverse process} deviates from capturing the geometry of the tilted distribution $p^{z,w}$, i.e., $\hat{Q}^{z,w}_t \neq C \Bar{Q}_t[p^{z,w}] $ for any constant $C$ in general. The explicit expression for $\hat{Q}_t^{z,w}$, which is derived based on the construction of the guided reverse transition rate matrix in \eqref{eq:guided reverse diffusion matrix} and Proposition \ref{prop:forward density evolution}, is stated in the following Proposition \ref{prop:reverse rate matrix 2d}.
\begin{proposition}\label{prop:reverse rate matrix 2d} When $D=2$, denote $\mc{Z}=\mc{Z}^{z,w}=\sum_{x\in S} p(x)^{-w}p(x|z)^{1+w}$. Then the guided reverse transition rate matrix is given by $\hat{Q}_t^{z,w}=\tfrac{e^{-t}}{1-e^{-t}} \hat{Q}^{z,w}$ s.t.,
\begin{align*}
    \hat{Q}^{z,w}(y,x)=\left\{
    \begin{aligned}
        & \frac{\mc{Z} p^{z,w}(y)}{p(y_i)^{-w}p(y_i|z)^{1+w}} , \quad & x_i=y_i\neq N, x_{\setminus i}=N\neq y_{\setminus i} \\  
                 & p(y_i)^{-w}p(y_i|z)^{1+w} , \quad & x_i=N\neq y_i,x_{\setminus i}=y_{\setminus i}= N \\
        &  -\frac{\mc{Z}p^{z,w}(y_i)}{p(y_i)^{-w}p(y_i|z)^{1+w}} , \quad & x_i=y_i\neq N, x_{\setminus i}=y_{\setminus i}=N \\
        & -\sum_{l=1}^2\sum_{u_l=1}^{N-1}p(u_1)^{-w}p(u_l|z)^{1+w} , \quad & x=y=(N,N)\\
        & 0  ,\quad &\text{otherwise}.
    \end{aligned}
    \right.
\end{align*}
\end{proposition}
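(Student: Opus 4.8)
The plan is a direct computation from the definition \eqref{eq:guided reverse diffusion matrix}: for $y\neq x$ write $\hat Q^{z,w}_t(y,x)=\bQ_t(y,x)^{-w}\bQ^z_t(y,x)^{1+w}$, insert the explicit unguided reverse rates $\bQ_t=\bQ_t[p]$ and $\bQ^z_t=\bQ_t[p(\cdot|z)]$ obtained from Proposition \ref{prop:reverse transition rate} (equivalently from \eqref{eq:reverse diffusion matrix} together with Proposition \ref{prop:forward density evolution}), simplify, and finally recover the diagonal from the column-sum identity $\hat Q^{z,w}_t(x,x)=-\sum_{s\neq x}\hat Q^{z,w}_t(s,x)$. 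The first thing to record is sparsity: for any initialization $\mu$, $\bQ_t[\mu]$ has nonzero off-diagonal entries only between a state and one obtained by unmasking a single coordinate, so for $D=2$ there are exactly two families of allowed transitions, namely from $(N,N)$ to a state with one coordinate $i$ unmasked, and from such a state to one with both coordinates unmasked; for every other pair $y\neq x$ one has $Q_t(x,y)=0$, hence $\bQ_t(y,x)=\bQ^z_t(y,x)=0$ and $\hat Q^{z,w}_t(y,x)=0$, which is the ``otherwise'' case.

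The unguided rate of the transition that unmasks coordinate $j$ to value $v$ equals $\tfrac{e^{-t}}{1-e^{-t}}$ times the $\mu$-conditional probability that coordinate $j=v$ given the coordinates already unmasked. Thus on the first family $\bQ_t[\mu](y,(N,N))=\tfrac{e^{-t}}{1-e^{-t}}\,\mu(y_i)$ with $\mu(y_i)$ the $\{i\}$-marginal, and on the second family $\bQ_t[\mu](y,x)=\tfrac{e^{-t}}{1-e^{-t}}\cdot\mu(y)/\mu(y_i)$. Since the prefactor $\tfrac{e^{-t}}{1-e^{-t}}$ is the same for $\mu=p$ and $\mu=p(\cdot|z)$, raising to the powers $-w$ and $1+w$ and multiplying makes the time factor collapse to $\big(\tfrac{e^{-t}}{1-e^{-t}}\big)^{-w+(1+w)}=\tfrac{e^{-t}}{1-e^{-t}}$, which is pulled to the front. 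What remains on the first family is $p(y_i)^{-w}p(y_i|z)^{1+w}$, the second case; on the second family it is $p(y)^{-w}p(y|z)^{1+w}\big/\big(p(y_i)^{-w}p(y_i|z)^{1+w}\big)$, and by \eqref{eq:tilt} with normalizer $\mc{Z}=\mc{Z}^{z,w}=\sum_{u\in S}p(u)^{-w}p(u|z)^{1+w}$ the numerator equals $\mc{Z}\,p^{z,w}(y)$, the first case. I would stress the contrast with $D=1$: the marginal factors $p(y_i)$ and $p(y_i|z)$ enter with the two different exponents $-w$ and $1+w$, so $p(y_i)^{-w}p(y_i|z)^{1+w}$ is not proportional to any marginal of $p^{z,w}$, and hence $\hat Q^{z,w}_t$ is genuinely not a scalar multiple of $\bQ_t[p^{z,w}]$ and this denominator cannot be absorbed.

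For the diagonal I would use the column-sum identity. Fix a column $x$ with exactly one unmasked coordinate $i$ of value $y_i$; the only nonzero off-diagonal entries in that column are second-family entries obtained by unmasking the other coordinate to some $v\in\{1,\dots,N-1\}$, and summing $\mc{Z}\,p^{z,w}(\cdot)\big/\big(p(y_i)^{-w}p(y_i|z)^{1+w}\big)$ over $v$ marginalizes out the other coordinate, producing $\mc{Z}\,p^{z,w}(y_i)\big/\big(p(y_i)^{-w}p(y_i|z)^{1+w}\big)$, the third case. For the column $x=(N,N)$, the nonzero off-diagonal entries are first-family entries obtained by unmasking some coordinate $l\in\{1,2\}$ to some $u_l\in\{1,\dots,N-1\}$, and summing $p(u_l)^{-w}p(u_l|z)^{1+w}$ over both coordinates gives the fourth case.

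No step is deep; the entire proof is bookkeeping, and the one place that needs care is the specialization of the unguided rate to $D=2$, that is, keeping straight which marginal is the numerator and which the denominator for each of the two transition types, and then carrying the two distinct exponents $-w,\,1+w$ through the product without inadvertently recombining $p(y_i)$ with $p(y_i|z)$. A minor technical caveat, which I would dispatch with the standing support assumptions, is that $\bQ_t(y,x)^{-w}\bQ^z_t(y,x)^{1+w}$ is only ever evaluated on pairs where both rates are strictly positive (the support of $p(\cdot|z)$ is contained in that of $p$ since $p$ is a mixture that includes $p(\cdot|z)$), so no $0^{-w}$ ambiguity occurs.
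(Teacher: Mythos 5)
Your proposal is correct and follows essentially the same route as the paper's proof: substitute the explicit forward marginals from Proposition~\ref{prop:forward density evolution} into \eqref{eq:reverse diffusion matrix} and \eqref{eq:conditional reverse diffusion matrix} to get $\bQ_t[p]$ and $\bQ_t[p(\cdot|z)]$ on the two admissible transition families, combine via the definition \eqref{eq:guided reverse diffusion matrix} so the common prefactor $\frac{e^{-t}}{1-e^{-t}}$ collapses, identify $p(y)^{-w}p(y|z)^{1+w}=\mc{Z}\,p^{z,w}(y)$, and recover the diagonal from the column-sum identity. Your added remarks---that the denominator $p(y_i)^{-w}p(y_i|z)^{1+w}$ is not a marginal of $p^{z,w}$ (hence $\hat Q^{z,w}_t\neq C\,\bQ_t[p^{z,w}]$), and the support caveat on the mixture---are accurate supplements to what the paper states before and around the proposition but do not change the argument.
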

 As a consequence, the sampling dynamics in \eqref{eq:guided reverse process} no longer generates samples from the tilted distribution $p^{z,w}$. In fact, the convergence behavior for \eqref{eq:guided reverse process} is much more complicated than the one for $D=1$. In the rest of this section, we first explicitly express the solution of \eqref{eq:guided reverse process} for $D=2$. Then we interpret the results from the perspectives of sampled distributions and the convergence rates, highlighting the differences to those for $D=1$. All the proofs are included in Appendix \ref{append:2d density}. 

Before we state the main results, we define the following quantities: for all $x_1,x_2=1,2,\cdots, N-1$,
\begin{align}\label{eq:coefficient c d definition}
    & c_{x_1} \coloneqq  \frac{\sum_l p(x_1,l)^{-w}p(x_1,l|z)^{1+w}}{p(x_1)^{-w} p(x_1|z)^{1+w}}, \quad d_{x_2}=\frac{\sum_l p(l,x_2)^{-w}p(l,x_2|z)^{1+w}}{p(x_2)^{-w}p(x_2|z)^{1+w}},  \\
    & c_N \coloneqq \frac{\sum_{l_1,l_2} p(l_1,l_2)^{-w}p(l_1,l_2|z)^{1+w}}{\sum_{l_1}p(l_1)^{-w}p(l_1|z)^{1+w}} ,\quad d_N\coloneqq \frac{\sum_{l_1,l_2} p(l_1,l_2)^{-w}p(l_1,l_2|z)^{1+w}}{\sum_{l_2}p(l_2)^{-w}p(l_2|z)^{1+w}}. 
\end{align}
It is easy to see that for all $l=1,2\cdots, N$, $c_l\ge 1$ and $d_l\ge 1$, and $c_l=d_l=1$ when $D=1$. When $D=2$, $\{c_x,d_x\}_{x=1}^N$ encodes the information of $p, p(\cdot|z)$ and the guidance $w$ into the sampling dynamics \eqref{eq:guided reverse process}, and hence affects the sampled distributions and the convergence rates.   
\begin{theorem}\label{thm:2d reverse} In the discrete guided diffusion models on $S$, if $D=2$ and $q_t^{z,w}$ satisfies the sampling dynamics \eqref{eq:guided reverse process}, we have that for all $0\le t\le T$,
   {\small \begin{align}\label{eq:2d reverse density}
        q_t^{z,w}(x) =\left\{
    \begin{aligned}
        & \alpha_t(x) \mc{Z} p^{z,w}(x) , \quad & x_1,x_2\neq N, \\
        & \alpha_t(x) \mc{Z} p^{z,w}(x_i), \quad & x_i=N\neq x_{\setminus i},\\
        & \alpha_t(x)  ,\quad & x_1=x_2=N.
    \end{aligned}
    \right.
    \end{align}}
In \eqref{eq:2d reverse density}, $\mc{Z}= \mc{Z}^{z,w}=\sum_{x\in S} p(x)^{-w}p(x|z)^{1+w}$ and
\begin{align*}
  {\small  \alpha_t(x) = \left\{
    \begin{aligned}
        & -\frac{1}{c_{x_1}(\lambda_{NN}^{z,w}+c_{x_1})}\big( 1-(\frac{1-e^{-(T-t)}}{1-e^{-T}})^{c_{x_1}} \big)-\frac{1}{d_{x_2}(\lambda_{NN}^{z,w}+d_{x_2})}\big( 1-(\frac{1-e^{-(T-t)}}{1-e^{-T}})^{d_{x_2}} \big)  \\
    & \quad \quad-\frac{1}{\lambda_{NN}^{z,w}}\big( \frac{1}{\lambda_{NN}^{z,w}+c_{x_1}} + \frac{1}{\lambda_{NN}^{z,w}+d_{x_2}} \big) \big( 1-(\frac{1-e^{-(T-t)}}{1-e^{-T}})^{-\lambda_{NN}^{z,w}} \big) ,\quad\quad x_1,x_2\neq N\\
    & -\frac{1}{c_{x_1}(\lambda_{NN}^{z,w}+c_{x_1})} \big( (\frac{1-e^{-(T-t)}}{1-e^{-T}})^{c_{x_1}} -(\frac{1-e^{-(T-t)}}{1-e^{-T}})^{-\lambda_{NN}^{z,w}} \big),\quad\quad\quad\  x_1\neq N=x_2\\
    & -\frac{1}{d_{x_2}(\lambda_{NN}^{z,w}+d_{x_2})}  \big( (\frac{1-e^{-(T-t)}}{1-e^{-T}})^{d_{x_2}}- (\frac{1-e^{-(T-t)}}{1-e^{-T}})^{-\lambda_{NN}^{z,w}} \big),\quad\quad\quad\   x_2\neq N=x_1\\
    &(\frac{1-e^{-(T-t)}}{1-e^{-T}})^{-\lambda_{NN}^{z,w}} ,\qquad\qquad\qquad\qquad\qquad\qquad\qquad\qquad\qquad\qquad\quad  x_1=x_2=N,
    \end{aligned}
    \right.}
\end{align*}
where $ \lambda_{NN}^{z,w}\coloneqq -\mc{Z}(1/c_N+1/d_N)$.
\end{theorem}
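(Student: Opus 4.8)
The plan is to turn the guided reverse ODE~\eqref{eq:guided reverse process} into a constant-coefficient linear ODE by a time change, and then integrate it by exploiting the layered structure of the reachable state space. By Proposition~\ref{prop:reverse rate matrix 2d} we have $\hat{Q}_t^{z,w}=\tfrac{e^{-t}}{1-e^{-t}}\hat{Q}^{z,w}$ with $\hat{Q}^{z,w}$ independent of $t$, so the driving matrix in~\eqref{eq:guided reverse process} equals $\tfrac{e^{-(T-t)}}{1-e^{-(T-t)}}\hat{Q}^{z,w}$, a scalar function of $t$ times a fixed matrix. Setting $\tau(t)\coloneqq\int_0^t\tfrac{e^{-(T-s)}}{1-e^{-(T-s)}}\,\dee s=\log\tfrac{1-e^{-T}}{1-e^{-(T-t)}}$, which increases from $0$ to $+\infty$ on $[0,T]$ and satisfies $e^{-\tau(t)}=\tfrac{1-e^{-(T-t)}}{1-e^{-T}}$, the solution is $q_t^{z,w}=\exp(\tau(t)\hat{Q}^{z,w})\delta_{(N,N)}$; since every exponential $e^{\mu\tau(t)}$ appearing in $\exp(\tau(t)\hat{Q}^{z,w})$ becomes $\big(\tfrac{1-e^{-(T-t)}}{1-e^{-T}}\big)^{-\mu}$, the answer will have exactly the stated shape, with $\mu\in\{-c_{x_1},-d_{x_2},\lambda_{NN}^{z,w}\}$.

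Next I would use that the absorbing reverse chain can only unmask coordinates, so starting from $(N,N)$ the reachable states split into three layers — $\{(N,N)\}$, then $\{(a,N):a<N\}\cup\{(N,b):b<N\}$, then $\{(a,b):a,b<N\}$ — and $\hat{Q}^{z,w}$ only transports mass from one layer to the next. Reading Proposition~\ref{prop:reverse rate matrix 2d} accordingly, the relevant rates are: the total exit rate out of $(N,N)$ is $-\lambda_{NN}^{z,w}=\mc{Z}(1/c_N+1/d_N)$; the exit rate out of $(a,N)$ (resp.\ $(N,b)$) is $c_a$ (resp.\ $d_b$), which uses the identity $\mc{Z}\,p^{z,w}(a)=\sum_l p(a,l)^{-w}p(a,l|z)^{1+w}$ that marginalizes the tilted joint law and is precisely the numerator in the definition~\eqref{eq:coefficient c d definition} of $c_a$; the entering rate into $(a,N)$ from $(N,N)$ is $p(a)^{-w}p(a|z)^{1+w}$; and the entering rates into $(a,b)$ from $(a,N)$ and $(N,b)$ are $\mc{Z}\,p^{z,w}(a,b)/(p(a)^{-w}p(a|z)^{1+w})$ and $\mc{Z}\,p^{z,w}(a,b)/(p(b)^{-w}p(b|z)^{1+w})$. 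A quick check that each column of $\hat{Q}^{z,w}$ sums to zero would confirm this bookkeeping.

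Then I would integrate the Kolmogorov forward equations layer by layer, each of which is a scalar first-order linear ODE with zero initial value driven by the previous layer. Layer $0$ gives $q_t^{z,w}((N,N))=e^{\lambda_{NN}^{z,w}\tau}$, i.e.\ the claimed $\alpha_t((N,N))$. Layer $1$ gives $q_t^{z,w}((a,N))=\tfrac{p(a)^{-w}p(a|z)^{1+w}}{\lambda_{NN}^{z,w}+c_a}\big(e^{\lambda_{NN}^{z,w}\tau}-e^{-c_a\tau}\big)$, and rewriting $p(a)^{-w}p(a|z)^{1+w}=\mc{Z}\,p^{z,w}(a)/c_a$ puts it in the stated form $\alpha_t\cdot\mc{Z}\,p^{z,w}(\cdot)$, and symmetrically for $(N,b)$. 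For layer $2$, substituting the layer-$1$ solutions into the equation for $q_t^{z,w}((a,b))$ makes its right-hand side equal to $\mc{Z}\,p^{z,w}(a,b)$ times a fixed linear combination of $e^{\lambda_{NN}^{z,w}\tau}$, $e^{-c_a\tau}$, $e^{-d_b\tau}$; integrating from $0$ to $\tau$ (the denominators $\lambda_{NN}^{z,w}$, $c_a$, $d_b$ arising from the constants of integration) and collecting terms yields exactly the three-term expression for $\alpha_t(x)$ when $x_1,x_2\neq N$. I would finish with the consistency checks $\sum_x q_t^{z,w}(x)\equiv1$ for all $t$, and the $\tau\to\infty$ limit, which shows $q_T^{z,w}(x)\propto(1/c_{x_1}+1/d_{x_2})\,p^{z,w}(x)$ on the last layer and zero mass elsewhere — the reweighting that distinguishes $D=2$ from the $D=1$ result of Theorem~\ref{thm:1d reverse}.

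The main obstacle is organizational rather than conceptual: one must correctly match the five cases of $\hat{Q}^{z,w}$ in Proposition~\ref{prop:reverse rate matrix 2d} to the layer transitions — in particular keeping straight which coordinate is being unmasked in each case — and keep track of the identities that convert single-coordinate tilts $p(a)^{-w}p(a|z)^{1+w}$ into marginals of the tilted joint law through the coefficients $c_a,d_b$. The other delicate point is the term collection in the layer-$2$ integration, together with the degenerate parameter values at which $\lambda_{NN}^{z,w}+c_{x_1}$ or $\lambda_{NN}^{z,w}+d_{x_2}$ vanishes; these are handled by continuity, since the resulting formula is analytic in $c_{x_1},d_{x_2},\lambda_{NN}^{z,w}$.
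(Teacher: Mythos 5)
Your proposal is correct, and it takes a genuinely different route from the paper's proof. The paper represents $\hat{Q}^{z,w}$ in block upper-triangular form, performs a full eigenvalue decomposition $\hat{Q}^{z,w}=\hat X\hat D\hat X^{-1}$ by explicitly constructing all the eigenvectors (two cases, one for blocks $i<N$ and one for the $N$-th block), and then evaluates the matrix exponential $\exp\big(\int_0^t \hat Q^{z,w}_{T-s}\dee s\big)$ applied to $\delta_{(N,N)}$. You instead exploit the fact that the absorbing reverse chain only unmasks coordinates, so the state space is stratified into three layers with transitions only from one layer to the next, and you integrate the forward equation sequentially as a cascade of scalar first-order linear ODEs after the time change $\tau(t)=\log\tfrac{1-e^{-T}}{1-e^{-(T-t)}}$. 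Both yield identical formulas, and I verified that your bookkeeping of the rates in Proposition~\ref{prop:reverse rate matrix 2d} and the layer-by-layer integration reproduce the stated $\alpha_t(x)$ in all four cases (using $\mc{Z}\,p^{z,w}(a)=c_a\,p(a)^{-w}p(a|z)^{1+w}$ exactly as you note, and $-\lambda_{NN}^{z,w}=\sum_a p(a)^{-w}p(a|z)^{1+w}+\sum_b p(b)^{-w}p(b|z)^{1+w}$). What the paper's route buys is the full spectral structure of $\hat Q^{z,w}$ as a byproduct — convenient for stating Proposition~\ref{prop:convergence rate 2d} about $\TV$ decay rates, since the eigenvalues $-c_i,-d_j,\lambda_{NN}^{z,w}$ appear directly. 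What your route buys is economy: no eigenvector construction, and the structure of the answer (a finite linear combination of $e^{-c_a\tau},e^{-d_b\tau},e^{\lambda_{NN}^{z,w}\tau}$) is transparent from the outset, as is the reason the sampled distribution deviates from $p^{z,w}$ in $D=2$ (the layer-2 forcing averages the two one-coordinate unmaskings, producing the $1/c_{x_1}+1/d_{x_2}$ reweighting). Your cautions about degenerate values of $\lambda_{NN}^{z,w}+c_{x_1}$ and about matching the coordinate being unmasked are the right things to watch for and are correctly resolved by continuity.
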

\begin{remark}\label{rem:sampled distribution 2d}(Sampled distribution) Unlike the $1$D setting, the sampled distribution in $2$D is not the tilted distribution $p^{z,w}$. According to Theorem \ref{thm:2d reverse}, the sampled distribution $q_T^{z,w}$ is given by
    \begin{align}\label{eq:sampled distribution 2d}
        q_T^{z,w}(x) = \frac{1/c_{x_1}+1/d_{x_2}}{1/c_N+1/d_N} p^{z,w}(x),\quad \forall x\in \{1,2,\cdots, N-1\}^2.
    \end{align}
\end{remark}
\begin{proposition}\label{prop:convergence rate 2d} Under the assumptions in Theorem \ref{thm:2d reverse}, we have that for all $0\le t\le T$ and $w\gg 1$, $-\ln(\TV(q_t^{z,w}, q_T^{z,w}))=\exp(\Theta(w)) \ln \big(\tfrac{1-e^{-T}}{1-e^{-(T-t)}}\big) $.
\end{proposition}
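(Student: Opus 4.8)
I would obtain a closed form for $\TV(q_t^{z,w},q_T^{z,w})$ from Theorem~\ref{thm:2d reverse} and then track how its exponential rate scales in $w$. Throughout write $s=s(t):=\tfrac{1-e^{-(T-t)}}{1-e^{-T}}\in[0,1]$, which decreases from $s(0)=1$ to $s(T)=0$ and satisfies $\ln\big(\tfrac{1-e^{-T}}{1-e^{-(T-t)}}\big)=\ln(1/s)$; the target is $-\ln\TV(q_t^{z,w},q_T^{z,w})=\exp(\Theta(w))\,\ln(1/s)$. The first step is to show that mass only ever flows from masked to unmasked states along \eqref{eq:guided reverse process}, i.e.\ $q_t^{z,w}(x)\le q_T^{z,w}(x)$ for every unmasked $x$ and all $t$. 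For $x=(x_1,x_2)$ with $x_1,x_2\neq N$ this amounts to $\alpha_t(x)\le\alpha_T(x)$, which by the explicit $\alpha_t$ in Theorem~\ref{thm:2d reverse} reduces to $A_{x_1}s^{c_{x_1}}+B_{x_2}s^{d_{x_2}}+C_x\,s^{-\lambda_{NN}^{z,w}}\le 0$ for the corresponding coefficients $A_{x_1}=\tfrac{1}{c_{x_1}(\lambda_{NN}^{z,w}+c_{x_1})}$, $B_{x_2}=\tfrac{1}{d_{x_2}(\lambda_{NN}^{z,w}+d_{x_2})}$, $C_x=\tfrac{1}{\lambda_{NN}^{z,w}}\big(\tfrac{1}{\lambda_{NN}^{z,w}+c_{x_1}}+\tfrac{1}{\lambda_{NN}^{z,w}+d_{x_2}}\big)$; this follows by an elementary case split on the signs of $\lambda_{NN}^{z,w}+c_{x_1},\lambda_{NN}^{z,w}+d_{x_2}$, splitting $C_x$ into its $c_{x_1}$‑part and $d_{x_2}$‑part, pairing $A_{x_1}s^{c_{x_1}}$ with the former (and $B_{x_2}s^{d_{x_2}}$ with the latter), and using that $\gamma\mapsto s^{\gamma}/\gamma$ is decreasing on $(0,\infty)$ for $s\in(0,1)$. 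Since $q_T^{z,w}$ puts no mass on any masked state (set $s=0$ in Theorem~\ref{thm:2d reverse}, cf.\ Remark~\ref{rem:sampled distribution 2d}), this yields
\begin{align*}
\TV(q_t^{z,w},q_T^{z,w})=\sum_{x\text{ masked}}q_t^{z,w}(x)&=s^{-\lambda_{NN}^{z,w}}+\sum_{x_1=1}^{N-1}\frac{\mc Z\,p^{z,w}(x_1)}{c_{x_1}\,|\lambda_{NN}^{z,w}+c_{x_1}|}\big|s^{c_{x_1}}-s^{-\lambda_{NN}^{z,w}}\big|\\
&\quad+\sum_{x_2=1}^{N-1}\frac{\mc Z\,p^{z,w}(x_2)}{d_{x_2}\,|\lambda_{NN}^{z,w}+d_{x_2}|}\big|s^{d_{x_2}}-s^{-\lambda_{NN}^{z,w}}\big|,
\end{align*}
a nonnegative combination of the powers $\{s^{c_{x_1}}\}_{x_1},\{s^{d_{x_2}}\}_{x_2},s^{-\lambda_{NN}^{z,w}}$.

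Next I would show that all the exponents appearing are $\exp(\Theta(w))$. From \eqref{eq:coefficient c d definition} one has the chain‑rule identities $\mc Z=\sum_{x_1}p(x_1)^{-w}p(x_1|z)^{1+w}c_{x_1}=\sum_{x_2}p(x_2)^{-w}p(x_2|z)^{1+w}d_{x_2}$ and $-\lambda_{NN}^{z,w}=\sum_{x_1}p(x_1)^{-w}p(x_1|z)^{1+w}+\sum_{x_2}p(x_2)^{-w}p(x_2|z)^{1+w}$. Combined with the $\alpha$‑divergence asymptotics of Remark~\ref{rem:1d convergence rate}(c) applied coordinate/row‑wise under Assumption~\ref{assup:full distribution}, these give $1\le c_{x_1},d_{x_2}\le e^{Cw}$ with $c_{x_1}=\Theta\big((\max_{x_2}\tfrac{p(x_2|x_1,z)}{p(x_2|x_1)})^{w}\big)$ up to polynomial‑in‑$w$ factors, and likewise for $d_{x_2}$ and for $-\lambda_{NN}^{z,w}$ in terms of the marginal likelihood ratios. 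Hence the effective exponent
\[
\gamma^*\ :=\ \min\Big\{\ \min_{x_1}c_{x_1}\,,\ \min_{x_2}d_{x_2}\,,\ -\lambda_{NN}^{z,w}\ \Big\}
\]
(minima over $x_1,x_2$ in the support of the marginals of $p^{z,w}$, the other states contributing nothing to the sum above) satisfies $\gamma^*=\exp(\Theta(w))$.

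It then remains to extract the dominant term. In the formula from Step~1 every monomial present has exponent $\ge\gamma^*$; factoring out $s^{\gamma^*}$ and collapsing the coefficient sums with the chain‑rule identities (e.g.\ $\sum_{x_2}\mc Z p^{z,w}(x_1,x_2)=c_{x_1}p(x_1)^{-w}p(x_1|z)^{1+w}$, so the individually $e^{\Theta(w)}$‑sized coefficients recombine into the bounded marginal normalizers), the remaining factor is $\Theta(1)$ whenever $s$ is bounded away from $1$, while the $(N,N)$‑term alone already contributes $s^{-\lambda_{NN}^{z,w}}$. Therefore $\TV(q_t^{z,w},q_T^{z,w})=\Theta\big(s^{\gamma^*}\big)$, and taking logarithms, $-\ln\TV(q_t^{z,w},q_T^{z,w})=\gamma^*\ln(1/s)+O(1)=\exp(\Theta(w))\,\ln(1/s)$, which is the claim.

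The step I expect to be the main obstacle is making this uniform in $t$, in two respects. First, one must decide which of $\min_{x_1}c_{x_1},\min_{x_2}d_{x_2},-\lambda_{NN}^{z,w}$ realizes $\gamma^*$ and verify that the coefficient of the corresponding monomial in the Step~1 formula stays bounded below uniformly in $w$; this is exactly where the chain‑rule identities do the real work, by forcing the large coefficients to recombine. Second, the corner $t\to 0$ (i.e.\ $s\to 1$) is genuinely delicate: there every $|s^{c_{x_1}}-s^{-\lambda_{NN}^{z,w}}|$ vanishes and $\TV\to 1$, so the leading behavior of $-\ln\TV$ must be read off from a Taylor expansion around $s=1$ whose first‑order part cancels, and controlling the resulting residue (hence the behavior on all of $[0,T]$) is the bulk of the analysis.
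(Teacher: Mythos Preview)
Your proposal is correct and reaches the result by a route that differs from the paper's mainly in packaging. The paper computes $\TV=\tfrac12\sum_x|q_t^{z,w}(x)-q_T^{z,w}(x)|$ directly, splitting into four regions (fully unmasked, half-masked in each coordinate, fully masked) and then handles each absolute value via the mean value theorem applied to $h_1(y)=y^{-1}s^y$ (for the fully unmasked region) and $h_2(y)=s^y$ (for the half-masked regions); the resulting intermediate points $c_{x_1}^*,d_{x_2}^*,c_{x_1}',d_{x_2}'$ lie between $c_{x_1}$ (resp.\ $d_{x_2}$) and $-\lambda_{NN}^{z,w}$, so inherit the $\exp(\Theta(w))$ scaling. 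You instead first establish monotonicity of the unmasked mass to collapse $\TV$ to the total masked mass, which removes the fully-unmasked region (the most intricate $\alpha_t$) from the accounting altogether. Your pairing/sign argument for monotonicity is in fact precisely the difference-quotient bound for $h_1$ that the paper invokes as MVT, so the underlying analytic ingredient is the same. After the reduction, both proofs finish identically via the R\'enyi identity $c_l=\exp\!\big(w\,\mc D_{1+w}(p(\cdot\mid x_1=l,z)\mmid p(\cdot\mid x_1=l))\big)$ and its marginal analogues to show all exponents are $\exp(\Theta(w))$.

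Two remarks on your stated obstacles. Your worry about $t\to 0$ is unnecessary: there $s=1$ so $\ln(1/s)=0$, while $\TV=1$ so $-\ln\TV=0$; the statement holds trivially at the endpoint, and for $t\in(0,T)$ the $\Theta(w)$ is an asymptotic in $w$ at fixed $t$ rather than a uniform-in-$t$ claim (the paper does not argue uniformity either). Your concern about the monomial coefficients not being $\Theta(1)$ in $w$ is legitimate but does not threaten the $\Theta$-level conclusion: there are only $O(N)$ terms with coefficients bounded by $\exp(O(w))$, so after taking logs the coefficient contribution is an additive $O(w)$, which is dominated by the leading $\exp(\Theta(w))\ln(1/s)$. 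The chain-rule recombination you outline is elegant but stronger than what is needed here; the paper does not track the coefficients beyond this crude level either.
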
 
Similar to the $1$D setting, the \textbf{exponential} decay rate of $\TV$ in $2$D is also \textbf{exponential} in $w$ for $w\gg 1$. As we observed in Section \ref{sec:experiment}, these double-exponential dependency on $w$ may cause numerical scheme less stable because the reverse sampling dynamics has a significant sharper transition in time as $w$ increases. 
\begin{definition}\label{def:marginal support} For any distribution $\mu$ on $S=\{1,2,\cdots, N\}^D$ with support $\mc{X}$, its $d$-marginal support, denoted as $\mc{X}_d$, is defined as $\mc{X}_d\coloneqq \{ x_d : x\in \mc{X} \}$.  
\end{definition}
\begin{proposition}\label{prop:2d sampled distribution property} Assume the full distribution satisfies Assumption \ref{assup:full distribution}, depending on the marginal supports of $p(\cdot|z_1)$, the sampled distribution $q_T^{z_1,w}$ admits the following different behaviors:
\begin{itemize}
    \item [(1)] If $\mc{X}_{1,d}\cap \mc{X}_{k,d} =\emptyset$ for all $d=1,2$ and $k=2,\cdots M$, we have $q_T^{z_1,w}=p(\cdot|z_1)$.
    \item [(2)] If $S_{1,d}\coloneqq \mc{X}_{1,d} \cap \big( \cup_{k=2}^M \mc{X}_{k,d} \big)\neq \emptyset$ for some $d=1,2$. Let $S_{1}\coloneqq \mc{X}_{1} \cap \big( \cup_{k=2}^M \mc{X}_{k} \big)$, $I_1\coloneqq \{ k: \mc{X}_k\cap \mc{X}_1\neq \emptyset \}$ and $I_{1,d}\coloneqq \{ k: \mc{X}_{k,d}\cap \mc{X}_{1,d}\neq \emptyset \}$. We have
    {\small
    \begin{align*}
        q_T^{z_1,w}(x)\propto \left\{
        \begin{aligned}
            &  2 p(x|z_1), \qquad\qquad\qquad\qquad\qquad\qquad   x\in \mc{X}_1, x_1\in \mc{X}_{1,1}\setminus S_{1,1}, x_2\in \mc{X}_{1,2}\setminus S_{1,2} \\
            & \big( (\frac{a_1 p(x_i|z_1)}{\sum_{k\in I_{1,i}} a_k p(x_i|z_k) })^{w} +1 \big) p(x|z_1), \qquad  x\in \mc{X}_1, x_i\in S_{1,i}, x_{\setminus i}\in \mc{X}_{1,\setminus i}\setminus S_{1,\setminus i} \\
            & \big( \sum_{i=1}^2(\frac{a_1 p(x_i|z_1)}{\sum_{k\in I_{1,i}} a_k p(x_i|z_k) })^w\big) p(x|z_1), \qquad\qquad  x\in \mc{X}_1\setminus S_1, x_1\in S_{1,1}, x_2\in S_{1,2}  \\
             & \big( \sum_{i=1}^2(\frac{a_1 p(x_i|z_1)}{\sum_{k\in I_{1,i}} a_k p(x_i|z_k) } )^w\big) \big(\frac{a_1 p(x|z_1)}{\sum_{k\in I_{1}} a_k p(x|z_k)}\big)^wp(x|z_1), \qquad\qquad\qquad   x\in S_1  \\
            & 0 ,\qquad\qquad\qquad\qquad\qquad\qquad\qquad\qquad\qquad\qquad\qquad\qquad\qquad\qquad\quad \text{otherwise}.
        \end{aligned}
        \right.
    \end{align*}}
    As a consequence, as $w\to\infty$, $q_T^{z_1,w}\to q^{z_1,\infty}(\cdot|z_1)$ pointwisely. $q^{z_1,\infty}(\cdot|z_1)$ satisfies that $\mathrm{Supp}(q^{z_1,\infty}(\cdot|z_1))\subset \mc{X}_1\setminus S_1$.
\end{itemize}    
\end{proposition}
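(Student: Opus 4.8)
The plan is to read $q_T^{z_1,w}$ off the explicit solution in Theorem~\ref{thm:2d reverse} and then substitute the mixture structure of Assumption~\ref{assup:full distribution}. Evaluating Theorem~\ref{thm:2d reverse} at $t=T$, the ratio $\tfrac{1-e^{-(T-t)}}{1-e^{-T}}$ equals $0$, and since $\lambda_{NN}^{z,w}=-\mc{Z}(1/c_N+1/d_N)<0$ every power of that ratio appearing in $\alpha_T(x)$ vanishes; in particular $\alpha_T(x)=0$ whenever a coordinate of $x$ equals $N$, so $q_T^{z_1,w}$ is supported inside $\{1,\dots,N-1\}^2$, and on that set Remark~\ref{rem:sampled distribution 2d} gives $q_T^{z_1,w}(x)=\tfrac{1/c_{x_1}+1/d_{x_2}}{1/c_N+1/d_N}\,p^{z_1,w}(x)$ with $p^{z_1,w}(x)\propto p(x)^{-w}p(x|z_1)^{1+w}$. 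Since $1/c_N+1/d_N$ does not depend on $x$, it is enough to identify $\big(1/c_{x_1}+1/d_{x_2}\big)\,p(x)^{-w}p(x|z_1)^{1+w}$ up to a global constant; its support is automatically $\mc{X}_1=\mathrm{Supp}(p(\cdot|z_1))$, which gives the ``$0$ otherwise'' clause.

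Next I would insert $p(y)=\sum_{k=1}^{M}a_kp(y|z_k)$, using that only the terms with $y\in\mc{X}_k$ contribute (so $k\in I_1$ for $y\in\mc{X}_1$, and $k\in I_{1,d}$ for a $d$-marginal value in $\mc{X}_{1,d}$). This yields, up to the common factor $a_1^{-w}$: $p^{z_1,w}(x)\propto p(x|z_1)$ for $x\in\mc{X}_1\setminus S_1$ and $p^{z_1,w}(x)\propto R(x)^{w}\,p(x|z_1)$ for $x\in S_1$, where $R(x):=\tfrac{a_1p(x|z_1)}{\sum_{k\in I_1}a_kp(x|z_k)}$; and for the coefficient $c_{x_1}=\tfrac{\sum_l p(x_1,l)^{-w}p(x_1,l|z_1)^{1+w}}{p(x_1)^{-w}p(x_1|z_1)^{1+w}}$, that $c_{x_1}=1$ when $x_1\notin S_{1,1}$ (the entire fiber over $x_1$ lies in class $1$ only, so each summand equals $a_1^{-w}p(x_1,l|z_1)$) while $1/c_{x_1}=r_1(x_1)^{w}$ with $r_1(x_1):=\tfrac{a_1p(x_1|z_1)}{\sum_{k\in I_{1,1}}a_kp(x_1|z_k)}$ when $x_1\in S_{1,1}$; symmetrically for $d_{x_2}$ and $r_2(x_2)$.

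Combining these, cancelling the global $a_1^{-w}$ into the normalization, and splitting into cases by whether $x_1\in S_{1,1}$, whether $x_2\in S_{1,2}$, and whether $x\in S_1$ (noting that $x_{\setminus i}\notin S_{1,\setminus i}$ already forces $x\notin S_1$) reproduces the four displayed branches: $1/c_{x_1}=1/d_{x_2}=1$ gives $2p(x|z_1)$; exactly one marginal in the overlap gives $\big(r_i(x_i)^{w}+1\big)p(x|z_1)$; both marginals in the overlap but $x\notin S_1$ gives $\big(r_1(x_1)^{w}+r_2(x_2)^{w}\big)p(x|z_1)$; and $x\in S_1$ multiplies this further by $R(x)^{w}$. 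For part~(1) one specializes: if $S_{1,d}=\emptyset$ for $d=1,2$ then $c_{x_1}=d_{x_2}=1$ on all of $\mc{X}_1$, also $c_N=d_N=1$ and $p^{z_1,w}=p(\cdot|z_1)$, so $q_T^{z_1,w}=\tfrac{2}{2}p(\cdot|z_1)=p(\cdot|z_1)$. Finally, for $w\to\infty$: on $S_1$ some $k\ge2$ contributes strictly, so $R(x)<1$, and likewise $r_i(x_i)<1$ whenever $x_i\in S_{1,i}$; hence all such $(\cdot)^{w}\to0$, branches (iii)--(iv) vanish relative to branches (i)--(ii) after renormalizing, and $q_T^{z_1,w}$ converges pointwise to a distribution supported on $\{x\in\mc{X}_1:x_1\notin S_{1,1}\text{ or }x_2\notin S_{1,2}\}\subseteq\mc{X}_1\setminus S_1$.

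I expect the main obstacle to be the reduction of $c_{x_1}$ (and $d_{x_2}$): its numerator is a sum over the second coordinate, so it must be handled fiber by fiber, keeping track of which joint states over $x_1$ sit in $S_1$ versus $\mc{X}_1\setminus S_1$ and reconciling the result with the marginal $p(x_1)$ in the denominator — this is where the overlap geometry genuinely enters, and it is also the place where one must be careful about which $w$-dominant terms are retained. Everything else (the $2{\times}2{\times}2$ case analysis, cancelling $a_1^{-w}$ and normalizing, and the dominant-balance argument as $w\to\infty$) is mechanical once that collapse is in hand.
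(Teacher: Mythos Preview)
Your plan is essentially the same as the paper's: both start from the formula $q_T^{z_1,w}(x)\propto(1/c_{x_1}+1/d_{x_2})\,p(x)^{-w}p(x|z_1)^{1+w}$ of Remark~\ref{rem:sampled distribution 2d}, insert the mixture form of Assumption~\ref{assup:full distribution}, and split into the four cases according to whether $x_d\in S_{1,d}$ and $x\in S_1$. The paper's proof is in fact terser than yours --- it simply asserts the factorization $\big[(a_1p(x_1|z_1)/p(x_1))^w+(a_1p(x_2|z_1)/p(x_2))^w\big](a_1p(x|z_1)/p(x))^w p(x|z_1)$ and then reads off the cases --- so your case analysis of $1/c_{x_1}$ and $1/d_{x_2}$ is precisely what is needed to justify that step, and you have correctly flagged the fiber-by-fiber reduction of the numerator $\sum_l p(x_1,l)^{-w}p(x_1,l|z_1)^{1+w}$ as the one genuinely non-mechanical point.
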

\begin{remark}[Effect of guidance on sampled distributions]\label{rem:guidance effect 2d} In $2$D, the sampled distribution $q_T^{z_1,w}$ adapts the conditional distribution $p(\cdot|z_1)$ by adjusting the weights of mass on different sets: (a) for $x$ in the set that is disjoint with other classes' supports and the marginals of the set are disjoint to the marginal supports of other classes, the sampled distribution $q_T^{z_1,w}$ admits the largest weight and preserve the local covariance in the set; (b) for $x$ in the set that is disjoint with other classes' supports and one of the set marginals overlaps with marginal supports of other classes, the sampled distribution $q_T^{z_1,w}$ put weights that depend on the conditional marginals as described in cases $2,3$ in Proposition \ref{prop:2d sampled distribution property}-(2); (c) for $x$ in the overlapping set with other classes' supports, the sampled distribution $q_T^{z_1,w}$ puts the smallest weights that also depends on the conditional marginals as shown in case $4$ in Proposition \ref{prop:2d sampled distribution property}-(2). 
\end{remark}
\begin{remark}[Discussion on $q^{z_1,\infty}(\cdot|z_1)$] \label{eq:asymp distribution w inf 2d} Under Assumption \ref{assup:full distribution}, $q^{z_1,\infty}(\cdot|z_1)$ has zero mass on overlapping region between class $z_1$ and other classes. In the non-overlapping region, explicit formula for $q^{z_1,\infty}(\cdot|z_1)$ can be derived from the expression of $q^{z_1,w}(\cdot|z_1)$ in Proposition \ref{prop:2d sampled distribution property}. However, it depends on the nullities of the regions in Proposition \ref{prop:2d sampled distribution property}-(2). We refer the readers to Appendix \ref{append:explicit expression asymp w 2d} for a detailed discussion.    
\end{remark}
\vspace{-.2in}
\section{Numerical Examples}\label{sec:experiment}
\vspace{-.1in}
In this section we present numerical results for better illustrating of our theoretical results. Unless otherwise stated we train our models using a small transformer and use Tau-leaping with $50$ steps as the numerical scheme and $10$K samples. Experiments are run on a NVIDIA GeForce RTX™ 4070 Laptop GPU.

\noindent\textbf{Experiments in $1$D.} We consider two setups for the one dimensional experiments: classes with and without intersections to demonstrate how guidance works differently in these cases. In Figure \ref{fig:1d-threeplots}-(a)(b), we can observe how adding a region of intersection dramatically affects the generation. We observe that even with score and discretization errors, our empirical sampled distribution closely approximate the tilted distribution in Proposition \ref{prop:1d sampled distribution property}. We also plot $\TV$ as a function of $w$ in Figure \ref{fig:1d-threeplots}-(c) for a fixed time $t=.5$, we observe that the empirical version closely follows our theory in Proposition \ref{prop:convergence rate} for small $w$. For large $w$, we observe a flat/increasing region in the plot. We conjecture that this is mainly due to the sharp transition of the reverse sampling dynamics for large $w$ (as shown in Remark \ref{rem:1d convergence rate}), which makes the Tau-leaping scheme less efficient and less stable. 

\begin{figure}[htbp]
    \centering
    \begin{subfigure}[t]{0.3\textwidth}    \includegraphics[width=\linewidth]{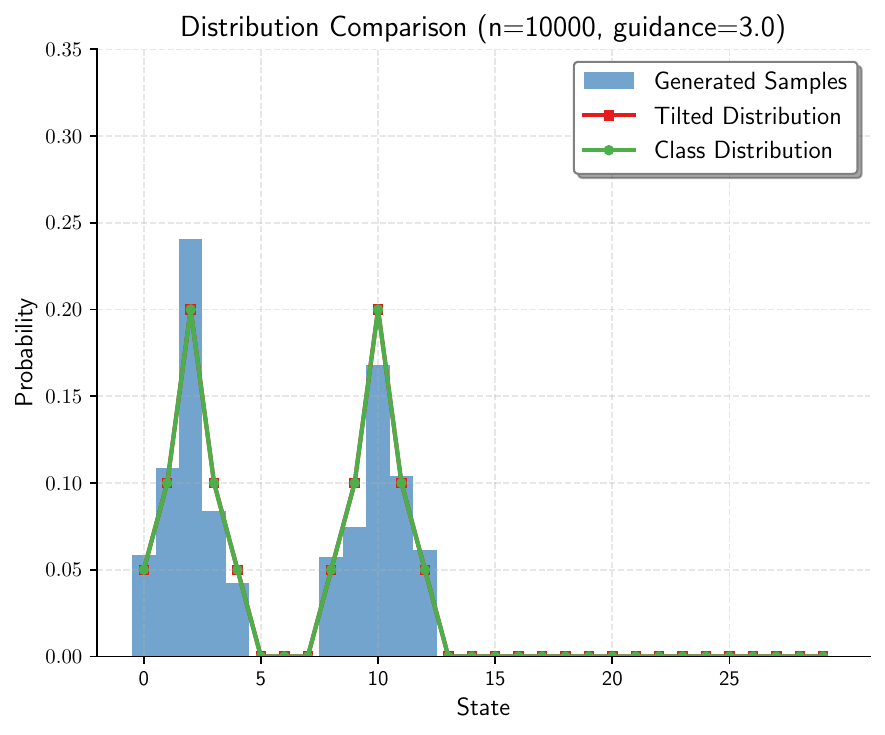}
        \caption{No effect in disjoint support.}
    \end{subfigure}
    \hfill
    \begin{subfigure}[t]{0.3\textwidth}
        \includegraphics[width=\linewidth]{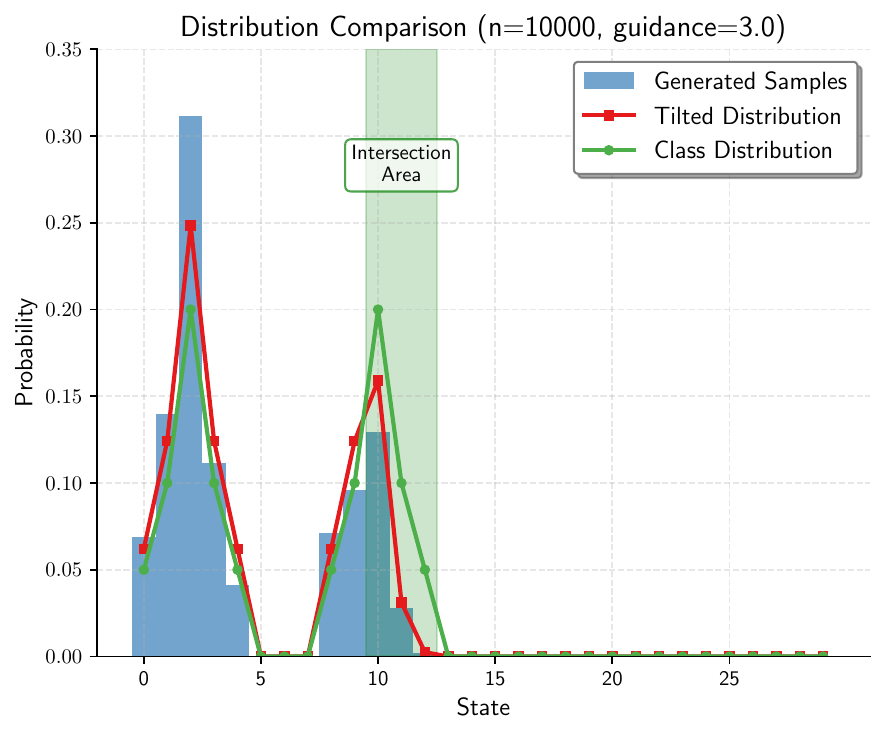}
        \caption{Mass is shifted away from the intersecting region.}
    \end{subfigure}
    \hfill
    \begin{subfigure}[t]{0.3\textwidth}
        \includegraphics[height=3.5cm,width=\linewidth]{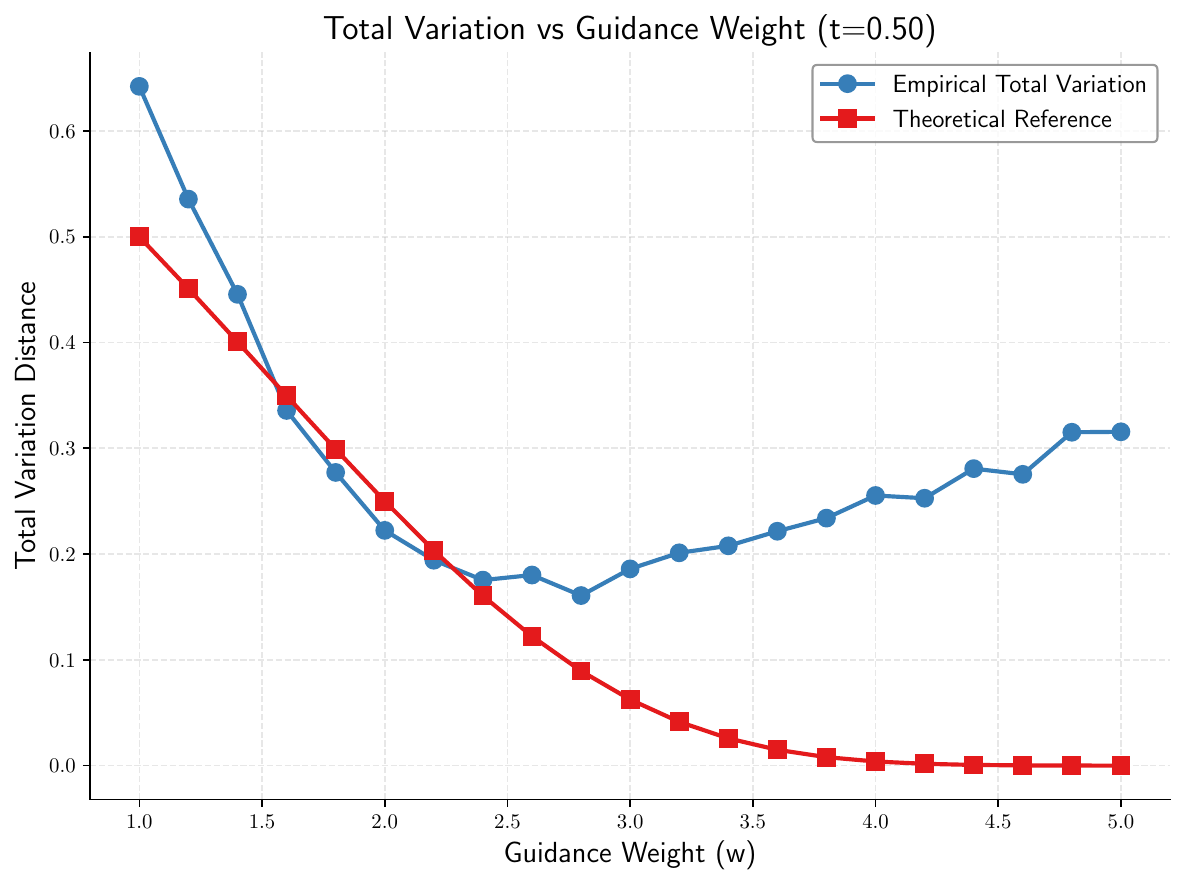}
        \caption{Total variation as a function of $w$.}
    \end{subfigure}
    
    \caption{
        In the first two plots, we illustrate how the effect of guidance differs depending on whether the class overlaps with the rest of the distribution: in disjoint regions, guidance has no effect, while in overlapping regions it redistributes mass. 
        The third plot shows how $\TV$ evolves with $w$, closely matching the result of Proposition~\ref{prop:convergence rate} for small $w$.
        \vspace{-0.2in}
    }
    \label{fig:1d-threeplots}
\end{figure}

\noindent\textbf{Experiments in $2$D.} We consider a setup analogous to the $1$D case. Our distributions contain two diamond shaped distributions. One of the diamonds remains in the corner, while another is in the center. We consider the case where the center mode is disjoint or intersecting the other classes. For more visualizations of the data distribution we refer the reader to the appendix. We observe in Figure \ref{fig:2d-threeplots}-(a)(b) that a similar phenomenon where the intersection vanishes under guidance occurs in $2$D. In Figure \ref{fig:2d-threeplots}-(c), the plots has a flat region for large $w$, which may be caused by the inefficiency of Tau-leaping in simulating sharp transition in the sampling dynamics. 

\begin{figure}[htbp]
    \centering
    
    \begin{subfigure}[t]{0.35\textwidth}
        \includegraphics[width=\linewidth]{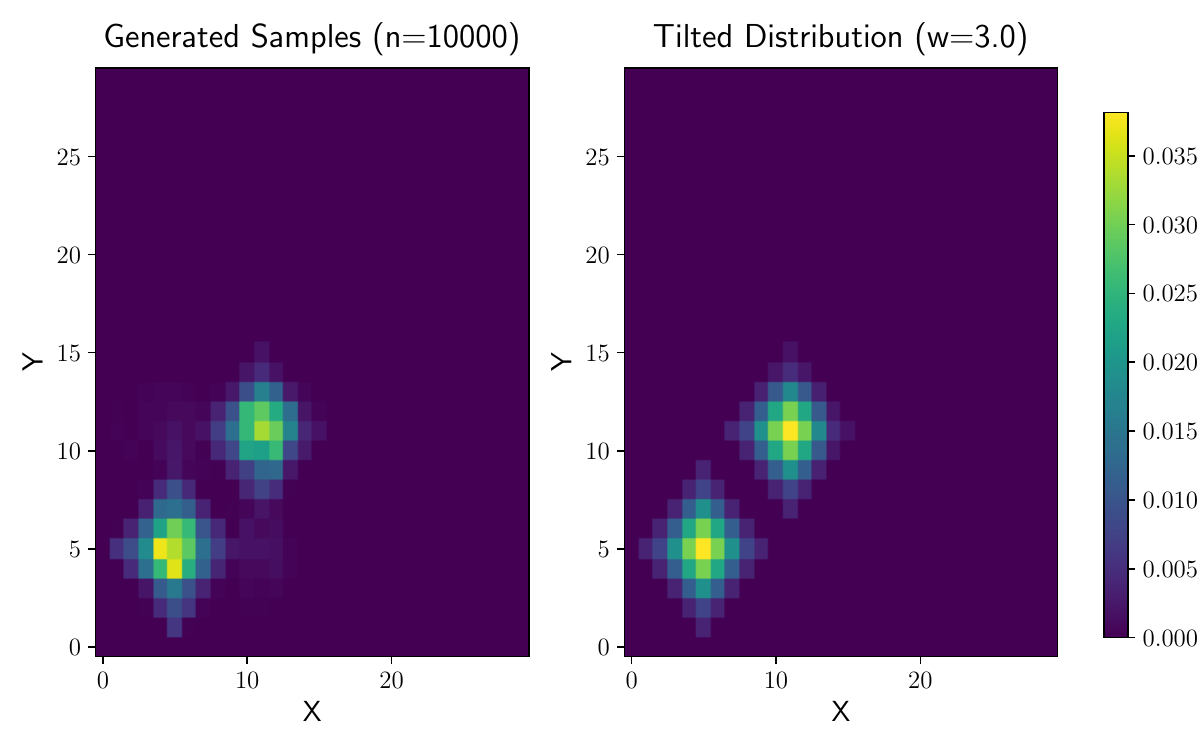}
        \caption{No effect in disjoint support.}
    \end{subfigure}
    \hfill
    \begin{subfigure}[t]{0.35\textwidth}
        \includegraphics[width=\linewidth]{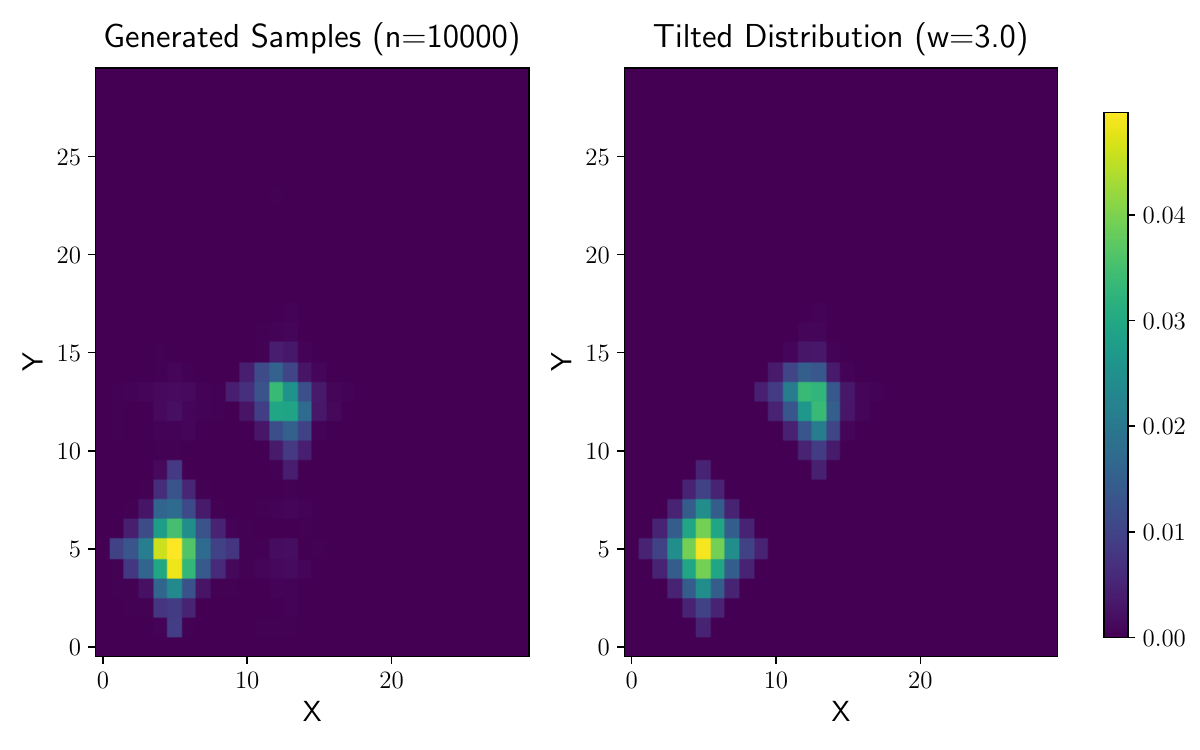}
        \caption{Mass is shifted away from intersecting region.}
    \end{subfigure}
    \hfill
    \begin{subfigure}[t]{0.26\textwidth}
        \includegraphics[width=\linewidth]{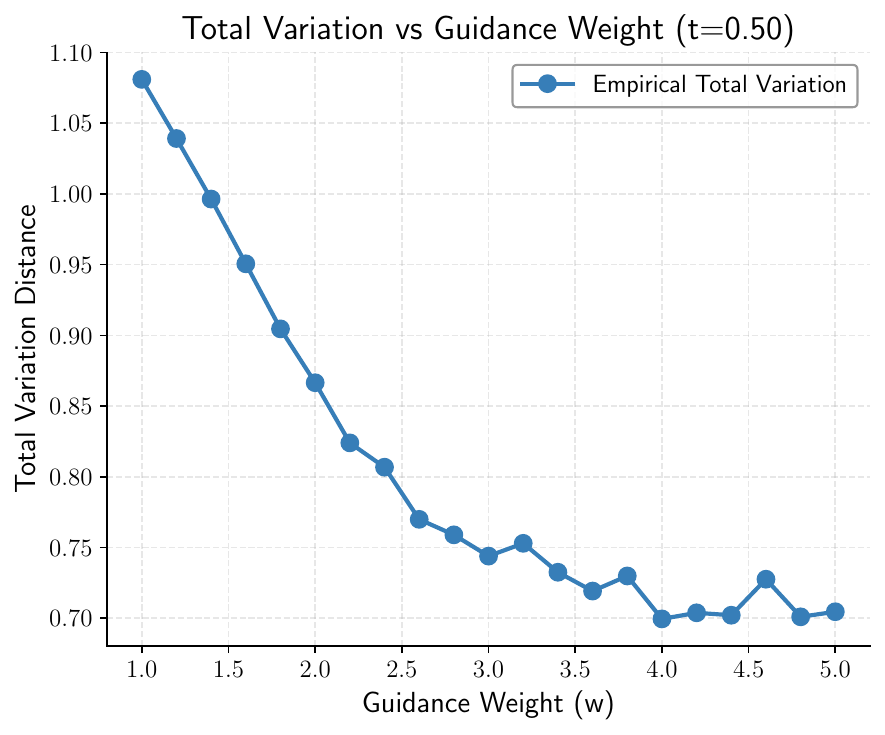}
        \caption{Total variation as a function of $w$.}
    \end{subfigure}
    
    \caption{
        In the first two plots, we illustrate how the effect of guidance differs depending on whether the class overlaps with the rest of the distribution: in disjoint regions, guidance has no effect, while in overlapping regions it redistributes mass. 
        The third plot shows how $\TV$ evolves with $w$.
        \vspace{-0.1in}}
    \label{fig:2d-threeplots}
\end{figure}

\noindent\textbf{Other Experiments.} We also conduct additional practical experiments: one in a higher-dimensional setting ($5$D), and another applying CFG-based discrete diffusion for conditional generation on MNIST. Due to space constraints, the full details of these experiments are provided in Appendix~\ref{append:numerical examples}.
\vspace{-0.1in}
\section{Conclusions}\label{sec:conclusion}
\vspace{-0.1in}
In this paper, we developed a rigorous framework to analyze the effect of CFG in masked discrete diffusion models, focusing on low-dimensional settings ($1$D and $2$D). We addressed how guidance reshapes the generated distribution and influences convergence of the reverse dynamics. Our results include explicit formulas showing that guidance amplifies class-specific regions and suppresses overlaps, with strength controlled by the guidance parameter $w$. We also prove that the $\TV$ distance decays double-exponentially in $w$ for large $w$. These findings offer theoretical insight into the role of guidance, bridging empirical practice and foundational analysis.

\noindent\textbf{Future Work.}
This work assumes idealized conditions with exact concrete scores and perfect numerical integration. A natural next step is to analyze how guidance interacts with the score error and discretization error. Key questions include whether guidance amplifies or mitigates these errors, how they propagate during sampling, and how their interaction with the guidance strength affects convergence and sample quality. Another important direction is to extend our analysis beyond low-dimensional settings to high-dimensional spaces, where geometry and multimodality pose additional challenges. Addressing these questions would deepen the theoretical foundations of guided discrete diffusion and improve its practical reliability.
\newpage
\bibliography{Ref}

\newpage
\appendix
\tableofcontents
\newpage
\section{Additional Related Work}\label{append:related work} 
\textbf{Diffusion Models in Continuous and Discrete Spaces.} Diffusion models were first developed for continuous data, where Gaussian noise is gradually added and then removed through a learned reverse process \citep{ho2020denoising, songscore}. While effective for images~\citep{dhariwal2021diffusion} and audio~\citep{kongdiffwave}, these models are less suited for inherently discrete data such as text or categorical variables. To address this, discrete diffusion models have been proposed, including D3PMs \citep{austin2021structured} and masked token diffusion models \citep{campbell2022continuous, shi2024simplified, ou2024your}, which model corruption through masking or categorical transitions.

Compared to the continuous setting, the discrete domain introduces additional challenges. The forward marginals are generally non-Gaussian and often lack analytical tractability. Score functions must be redefined, commonly through ratios of discrete logits~\citep{campbell2022continuous,lou2023discrete}. In masked discrete diffusion, the corruption process ensures analytic marginals and enables simpler likelihood training~\citep{shi2024simplified,ou2024your}, making it attractive for both theoretical analysis and practical applications.

\noindent\textbf{Diffusion Models with Guidance in Continuous Space.} To enhance controllability in generation, guidance techniques have become central to the success of continuous diffusion models. These methods condition the generative process on auxiliary information, such as class labels, text prompts, or segmentation maps. Two major paradigms have emerged: classifier guidance and classifier-free guidance (CFG). Classifier guidance, introduced by \citet{dhariwal2021diffusion}, conditions the sampling process by incorporating the gradient of a pretrained classifier into the reverse diffusion dynamics. This approach enables targeted generation (e.g., class-conditional image synthesis) without modifying the training of the diffusion model. However, it requires a separately trained, often large, and accurate classifier. To mitigate this dependency, classifier-free guidance was proposed by \citet{ho2021classifier}. In CFG, the diffusion model is trained jointly on conditional and unconditional data, allowing the sampling process to interpolate between guided and unguided generations by scaling the conditional score.

Recent theoretical studies have begun to analyze how guidance alters the reverse-time dynamics of diffusion models in continuous spaces~ \citep{bradley2024classifier,wu2024theoretical,chidambaram2024does}. These works focus on simplified settings. Assuming the conditional likelihood (hence the tilted distribution) Gaussian, \citet{bradley2024classifier} proved that the probability flow ODE with guidance does not sample from the correct tilted distribution, and CFG is equivalent to a special predictor-corrector scheme. \citet{wu2024theoretical} quantified the effect of guidance in Gaussian mixture models: larger guidance always reduces the differential entropy, hence generating more homogeneous samples, and larger guidance always increases the classification confidence of the sampled class. \citet{chidambaram2024does} analyzed the dynamics of the guided probability flow ODE for $1$D mixture models. Their results reflect that the guided ODE leverages the geometric information about the data distribution even if such information is absent in the classifier being used for guidance. These theoretical findings provide foundational understanding of the mechanisms behind guidance, though their applicability to high-dimensional or real-world scenarios remains limited.

\noindent\textbf{Diffusion Models with Guidance in Discrete Space.} In controllable generation for discrete data, one way was to apply guidance with continuous embedding of the discrete data~\citep{li2022diffusion,han2022ssd,lovelace2023latent,stark2024dirichlet,guo2024plug}. It was recently proposed by \citet{nisonoff2024unlocking} and \citet{sahoo2024simple} to apply guidance directly to discrete diffusion models. \citet{nisonoff2024unlocking} proposed to apply guidance on the reverse transition rate matrices while \citet{sahoo2024simple} proposed to apply guidance on the transition kernels of the reverse process. While these two formulations could lead to different sampled distributions, our paper studies the one introduced in \citet{nisonoff2024unlocking}, and leave the one in \citet{sahoo2024simple} and their comparison as a future work.  

To the best of our knowledge, this work provides the first theoretical analysis of how guidance influences the performance of discrete diffusion models. Comparing to existing studies on continuous diffusion models, our work is most closely related to that of \citet{chidambaram2024does}, as both analyze the sampling dynamics in low-dimensional settings. Leveraging the tractability of masked discrete diffusion, we derive explicit solutions for the reverse dynamics in 1D and 2D. This allows us to address questions discussed in \citet{bradley2024classifier} and \citet{wu2024theoretical}, but in the discrete domain. Specifically, we show that in $1$D, the guided sampling distribution matches the tilted distribution exactly, with discrepancies emerging only in $2$D and higher. Furthermore, we demonstrate that sample diversity decreases with increasing guidance strength, as the probability mass in overlapping regions vanishes. Notably, the total variation distance between sampled distribution and intermediate distribution along the reverse dynamics decays double-exponentially with guidance strength, potentially leading to numerical instability at large guidance values—a phenomenon also observed in \citet{chidambaram2024does} in the continuous case.

\section{Properties of Masked Discrete Diffusion Models without Guidance}\label{append:forward process}
\begin{lemma}[Diagonalization of $Q$]\label{lem:diag of Q} $Q= X \Lambda X^{-1}$ with $\Lambda= \mathrm{Diag}(-1,\cdots, -1,0)$ and  
\begin{align*}
    X = X^{-1} = \begin{pmatrix}
        -1 & -1  & \cdots & -1 & -1 & 0 \\
        1 & 0 &  \cdots & 0 & 0 & 0 \\
        0 & 1 & \cdots & 0 & 0 & 0 \\
        \vdots & \vdots & \ddots & \vdots & \vdots & \vdots  \\
        0 & 0 & \cdots & 1 & 0 & 0\\
        0 & 0  & \cdots & 0 & 1 & 1 
    \end{pmatrix}_{N\times N}
\end{align*}    
\end{lemma}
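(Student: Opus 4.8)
The plan is to verify the two assertions in turn: that $X$ diagonalizes $Q$ with spectrum $\Lambda$, and that $X$ is its own inverse. Since $X$ is square, the first assertion reduces to checking the column identity $QX = X\Lambda$ together with $\det X \neq 0$; the second is then a separate, purely computational check. So I would (i) record how $Q$ acts on a general vector, (ii) check $QX = X\Lambda$ column by column, (iii) note invertibility and conclude $Q = X\Lambda X^{-1}$, and (iv) confirm the displayed closed form of $X^{-1}$ by direct multiplication.

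For (i) and (ii): reading off $Q$ from \eqref{eq:absorbing transition rate matrix}, any vector $v = (v_1,\dots,v_N)^\intercal$ satisfies $(Qv)_i = -v_i$ for $i = 1,\dots,N-1$ and $(Qv)_N = \sum_{i=1}^{N-1} v_i$ (equivalently $Q = e_N\mathbf{1}^\intercal - \mathrm{Id}$ with $\mathbf{1} = (1,\dots,1)^\intercal$). Now inspect the columns of $X$: for $j = 1,\dots,N-1$ the $j$-th column is $v_j = e_{j+1} - e_1$, whose coordinates sum to zero, so $\sum_{i=1}^{N-1}(v_j)_i = -(v_j)_N$; hence $(Qv_j)_N = -(v_j)_N$ and $(Qv_j)_i = -(v_j)_i$ for $i < N$, i.e. $Qv_j = -v_j$. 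The last column is $v_N = e_N$, and $Qe_N = 0$ since the last column of $Q$ vanishes. Therefore $QX = X\,\mathrm{Diag}(-1,\dots,-1,0) = X\Lambda$, which in particular fixes the spectrum to be exactly that of $\Lambda$.

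For (iii) and (iv): the vectors $\{e_{j+1}-e_1\}_{j=1}^{N-1}$ together with $e_N$ are linearly independent — a cofactor expansion of $\det X$ along the last column, then repeatedly along the bottom rows, yields $\det X = \pm 1 \neq 0$ — so $X$ is invertible and $Q = X\Lambda X^{-1}$ follows. It remains to confirm the displayed closed form of $X^{-1}$ (equivalently, to check $XX^{-1} = \mathrm{Id}$), which is a finite, structure-respecting computation: one multiplies the two sparse matrices out using the column description above, and all interior contributions cancel, leaving only the first row and the last two rows/columns to be checked by hand. I do not anticipate any conceptual difficulty in the argument; the one place that needs care is the index bookkeeping at these boundary rows and columns when the sparse matrices are multiplied.
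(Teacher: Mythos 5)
Your steps (i)--(iii) are sound: reading off $Q = e_N\mathbf{1}^\intercal - \mathrm{Id}$, identifying the columns $e_{j+1}-e_1$ of $X$ as kernel vectors of $\mathbf{1}^\intercal$ (hence $-1$-eigenvectors of $Q$) and $e_N$ as a $0$-eigenvector, and checking $\det X \neq 0$, do establish $Q = X\Lambda X^{-1}$ with the stated $\Lambda$. (The paper itself states the lemma without proof, so there is no in-paper argument to compare against.)

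Step (iv), however, is exactly the step you must not wave through, because the claim $X = X^{-1}$ is in fact \emph{false} for $N \ge 3$. You assert that "all interior contributions cancel," but if you actually carry out $X^2$ using the column description, you find $X^2 e_1 = X(e_2 - e_1) = (e_3 - e_1) - (e_2 - e_1) = e_3 - e_2 \neq e_1$. Concretely, for $N=3$,
\begin{align*}
X = \begin{pmatrix} -1 & -1 & 0 \\ 1 & 0 & 0 \\ 0 & 1 & 1 \end{pmatrix}, \qquad
X^2 = \begin{pmatrix} 0 & 1 & 0 \\ -1 & -1 & 0 \\ 1 & 1 & 1 \end{pmatrix} \neq \mathrm{Id}_3.
\end{align*}
The true inverse is
\begin{align*}
X^{-1} = \begin{pmatrix}
0 & 1 & 0 & \cdots & 0 & 0 \\
0 & 0 & 1 & \cdots & 0 & 0 \\
\vdots & & & \ddots & & \vdots \\
0 & 0 & 0 & \cdots & 1 & 0 \\
-1 & -1 & -1 & \cdots & -1 & 0 \\
1 & 1 & 1 & \cdots & 1 & 1
\end{pmatrix}_{N\times N},
\end{align*}
which one verifies by solving $Xw = e_i$ column by column, and which satisfies $X^{-1} = X^2$ (so $X$ has order $3$, not $2$). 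The identity $X = X^{-1}$ happens to hold only in the degenerate case $N=2$.

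So the gap is not merely an omitted computation: it is a computation that, if done, disproves the stated assertion. The fix is to correct the lemma's statement of $X^{-1}$ to the matrix above. This is a typo in the paper rather than a substantive error: the downstream use in Proposition \ref{prop:forward density evolution} computes $X\exp(t\Lambda)X^{-1}$, and that product equals the displayed $A_t$ only when the \emph{correct} $X^{-1}$ is used (using $X$ in place of $X^{-1}$ gives $(X\exp(t\Lambda)X)_{11} = 0 \neq e^{-t}$), so the paper's subsequent results stand. The lesson is that "finite, structure-respecting computation, no conceptual difficulty" is not a proof; it is precisely where the error hides.
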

\begin{proof}[Proof of Proposition \ref{prop:forward density evolution}]
The solution to \eqref{eq:forward process} with initial distribution $\mu_0=\mu$ can be expressed as
\begin{align*}
    \mu_t & = \exp(t Q) \mu = \exp\big( t  \sum_{d=1}^D I_N \otimes \cdots \underbrace{ Q }_{d^{th}}\cdots \otimes I_N \big) \mu \\
    & = \prod_{d=1}^D \exp\big( t  I_N \otimes \cdots \underbrace{ Q }_{d^{th}}\cdots \otimes I_N \big) \mu \\
    & = \prod_{d=1}^D \exp\big( t  (X\otimes \cdots \otimes X) \big( I_N \otimes \cdots \underbrace{ \Lambda }_{d^{th}}\cdots \otimes I_N \big)  (X\otimes \cdots \otimes X)^{-1}\big)\mu \\
    & = \prod_{d=1}^D (X\otimes \cdots \otimes X)\exp\big( I_N \otimes \cdots \underbrace{ t \Lambda }_{d^{th}}\cdots \otimes I_N \big)(X\otimes \cdots \otimes X)^{-1} \mu \\
    & = (X\otimes \cdots \otimes X) \exp(t\Lambda)^{\otimes D} (X\otimes \cdots \otimes X)^{-1} \mu\\
    & = \big( X \exp(t\Lambda) X^{-1} \big)^{\otimes D} \mu ,
\end{align*}    
where the second identity uses the fact that $(I_N \otimes \cdots \underbrace{ Q }_{d^{th}}\cdots \otimes I_N)_d$ commute with each other. Then the statement follows from Lemma \ref{lem:diag of Q}.  
\end{proof}

\begin{proof}[Proof of Theorem \ref{thm:1d reverse no guidance}]
   With the expression of the distribution along the forward process, we can write the reverse transition rate matrix based on Proposition \ref{prop:reverse transition rate}. We have
\begin{align}\label{eq:reverse rate matrix 1d}
    \bQ_t  = \frac{e^{-t}}{1-e^{-t}} \bQ \coloneqq \frac{e^{-t}}{1-e^{-t}}\begin{pmatrix}
        0 & 0 & \cdots & 0 & p(1) \\
        0 & 0 & \cdots & 0 & p(2) \\
        \vdots & \vdots & \ddots & \vdots & \vdots  \\
        0 & 0 & \cdots & 0 & p(N-1) \\
        0 & 0 & \cdots &  0 & -1
    \end{pmatrix}_{N\times N}
\end{align}
The eigenvalues and eigenvectors of $\bQ$ are given by
\begin{align*}
    &\bar{\lambda}_1=\bar{\lambda}_2=\cdots = \bar{\lambda}_{N-1}=0,\quad \bar{\lambda}_N=-1,\\
    &\vec{u}_1=\begin{pmatrix}
        1 \\
        0\\
        \vdots\\
        0 \\
        0
    \end{pmatrix},
    \vec{u}_2=\begin{pmatrix}
        0 \\
        1\\
        \vdots\\
        0 \\
        0
    \end{pmatrix},
    \cdots,
    \vec{u}_{N-1}=\begin{pmatrix}
        0 \\
        0\\
        \vdots\\
        1 \\
        0 
    \end{pmatrix},
    \vec{u}_N=\begin{pmatrix}
        p(1) \\
        p(2)\\
        \vdots\\
        p(N-1) \\
        -1
    \end{pmatrix}
\end{align*}
The eigenvalue decomposition of $\bQ$ is given by $\bQ=\bar{X} \bar{D} \bar{X}^{-1}$ with $\bar{D} = \diag(0,0,\cdots, 0,-1)\in \mb{R}^{N\times N}$
\begin{align*}
    \bar{X} = \bar{X}^{-1}= \begin{pmatrix}
        1 & 0  & \cdots & 0 & p(1) \\
        0 & 1 &  \cdots & 0 & p(2) \\
        \vdots & \vdots & \ddots & \vdots & \vdots  \\
        0 & 0 & \cdots & 1 & p(N-1)\\
        0 & 0  & \cdots & 0 & -1 
    \end{pmatrix}_{N\times N}.
\end{align*}
A simple computation tells that
\begin{align*}
    \exp\big( \int_0^{T-t} \bQ_{T-s}\dee s \big) & = \exp\big( \int_t^{T} \frac{e^{-s}}{1-e^{-s}} \dee s \bQ \big)  = \bar{X} \exp\big( \ln(\frac{1-e^{-T}}{1-e^{-t}})\bar{D} \big) \bar{X}^{-1} \\
    &=  \begin{pmatrix}
        1 & 0  & \cdots & 0 & \big( 1-\frac{1-e^{-t}}{1-e^{-T}} \big)p(1) \\
        0 & 1 &  \cdots & 0 & \big( 1-\frac{1-e^{-t}}{1-e^{-T}} \big)p(2) \\
        \vdots & \vdots & \ddots & \vdots & \vdots  \\
        0 & 0 & \cdots & 1 & \big( 1-\frac{1-e^{-t}}{1-e^{-T}} \big)p(N-1)\\
        0 & 0  & \cdots & 0 & \frac{1-e^{-t}}{1-e^{-T}} 
    \end{pmatrix}_{N\times N}.
\end{align*}
Along the reverse sampling dynamics, we have $q_{t}= \exp\big( \int_0^{t} \bQ_{T-s}\dee s \big) q_0$, which implies
\begin{align}\label{eq:absorbing reverse density 1d}
    q_t(x) =\left\{
    \begin{aligned}
        & q_0(x) + (1-\frac{1-e^{-(T-t)}}{1-e^{-T}})p(x)q_0(N) , \quad & x=1,2,\cdots, N-1, \\
        & \frac{1-e^{-(T-t)}}{1-e^{-T}}q_0(N), \quad & x=N.
    \end{aligned}
    \right.
\end{align} 
Last, the theorem follows from plugging in $q_0=\delta_N$.
\end{proof}

\section{Properties of Masked Discrete Diffusion Models with CFG when $D=1$}\label{append:1d density} 
\begin{proof}[Proof of Theorem \ref{thm:1d reverse}] Notice that the reverse transition rate matrix $\hat{Q}_t^{z,w}=\mc{Z}^{z,w} \hat{Q}_t[p^{z,w}]\coloneqq \mc{Z} \hat{Q}_t$. Following the same computation in the proof of Theorem \ref{thm:1d reverse no guidance}, we have
\begin{align*}
    \exp\big( \int_0^{T-t} \mc{Z}\hat{Q}_{T-s}\dee s \big) & = \exp\big( \mc{Z} \int_t^{T} \frac{e^{-s}}{1-e^{-s}} \dee s \hat{Q} \big)  = \bar{X} \exp\big( \mc{Z}\ln(\frac{1-e^{-T}}{1-e^{-t}})\bar{D} \big) \bar{X}^{-1} \\
    &=  \begin{pmatrix}
        1 & 0  & \cdots & 0 & \big( 1-(\frac{1-e^{-t}}{1-e^{-T}})^{\mc{Z}} \big)p^{z,w}(1) \\
        0 & 1 &  \cdots & 0 & \big( 1-(\frac{1-e^{-t}}{1-e^{-T}})^{\mc{Z}} \big)p^{z,w}(2) \\
        \vdots & \vdots & \ddots & \vdots & \vdots  \\
        0 & 0 & \cdots & 1 & \big( 1-(\frac{1-e^{-t}}{1-e^{-T}})^{\mc{Z}} \big)p^{z,w}(N-1)\\
        0 & 0  & \cdots & 0 & (\frac{1-e^{-t}}{1-e^{-T}})^{\mc{Z}} 
    \end{pmatrix}_{N\times N}.
\end{align*}
Along the reverse sampling dynamics \eqref{eq:guided reverse process}, we have $q^{z,w}_{t}= \exp\big( \int_0^{t} \mc{Z}\hat{Q}_{T-s}\dee s \big) p^{z,w}_T$, which implies
\begin{align}\label{eq:absorbing ideal guided reverse density 1d}
    q^{z,w}_t(x) =\left\{
    \begin{aligned}
        & q^{z,w}_0(x) + \big(1-\frac{1-e^{-(T-t)}}{1-e^{-T}}\big)^{\mc{Z}}p^{z,w}(x)q^{z,w}_0(N) , \quad & x=1,2,\cdots, N-1, \\
        & \big(\frac{1-e^{-(T-t)}}{1-e^{-T}}\big)^{\mc{Z}}q^{z,w}_0(N), \quad & x=N.
    \end{aligned}
    \right.
\end{align}
Last, the theorem follows from plugging in $q_0^{z,w}=\delta_N$.
\end{proof}
\begin{proof}[Proof of Proposition \ref{prop:convergence rate}] The result directly follows from Theorem \ref{thm:1d reverse} and the formula $\TV(\mu_1,\mu_2)=\tfrac{1}{2}\sum_x |\mu_1(x)-\mu_2(x)|$.    
\end{proof}
\begin{proof}[Proof of Proposition \ref{prop:1d sampled distribution property}] According to Theorem \ref{thm:1d reverse}, in both cases, the sampled distribution is the same as the tilted distribution, i.e., $q_T^{z_1,w}=p^{z_1,w}$.

In case (1), it is obvious that $p^{z_1,w}=p(\cdot|z_1)$.

In case (2), we have $p^{z_1,w}(x)\propto (\tfrac{p(x|z_1)}{p(x)})^w p(x|z_1)$. Under Assumption \ref{assup:full distribution}, we have
\begin{align*}
\tfrac{p(x|z_1)}{p(x)} = \left\{
\begin{aligned}
    &\frac{p(x|z_1)}{a_1p(x|z_1)}, \qquad & x\in \mc{X}_1\setminus S_1 \\
   & \frac{p(x|z_1)}{\sum_k a_k p(x|z_k)} ,\qquad & x\in S_1, \\
   & 0 , \qquad &\text{otherwise}.
\end{aligned}
\right.    
\end{align*}
Then Proposition \ref{prop:1d sampled distribution property}-(2) is proved.   
\end{proof}
\begin{definition}[Local mean and covariance] For any probability distribution $\mu$ on $S$ and any subset $A\subset S$, the local mean and local covariance of $\mu$ on $A$ are defined respectively as
   \begin{align*}
       m_A(\mu)\coloneqq \sum_{x\in A} x \mu_A(x),\quad \Sigma_A(\mu)\coloneqq \sum_x (x-m_A(\mu))(x-m_A(x))^\intercal \mu_A(x),
   \end{align*} 
   where $\mu_A(x)\coloneqq \mu(x)/\sum_{y\in A}\mu(y)$ is the restriction of $\mu$ on $A$.
\end{definition}
\begin{lemma}\label{lem:local covariance preserve} Under the assumptions in Proposition \ref{prop:1d sampled distribution property}, for all $w>0$, $\Sigma_{\mc{X}_1\setminus S_1}(q_T^{z_1,w})=\Sigma_{\mc{X}_1\setminus S_1}(p(\cdot|z_1))$. 
\end{lemma}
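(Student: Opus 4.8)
The plan is to show that on the set $\mc{X}_1\setminus S_1$, the restricted (normalized) measures $q_T^{z_1,w}$ and $p(\cdot|z_1)$ coincide, which immediately forces their local covariances to be equal. First I would invoke Proposition \ref{prop:1d sampled distribution property}-(2): for $x\in \mc{X}_1\setminus S_1$ we have $q_T^{z_1,w}(x)\propto p(x|z_1)$, with a proportionality constant that does not depend on $x$ (it is the single global normalization constant $\mc{Z}^{z_1,w}$, or equivalently the factor $a_1^{-w}$ divided by the total mass). Hence for any two points $x,y\in \mc{X}_1\setminus S_1$,
\begin{align*}
    \frac{q_T^{z_1,w}(x)}{q_T^{z_1,w}(y)} = \frac{p(x|z_1)}{p(y|z_1)}.
\end{align*}
Since both $q_T^{z_1,w}$ and $p(\cdot|z_1)$ are supported on a set containing $\mc{X}_1\setminus S_1$, the restriction operation $\mu\mapsto \mu_A$ with $A=\mc{X}_1\setminus S_1$ normalizes away this common constant, so $\big(q_T^{z_1,w}\big)_{\mc{X}_1\setminus S_1} = \big(p(\cdot|z_1)\big)_{\mc{X}_1\setminus S_1}$ as probability distributions on $A$.

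Next I would simply apply the definition of local covariance: since $\Sigma_A(\mu)$ depends on $\mu$ only through the normalized restriction $\mu_A$, the equality of the two restricted measures yields $m_A(q_T^{z_1,w}) = m_A(p(\cdot|z_1))$ and therefore $\Sigma_A(q_T^{z_1,w}) = \Sigma_A(p(\cdot|z_1))$ for $A = \mc{X}_1\setminus S_1$, which is the claim. The same argument gives the local-mean preservation mentioned in Remark \ref{rem:local variance preservation 1d}.

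The only genuine point to verify — and I expect it to be the main (though minor) obstacle — is that the proportionality constant in Proposition \ref{prop:1d sampled distribution property}-(2) is truly independent of $x$ on $\mc{X}_1\setminus S_1$ and that $\mc{X}_1\setminus S_1$ is nonempty (so the restriction is well-defined); both follow from Assumption \ref{assup:full distribution} together with the explicit formula $p^{z_1,w}(x)\propto (p(x|z_1)/p(x))^w p(x|z_1)$ and the fact that $p(x|z_1)/p(x) = 1/a_1$ is constant on $\mc{X}_1\setminus S_1$. Once this is in hand, no further computation is needed — in particular one does not need the explicit form of the tilted weights on $S_1$, since those points are excluded from $A$.
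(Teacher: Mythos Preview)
Your proposal is correct and follows essentially the same approach as the paper: both arguments use Proposition \ref{prop:1d sampled distribution property}-(2) to observe that $q_T^{z_1,w}(x)\propto p(x|z_1)$ with an $x$-independent constant on $\mc{X}_1\setminus S_1$, so the normalized restrictions agree and the local mean and covariance coincide. The paper simply writes out the resulting sums for $m_A$ and $\Sigma_A$ explicitly rather than phrasing it as equality of restricted measures, but the content is the same.
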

\begin{proof}[Proof of Lemma \ref{lem:local covariance preserve}]
According to Proposition \ref{prop:1d sampled distribution property},
\begin{align*}
    m_{\mc{X}_1\setminus S_1}(q_T^{z_1,w}) = \sum_{x\in \mc{X}_1\setminus S_1} x p(x|z_1)/\sum_{y\in \mc{X}_1\setminus S_1} p(y|z_1)  = m_{\mc{X}_1\setminus S_1}(p(\cdot|z_1)),
\end{align*}
and
\begin{align*}
    \Sigma_{\mc{X}_1\setminus S_1}(q_T^{z_1,w})&=\sum_{x\in \mc{X}_1\setminus S_1} (x-m_{\mc{X}_1\setminus S_1}(q_T^{z_1,w}))(x-m_{\mc{X}_1\setminus S_1}(q_T^{z_1,w}))^\intercal p(x|z_1)/\sum_{y\in \mc{X}_1\setminus S_1} p(y|z_1) \\
    &=\sum_{x\in \mc{X}_1\setminus S_1} (x-m_{\mc{X}_1\setminus S_1}(p(\cdot|z_1))(x-m_{\mc{X}_1\setminus S_1}(p(\cdot|z_1))^\intercal p(x|z_1)/\sum_{y\in \mc{X}_1\setminus S_1} p(y|z_1)\\
    &=\Sigma_{\mc{X}_1\setminus S_1}(p(\cdot|z_1)).
\end{align*}
\end{proof}

\section{Properties of Masked Discrete Diffusion Models with CFG when $D=2$}\label{append:2d density}
\begin{proof}[Proof of Proposition \ref{prop:reverse rate matrix 2d}]
    For any $x,y\in S$ with $x_i=y_i\neq N$ and $x_j=N\neq y_j$, according to \eqref{eq:guided reverse diffusion matrix}, we have
    \begin{align*}
        \hat{Q}^{z,w}_t(y,x) & = \Bar{Q}^{z}_t (y,x)^{-w} \Bar{Q}_t(y,x)^{1+w} = \big( \frac{p_t(y)}{p_t(x)} \big)^{-w}\big( \frac{p_t(y|z)}{p_t(x|z)} \big)^{1+w} \\
        & = \big( \frac{e^{-2t}p(y) }{e^{-t}(1-e^{-t})p(y_i)} \big)^{-w}\big( \frac{e^{-2t}p(y|z) }{e^{-t}(1-e^{-t})p(y_i|z)} \big)^{1+w} \\
        & = \frac{e^{-t}}{1-e^{-t}} \frac{p(y)^{-w}p(y|z)^{1+w}}{p(y_i)^{-w}p(y_i|z)^{1+w}} \\
        & = \frac{e^{-t}}{1-e^{-t}} \frac{\mc{Z}^{z,w}p^{z,w}(y)}{p(y_i)^{-w}p(y_i|z)^{1+w}},
    \end{align*}
    where the third identity follows from Proposition \ref{prop:forward density evolution}, and the last identity follows from the definition of $p^{z,w}$. Next, following the same approach, for any $x,y\in S$ with $x_i=N\neq y_i$ and $x_j= y_j=N$, we have
    \begin{align*}
        \hat{Q}_t^{z,w}(y,x) &= \big( \frac{p_t(y)}{p_t(x)} \big)^{-w}\big( \frac{p_t(y|z)}{p_t(x|z)} \big)^{1+w} \\
        & = \big( \frac{e^{-t}(1-e^{-t})p(y_i)}{(1-e^{-t})^2 } \big)^{-w}\big( \frac{e^{-t}(1-e^{-t})p(y_i|z)}{(1-e^{-t})^2} \big)^{1+w} \\
        & = \frac{e^{-t}}{1-e^{-t}} {p(y_i)^{-w}p(y_i|z)^{1+w}}
    \end{align*}
    Last, the other cases for different $(y,x)$ follows from the definition of the transition rate matrix,0 \eqref{eq:reverse diffusion matrix} and \eqref{eq:conditional reverse diffusion matrix}.
\end{proof}

\begin{proof}[Proof of Theorem \ref{thm:2d reverse}] Our proof follows from the following steps.

\noindent\underline{Step 1: represent the reverse transition rate matrix blockwisely.} The matrix $Q^{z,w}$ in Proposition \ref{prop:reverse rate matrix 2d} can be represented blockwisely as
\begin{align*}
     \hat{Q}^{z,w}= \begin{pmatrix}
        \hat{R}_1^{z,w} & \cdots & \mathbf{0} & \hat{L}_1^{z,w} \\
        \vdots & \ddots & \vdots & \cdots \\
        \mathbf{0} & \cdots & \hat{R}_{N-1}^{z,w} & \hat{L}_{N-1}^{z,w} \\
        \mathbf{0} & \cdots & \mathbf{0} & \hat{M}^{z,w}-\sum_i \hat{L}_i^{z,w}
    \end{pmatrix},
\end{align*}
For all $i=1,2,\cdots N-1$,
\begin{align}\label{eq:absorbing practical reverse matrix 2d blocks}
    &\hat{R}_i^{z,w} \coloneqq \begin{pmatrix}
         0 & 0 & \cdots &  \big(\frac{p(i,1)}{\sum_l p(i,l)}\big)^{-w}\big(\frac{p(i,1|z)}{\sum_l p(i,l|z)}\big)^{1+w} \\
            0 & 0 & \cdots & \big(\frac{p(i,2)}{\sum_l p(i,l)}\big)^{-w}\big(\frac{p(i,2|z)}{\sum_l p(i,l|z)}\big)^{1+w} \\
            \vdots & \vdots & \ddots & \vdots \\
            0 & 0 & \cdots &  -\sum_{j}\big(\frac{p(i,j)}{\sum_l p(i,l)}\big)^{-w}\big(\frac{p(i,j|z)}{\sum_l p(i,l|z)}\big)^{1+w} 
    \end{pmatrix},\\
    &\hat{L}_i^{z,w} \coloneqq  \\
    &{\tiny{\begin{pmatrix}
         {\tiny\big(\frac{p(i,1)}{\sum_l p(l,1)}\big)^{-w}\big(\frac{p(i,1|z)}{\sum_l p(l,1|z)}\big)^{1+w}} & \cdots & 0 & 0 \\
            \vdots &  \ddots & \vdots & 0 \\
            0 & \cdots & {\tiny \big(\frac{p(i,N-1)}{\sum_l p(l,N-1)}\big)^{-w}\big(\frac{p(i,N-1|z)}{\sum_l p(l,N-1|z)}\big)^{1+w}}  & \vdots \\
            0 & 0 & \cdots &  {\tiny{(\sum_l p(i,l))^{-w}(\sum_l p(i,l|z))^{1+w} }}
    \end{pmatrix}}},\nonumber\\
   & \hat{M}^{z,w} \coloneqq    \begin{pmatrix}
         0 & 0 & \cdots & \big(\sum_l p(l,1)\big)^{-w}\big(\sum_l p(l,1|z)\big)^{1+w}  \\
            0 &  0 & \cdots & \big(\sum_l p(l,2)\big)^{-w}\big(\sum_l p(l,2|z)\big)^{1+w} \\
            \vdots & \vdots & \ddots  & \vdots \\
            0 & 0 & \cdots &   -\sum_{j}\big(\sum_l p(l,j)\big)^{-w}\big(\sum_l p(l,j|z)\big)^{1+w}\label{eq:absorbing practical reverse matrix 2d blocks 2}
            \end{pmatrix}  ,
\end{align}
where we used the definition of $\mc{Z}$ and marginal distributions.

\noindent\underline{Step 2: Eigenvalue decomposition for $\hat{Q}^{z,w}$} Since $\hat{Q}^{z,w}$ is upper triangular, its eigenvalues are diagonal entries. For all $i,j=1,2,\cdots, N-1$, define 
{\small{
\begin{align}\label{eq:coefficient c d definition}
    & c_i \coloneqq  \frac{\sum_l p(i,l)^{-w}p(i,l|z)^{1+w}}{(\sum_l p(i,l))^{-w}(\sum_l p(i,l|z))^{1+w}}, \quad d_j=\frac{\sum_l p(l,j)^{-w}p(l,j|z)^{1+w}}{\big(\sum_l p(l,j)\big)^{-w}\big(\sum_l p(l,j|z)\big)^{1+w}}, \\
    & c_N \coloneqq \frac{\sum_{l_1,l_2} p(l_1,l_2)^{-w}p(l_1,l_2|z)^{1+w}}{\sum_{l_1}\big(\sum_{l_2} p(l_1,l_2)\big)^{-w}\big(\sum_{l_2} p(l_1,l_2|z)\big)^{1+w}} ,\quad d_N\coloneqq \frac{\sum_{l_1,l_2} p(l_1,l_2)^{-w}p(l_1,l_2|z)^{1+w}}{\sum_{l_2}\big(\sum_{l_1} p(l_1,l_2)\big)^{-w}\big(\sum_{l_1} p(l_1,l_2|z)\big)^{1+w}}. 
\end{align}}}
Then the set of eigenvalues for $\hat{Q}^{z,w}$, denoted as $\{\lambda_{i,j}^{z,w}\}_{i,j\in [N]}$ can be represented as
\begin{align*}
    &\lambda_{i,1}^{z,w}=\cdots =\lambda_{i,N-1}^{z,w}=0, \lambda_{i,N}^{z,w}=-c_i,\quad i=1,2,\cdots, N-1,\\
    & \lambda^{z,w}_{N,j} = -d_j, \lambda^{z,w}_{N,N} = -\mc{Z}(1/c_N+1/d_N),\quad j=1,2,\cdots, N-1.
\end{align*}
The associated eigenvectors to $\lambda^{z,w}_{i,j}$, denoted as $\vec u = (\vec u_1,\cdots, \vec u_N)^\intercal$, satisfies 
  \begin{align*}
    \left\{
    \begin{aligned}
        &\hat{R}_l^{z,w}\vec{u}_l+\hat{L}_l^{z,w}\vec{u}_N = \lambda_{i,j}^{z,w}\vec{u}_l,\quad l=1,2,\cdot, N-1\\
        &\big(\hat{M}^{z,w}-\sum_l \hat{L}_l^{z,w}\big)\vec{u}_N = \lambda_{i,j}^{z,w}\vec{u}_N.
    \end{aligned}
    \right.
    \end{align*}
     The eigenvectors can be studied in two cases:
    \begin{itemize}
        \item [(1)] When $1\le i\le N-1$, we can pick $\vec u_N=\mathbf{0}$. Then for $l=1,2,\cdots, N-1$, $\hat{R}_l^{z,w}\vec u_l=\lambda_{i,j}^{z,w}\vec u_l$. For $l\neq i$, we pick $\vec u_l=\mathbf{0}$. For $l=i$, $\vec u_i$ is the eigenvector to $\hat{R}_i^{z,w}$ associated with the eigenvalue $\lambda_{i,j}^{z,w}$: for $j=1,2,\cdots, N-1$, we pick $\vec u_i = \vec{u}_{i,j}=\vec{e}_j$. For $j=N$, we pick $ \vec u_i =\vec{u}_{i,N}$ to be
        {\small
        \begin{align}\label{eq:eigenvector 1}
                   \big(  \frac{p(i,1)^{-w}p(i,1|z)^{1+w}}{\sum_l p(i,l)^{-w}p(i,l|z)^{1+w}},\cdots , \frac{p(i,N-1)^{-w}p(i,N-1|z)^{1+w}}{\sum_l p(i,l)^{-w}p(i,l|z)^{1+w}} ,-1  \big)^\intercal
        \end{align}}
        \item [(2)] When $i=N$, $\vec u_N\neq \mathbf{0}$. We need to solve $\big(\hat{M}^{z,w}-\sum_l \hat{L}_l^{z,w}\big)\vec{u}_N = \lambda_{i,j}^{z,w}\vec{u}_N$ first. For different $j$, we pick $\vec u_N=\vec u_{N,j}$ with $\vec u_{N,j} =\vec e_j$ for $j=1,\cdots N-1$ and for $j=N$, $\vec{u}_{N,j}=$
        {\small
        \begin{align}\label{eq:eigenvector 3}
                  \big( \frac{(\sum_l p(l,1))^{-w}(\sum_l p(l,1|z)^{1+w}}{-\lambda_{N,N}^{z,w}-d_1} , \cdots , \frac{(\sum_l p(l,N-1))^{-w}(\sum_l p(l,N-1|z))^{1+w}}{-\lambda_{N,N}^{z,w}-d_{N-1}} , -1 \big)^\intercal
         \end{align}}
        Next for each $j=1,\cdots, N$, we solve $\big(\hat{R}_l^{z,w}-\lambda_{N,j}^{z,w}I_N\big) \vec{u}_{lj} = -\hat{L}_l^{z,w}\vec{u}_{N,j}$ for all $l=1,2,\cdots, N-1$. We get  
        \begin{align}\label{eq:eigenvector 2}
            &  \vec{u}_{l,j} 
        =\left\{
        \begin{aligned}
           &-\frac{p(l,j)^{-w}p(l,j|z)^{1+w}}{\sum_{l'} p(l',j)^{-w}p(l',j|z)^{1+w}}\vec{e_j}    ,\quad j=1,\cdots, N-1,\\
           & \big( \vec{u}_{l,N}(1), \cdots \vec{u}_{l,N}(N-1), \vec{u}_{l,N}(N) \big)^\intercal,   j=N
        \end{aligned}
        \right.
      \end{align}
      with
      \begin{align*}
          \vec{u}_{l,N}(l') & = -\frac{1}{\lambda_{N,N}^{z,w}}(\frac{1}{\lambda_{N,N}^{z,w}+c_l }+\frac{1}{\lambda_{N,N}^{z,w}+d_{l'} })p(l,l')^{-w}p(l,l'|z)^{1+w}, \\
          \vec{u}_{l,N}(N) & =-\frac{1}{c_l(\lambda_{N,N}^{z,w}+c_l )}\sum_{l'} p(l,l')^{-w}p(l,l'|z)^{1+w}.
      \end{align*}
    \end{itemize}
Collect all the eigen information above, we diagonalize $\hat{Q}^{z,w}$ blockwisely: $\hat{Q}^{z,w}=\hat{X}^{z,w}\hat{D}^{z,w}(\hat{X}^{z,w})^{-1}$ s.t.
    \begin{align*}
        \hat{D}^{z,w} &= \text{Diag}(\hat{D}^{z,w}_1,\cdots, \hat{D}^{z,w}_{N-1}, \hat{D}^{z,w}_N),\ \hat{D}_i^{z,w}=\text{Diag}(\lambda^{z,w}_{i,1},\cdots\lambda^{z,w}_{i,N-1},\lambda^{z,w}_{i,N})\ \text{for each }i\\
        \hat{X}^{z,w} &= \begin{pmatrix}
            \hat{X}^{z,w}_1 & \mathbf{O} & \cdots & \mathbf{O} & -\hat{Y}_1^{z,w} \\
            \mathbf{O} &  \hat{X}^{z,w}_2 & \cdots & \mathbf{O} & -\hat{Y}_2^{z,w} \\
            \vdots & \vdots & \ddots & \vdots & \vdots \\
            \mathbf{O} & \mathbf{O} & \cdots & \hat{X}^{z,w}_{N-1} & -\hat{Y}_{N-1}^{z,w} \\
            \mathbf{O} & \mathbf{O} & \cdots & \mathbf{O} & \hat{X}_N^{z,w}
        \end{pmatrix},\\
        (\hat{X}^{z,w})^{-1} &= \begin{pmatrix}
            \hat{X}^{z,w}_1 & \mathbf{O} & \cdots & \mathbf{O} & \hat{X}^{z,w}_1\hat{Y}_1^{z,w}\hat{X}_N^{z,w} \\
            \mathbf{O} &  \hat{X}^{z,w}_2 & \cdots & \mathbf{O} & \hat{X}^{z,w}_2 \hat{Y}_2^{z,w}\hat{X}_N^{z,w} \\
            \vdots & \vdots & \ddots & \vdots & \vdots \\
            \mathbf{O} & \mathbf{O} & \cdots & \hat{X}^{z,w}_{N-1} & \hat{X}^{z,w}_{N-1} \hat{Y}_{N-1}^{z,w}\hat{X}_N^{z,w} \\
            \mathbf{O} & \mathbf{O} & \cdots & \mathbf{O} & \hat{X}_N^{z,w}
        \end{pmatrix},
    \end{align*}
where for each $i=1,\cdots, N-1$,
 \begin{align*}
        \hat{D}^{z,w}_i &= \text{Diag}(0,\cdots,0,-c_i),\\
        \hat{X}^{z,w}_i &= \begin{pmatrix}
            \vert & \cdots & \vert & \vert \\
            \vec{e}_1 & \cdots & \vec{e}_{N-1} & \vec{u}_{i,N} \\
            \vert & \cdots & \vert & \vert
        \end{pmatrix}=(\hat{X}^{z,w}_i)^{-1} \text{with } \vec{u}_{i,N} \ \text{defined in \eqref{eq:eigenvector 1}}\\
      \hat{Y}_i^{z,w} & = -0 \begin{pmatrix}
          \vert & \cdots & \vert & \vert \\
            \vec{u}_{i,1} & \cdots & \vec{u}_{i,N-1} & \vec{u}_{i,N} \\
            \vert & \cdots & \vert & \vert
       \end{pmatrix} \text{with } \{\vec{u}_{i,j}\}_{j=1}^N \ \text{defined in \eqref{eq:eigenvector 2}},
    \end{align*}
    and for $i=N$,
    \begin{align*}
        \hat{D}^{z,w}_N &=\text{Diag}(-d_1,\cdots , -d_{N-1}, -\mc{Z}^{z,w}/c_N-\mc{Z}^{z,w}/d_N  ),\\
       \hat{X}_N^{z,w} & =  \begin{pmatrix}
            \vert & \cdots & \vert & \vert \\
            \vec{e}_1 & \cdots & \vec{e}_{N-1} & \vec{u}_{N,N} \\
            \vert & \cdots & \vert & \vert
        \end{pmatrix}=( \hat{X}_N^{z,w})^{-1} \text{with } \vec{u}_{N,N} \text{ defined in \eqref{eq:eigenvector 3}}.
    \end{align*}
\noindent\underline{Step 3: solve the equation \eqref{eq:guided reverse process} explicitly} The solution to \eqref{eq:guided reverse process} can be computed using the formula $q_t^{z,w} =\exp\big( \int_0^{t} \hat{Q}_{T-s}^{z,w} \dee s \big) q_0^{z,w}$, where the matrix $\exp\big( \int_0^{t} \hat{Q}_{T-s}^{z,w} \dee s \big) $ is computed using the eigenvalue decomposition in \textbf{Step 2}. More specifically,
\begin{align*}
    &\quad \exp\big( \int_0^{t} \hat{Q}_{T-s}^{z,w} \dee s \big) = \exp\big( \int_0^{t} \frac{e^{-(T-s)}}{1-e^{-(T-s)}} \dee s \hat{Q}^{z,w} \big) =\hat{X}^{z,w} \exp\big( \ln (\frac{1-e^{-T}}{1-e^{-(T-t)}}) \hat{D}^{z,w} \big) (\hat{X}^{z,w})^{-1} \\
    &={\small \begin{pmatrix}
            \hat{X}^{z,w}_1 & \mathbf{O} & \cdots & \mathbf{O} & -\hat{Y}_1^{z,w} \\
            \mathbf{O} &  \hat{X}^{z,w}_2 & \cdots & \mathbf{O} & -\hat{Y}_2^{z,w} \\
            \vdots & \vdots & \ddots & \vdots & \vdots \\
            \mathbf{O} & \mathbf{O} & \cdots & \hat{X}^{z,w}_{N-1} & -\hat{Y}_{N-1}^{z,w} \\
            \mathbf{O} & \mathbf{O} & \cdots & \mathbf{O} & \hat{X}^{z,w}_N
        \end{pmatrix}
        \text{Diag}\big( (\frac{1-e^{-T}}{1-e^{-(T-t)}})^{\hat{D}^{z,w}_1},\cdots, (\frac{1-e^{-T}}{1-e^{-(T-t)}})^{\hat{D}^{z,w}_N0 } \big)}\\
        &\quad {\small \begin{pmatrix}
            \hat{X}^{z,w}_1 & \mathbf{O} & \cdots & \mathbf{O} & \hat{X}^{z,w}_1\hat{Y}_1^{z,w}\hat{X}^{z,w}_{N} \\
            \mathbf{O} &  \hat{X}^{z,w}_2 & \cdots & \mathbf{O} & \hat{X}^{z,w}_2\hat{Y}_2^{z,w}\hat{X}^{z,w}_{N} \\
            \vdots & \vdots & \ddots & \vdots & \vdots \\
            \mathbf{O} & \mathbf{O} & \cdots & \hat{X}^{z,w}_{N-1} & \hat{X}^{z,w}_{N-1}\hat{Y}_{N-1}^{z,w}\hat{X}^{z,w}_{N} \\
            \mathbf{O} & \mathbf{O} & \cdots & \mathbf{O} & \hat{X}^{z,w}_{N}
        \end{pmatrix}}\\
    & =     {\small \begin{pmatrix}
            \hat{X}^{z,w}_1 (\frac{1-e^{-T}}{1-e^{-(T-t)}})^{\hat{D}_1^{z,w}} \hat{X}^{z,w}_1 &  \cdots & \mathbf{O} &  \hat{M}^{z,w}_1 \\
            \vdots & \ddots & \vdots & \vdots \\
            \mathbf{O} & \cdots &  \hat{X}^{z,w}_{N-1} (\frac{1-e^{-T}}{1-e^{-(T-t)}})^{\hat{D}_{N-1}^{z,w}} \hat{X}^{z,w}_{N-1} & \hat{M}^{z,w}_{N-1} \\
            \mathbf{O}  & \cdots & \mathbf{O} & \hat{X}^{z,w}_{N} (\frac{1-e^{-T}}{1-e^{-(T-t)}})^{\hat{D}_{N}^{z,w}} \hat{X}^{z,w}_{N}
        \end{pmatrix}}.
\end{align*}
For each $i=1,2\cdots, N-1$
\begin{align*}
    &\hat{X}^{z,w}_i (\frac{1-e^{-T}}{1-e^{-(T-t)}})^{\hat{D}_1^{z,w}} \hat{X}^{z,w}_i = \begin{pmatrix}
            1 & \cdots & 0 & \big( 1-(\frac{1-e^{-(T-t)}}{1-e^{-T}})^{c_i} \big) \frac{p^{z,w}(i,1)}{\sum_l p^{z,w}(i,l)} \\
            \vdots & \ddots & \vdots & \vdots \\
            0 & \cdots & 1 & \big( 1-(\frac{1-e^{-(T-t)}}{1-e^{-T}})^{c_i} \big) \frac{p^{z,w}(i,N-1)}{\sum_l p^{z,w}(i,l)} \\
            0  & \cdots & 0 & (\frac{1-e^{-(T-t)}}{1-e^{-T}})^{c_i}
        \end{pmatrix},\\
      &\hat{M}^{z,w}_i \coloneqq   \hat{X}^{z,w}_i (\frac{1-e^{-T}}{1-e^{-(T-t)}})^{\hat{D}_i^{z,w}}\hat{X}^{z,w}_i\hat{Y}_i^{z,w}\hat{X}^{z,w}_{N}-\hat{Y}_i^{z,w}(\frac{1-e^{-T}}{1-e^{-(T-t)}})^{\hat{D}_N^{z,w}}\hat{X}^{z,w}_{N} \\
      & = \begin{pmatrix}
            \big(1-(\frac{1-e^{-t}}{1-e^{-T}})^{d_1}\big)\frac{p^{z,w}(i,1)}{\sum_l p^{z,w}(l,1)} & \cdots & 0 & \beta_{i,1} \mc{Z} p^{z,w}(i,1) \\
            \vdots & \ddots & \vdots & \vdots \\
            0 & \cdots & \big(1-(\frac{1-e^{-t}}{1-e^{-T}})^{d_{N-1}}\big)\frac{p^{z,w}(i,N-1)}{\sum_l p^{z,w}(l,N-1)} & \beta_{i,N-1} \mc{Z} p^{z,w}(i,N-1) \\
            0  & \cdots & 0 & \beta_{i,N}{\mc{Z} }\sum_l p^{z,w}(i,l)
        \end{pmatrix},
\end{align*}
where for each $i,j=1,2,\cdots, N-1$,
\begin{align}\label{eq:coeff beta}
    \beta_{i,j} \coloneqq & -\frac{1}{c_i(\lambda_{NN}^{z,w}+c_i)}\big( 1-(\frac{1-e^{-(T-t)}}{1-e^{-T}})^{c_i} \big)-\frac{1}{d_j(\lambda_{NN}^{z,w}+d_i)}\big( 1-(\frac{1-e^{-(T-t)}}{1-e^{-T}})^{d_j} \big) \nonumber \\
    & -\frac{1}{\lambda_{NN}^{z,w}}\big( \frac{1}{\lambda_{NN}^{z,w}+c_i} + \frac{1}{\lambda_{NN}^{z,w}+d_j} \big) \big( 1-(\frac{1-e^{-(T-t)}}{1-e^{-T}})^{-\lambda_{NN}^{z,w}} \big) , \\
    \beta_{i,N}\coloneqq &-\frac{1}{c_i(\lambda_{NN}^{z,w}+c_i)} \big( (\frac{1-e^{-(T-t)}}{1-e^{-T}})^{c_i} -(\frac{1-e^{-(T-t)}}{1-e^{-T}})^{-\lambda_{NN}^{z,w}} \big).
\end{align}
For $i=N$, we have 
\begin{align*}
     &\hat{X}^{z,w}_N (\frac{1-e^{-T}}{1-e^{-(T-t)}})^{\hat{D}_N^{z,w}} \hat{X}^{z,w}_N \\
     &\quad=  \begin{pmatrix}
            (\frac{1-e^{-(T-t)}}{1-e^{-T}})^{d_1} & \cdots & 0 & \beta_{N,1} \mc{Z}\sum_l p^{z,w}(l,1) \\
            \vdots & \ddots & \vdots & \vdots \\
            0 & \cdots & (\frac{1-e^{-(T-t)}}{1-e^{-T}})^{d_{N-1}} & \beta_{N,N-1}\mc{Z}\sum_l p^{z,w}(l,N-1) \\
            0  & \cdots & 0 & \big(\frac{1-e^{-(T-t)}}{1-e^{-T}}\big)^{-\lambda_{NN}^{z,w}}
        \end{pmatrix}
\end{align*}
where for each $j=1,2,\cdots, N-1$,
\begin{align}\label{eq:coofficient beta 2}
    \beta_{N,j}\coloneqq -\frac{1}{d_j(\lambda_{NN}^{z,w}+d_j)}  \big( (\frac{1-e^{-(T-t)}}{1-e^{-T}})^{d_j}- (\frac{1-e^{-(T-t)}}{1-e^{-T}})^{-\lambda_{NN}^{z,w}} \big).
\end{align}
Now, we can apply the initial condition $q_0^{z,w}=\delta_{NN}$ to compute $q_t^{z,w}$.
\begin{align*}
    &q_t^{z,w}=\exp\big( \int_0^{t} \hat{Q}_{T-s}^{z,w} \dee s \big) q_0^{z,w} \\
    & =  {\small \begin{pmatrix}
            \hat{X}^{z,w}_1 (\frac{1-e^{-T}}{1-e^{-(T-t)}})^{\hat{D}_1^{z,w}} \hat{X}^{z,w}_1 &  \cdots & \mathbf{O} &  \hat{M}^{z,w}_1 \\
            \vdots & \ddots & \vdots & \vdots \\
            \mathbf{O} & \cdots &  \hat{X}^{z,w}_{N-1} (\frac{1-e^{-T}}{1-e^{-(T-t)}})^{\hat{D}_{N-1}^{z,w}} \hat{X}^{z,w}_{N-1} & \hat{M}^{z,w}_{N-1} \\
            \mathbf{O}  & \cdots & \mathbf{O} & \hat{X}^{z,w}_{N} (\frac{1-e^{-T}}{1-e^{-(T-t)}})^{\hat{D}_{N}^{z,w}} \hat{X}^{z,w}_{N}
        \end{pmatrix}}\begin{pmatrix}
            \mathbf{0} \\
            \vdots \\
            \mathbf{0} \\
            \vec e _N
        \end{pmatrix}\\
        & = \begin{pmatrix}
            \hat{M}_1^{z,w}(:,N) \\
            \vdots \\
            \hat{M}_{N-1}^{z,w}(:,N) \\
            (\hat{X}^{z,w}_{N} (\frac{1-e^{-T}}{1-e^{-(T-t)}})^{\hat{D}_{N}^{z,w}} \hat{X}^{z,w}_{N})(:,N)
        \end{pmatrix}.
\end{align*}
Therefore, for all $i,j=1,2\cdots, N-1$,
\begin{align*}
    q_t^{z,w} (i,j) = \hat{M}_i^{z,w}(j,N) = \beta_{i,j} \mc{Z}p^{z,w}(i,j).
\end{align*}
For $j=N,i=1,2,\cdots, N-1$,
\begin{align*}
     q_t^{z,w} (i,N) = \hat{M}_i^{z,w}(N,N) = \beta_{i,N} \mc{Z} \sum_l p^{z,w}(i,l).
\end{align*}
For $i=N,j=1,2,\cdots, N-1$,
\begin{align*}
    q_t^{z,w}(N,j) = \beta_{N,j} \mc{Z} \sum_l p^{z,w}(l,j).
\end{align*}
Last, for $i=j=N$,
\begin{align*}
    q_t^{z,w}(N,N) = (\frac{1-e^{-(T-t)}}{1-e^{-T}})^{-\lambda_{NN}^{z,w}} .
\end{align*}
Last, Theorem \ref{thm:2d reverse} follows from the following definition of $\alpha_t\coloneqq \beta_{x_1,x_2}$ for all $x\in \{1,2,\cdots, N\}^2$.
\end{proof}
\subsection{Convergence rates for $D=2$}\label{sec:convergence rate 2d}
\begin{proof}[Proof of Proposition \ref{prop:convergence rate 2d}] For simplicity, we denote $\lambda\coloneqq \lambda_{N,N}^{z,w}$. According to Theorem \ref{thm:2d reverse} and Remark \ref{rem:sampled distribution 2d}, the total variation distance can be computed as
{\small
\begin{align*}
    &\quad\TV(q_t^{z,w},q_T^{z,w})\\
    & = \frac{1}{2}\sum_{x\in S} |q_t^{z,w}(x)-q_T^{z,w}(x)| \\
    & = \frac{1}{2}\sum_{x\neq (N,N)}|\alpha_t(x)-\alpha_T(x)|\mc{Z}p^{z,w}(x) + \frac{1}{2}|\alpha_t(N,N)-\alpha_T(N,N)| \\
    & = \frac{1}{2}\mc{Z}\sum_{x_1,x_2\neq N} p^{z,w}(x) \bigg( |\frac{1}{c_{x_1}+\lambda} (\frac{1}{c_{x_1}}r(t)^{c_{x_1}} + \frac{1}{\lambda}r(t)^{-\lambda}) | + |\frac{1}{d_{x_2}+\lambda} (\frac{1}{d_{x_2}}r(t)^{d_{x_2}} + \frac{1}{\lambda}r(t)^{-\lambda}) | \bigg) \\
    &\quad  + \frac{1}{2}\mc{Z}\sum_{x_1\neq N} p^{z,w}(x_1) | \frac{1}{c_{x_1}(\lambda+c_{x_1})}\big( r(t)^{c_{x_1}}-r(t)^{-\lambda} \big) | \\
    &\quad  + \frac{1}{2}\mc{Z}\sum_{x_2\neq N} p^{z,w}(x_2) | \frac{1}{d_{x_2}(\lambda+d_{x_2})}\big( r(t)^{d_{x_2}}-r(t)^{-\lambda} \big) |  + \frac{1}{2} r(t)^{-\lambda}\\
    &\coloneqq \mathrm{\rom{1}}+\mathrm{\rom{2}}+\mathrm{\rom{3}}+\mathrm{\rom{4}},
\end{align*}
}
where $r(t)\coloneqq \tfrac{1-e^{-(T-t)}}{1-e^{-T}}\in (0,1)$. Next, we bound each term respectively. 

For $\mathrm{\rom{1}}$, we bound the two terms inside using the following properties of function $h_1: y\in [1,\infty) \mapsto y^{-1}r(t)^y $: $h_1'(y)<0$ and $h_1''(y)>0$ for all $y$. Notice that $c_l,d_l\ge 1$ and $-\lambda= \tfrac{\mc{Z}}{c_N}+\tfrac{\mc{Z}}{d_N}=\sum_{l_1} p(l_1)^{-w}p(l_1|z)^{1+w}+\sum_{l_2} p(l_2)^{-w}p(l_2|z)^{1+w}=\exp(w\mc{D}_{1+w}(p_1(\cdot|z)|p_1(\cdot)))+\exp(w\mc{D}_{1+w}(p_2(\cdot|z)|p_2(\cdot)))\ge 2$, where we use $\mu_i$ to represent the $i^{th}$ marginal of $\mu$. Therefore, we have
\begin{align*}
    |\frac{1}{c_{x_1}+\lambda} (\frac{1}{c_{x_1}}r(t)^{c_{x_1}} + \frac{1}{\lambda}r(t)^{-\lambda}) | & = \big| \frac{h(c_{x_1})-h(-\lambda)}{c_{x_1}-(-\lambda)} \big| = -h'(c_{x_1}^*)= \frac{1}{c_{x_1}^*}r(t)^{c_{x_1}^*}\big(  \frac{1}{c_{x_1}^*}-\ln r(t)\big) \\
    |\frac{1}{d_{x_2}+\lambda} (\frac{1}{d_{x_2}}r(t)^{d_{x_2}} + \frac{1}{\lambda}r(t)^{-\lambda}) | & = \big| \frac{h(d_{x_2})-h(-\lambda)}{d_{x_2}-(-\lambda)} \big| = -h'(d_{x_2}^*)= \frac{1}{d_{x_2}^*}r(t)^{d_{x_2}^*}\big(  \frac{1}{d_{x_2}^*}-\ln r(t)\big),
\end{align*}
where $c_{x_1}^*$ is between $c_{x_1}$ and $-\lambda$, $d_{x_2}^*$ is between $d_{x_2}$ and $-\lambda$.  

For $\mathrm{\rom{2}}$ and $\mathrm{\rom{3}}$, we bound the two terms using the property of the function $h_2:y \in[1,\infty)\mapsto r(t)^y$: $h_2'(y)<0$ and $h_2''(y)>0$ for all $y$. Again, due to the fact that $c_l,d_l\ge 1$ for all $l$ and $-\lambda\ge 2$, we have
\begin{align*}
    | \frac{1}{c_{x_1}(\lambda+c_{x_1})}\big( r(t)^{c_{x_1}}-r(t)^{-\lambda} \big) | & =\frac{1}{c_{x_1}}\big| \frac{h_2(c_{x_1})-h_2(-\lambda)}{c_{x_1}-(-\lambda)} \big| = -\frac{1}{c_{x_1}} h_2'(c_{x_1}') = -\frac{1}{c_{x_1}}r(t)^{c_{x_1}'}\ln r(t) \\
    | \frac{1}{d_{x_2}(\lambda+d_{x_2})}\big( r(t)^{d_{x_2}}-r(t)^{-\lambda} \big) | & =\frac{1}{d_{x_2}}\big| \frac{h_2(d_{x_2})-h_2(-\lambda)}{d_{x_2}-(-\lambda)} \big| = -\frac{1}{d_{x_2}} h_2'(d_{x_2}') = -\frac{1}{d_{x_2}}r(t)^{d_{x_2}'}\ln r(t),
\end{align*}
where $c_{x_1}'$ is between $c_{x_1}$ and $-\lambda$, $d_{x_2}'$ is between $d_{x_2}$ and $-\lambda$.

Last, according to the expression of $c_l,d_l$ in \eqref{eq:coefficient c d definition}, we have
    \begin{align*}
        c_l &= \sum_{l'} \big( \frac{p(x_2=l'|x_1=l,z)}{p(x_2=l'|x_1=l)} \big)^{w+1} p(x_2=l'|x_1=l)=\exp\bigg( w \D_{1+w}(p(\cdot|x_1=l,z) |p(|x_1=l) )  \bigg),\\
        d_l &= \sum_{l'} \big( \frac{p(x_1=l'|x_2=l,z)}{p(x_1=l'|x_2=l)} \big)^{w+1} p(x_1=l'|x_2=l)=\exp\bigg( w \D_{1+w}(p(\cdot|x_2=l,z) |p(|x_2=l) )  \bigg).
    \end{align*}
For $w\gg 1$, since $c_{x_1}^*,c_{x_1}'$ are between between $c_{x_1}$ and $-\lambda$ and $\ln c_{x_1}=\Theta(w),\ln (-\lambda)=\Theta(w)$, we have $c_{x_1}^*=\Theta(w),c_{x_1}'=\Theta(w)$ for all $x_1$. For the same reason, $d_{x_2}^*=\Theta(w),d_{x_2}'=\Theta(w)$ for all $x_2$. Therefore, if we focus on the order of $w$ for $w\gg 1$ and preserve the leading order terms in $\TV(q_t^{z,w},q_T^{z,w})$, we have
\begin{align*}
    \TV(q_t^{z,w},q_T^{z,w}) &=  \mc{Z} \exp(-\Theta(w)) r(t)^{\exp(\Theta(w))} + r(t)^{\exp(\Theta(w))} \\
    &= \exp(\Theta(w))\exp(-\Theta(w)) r(t)^{\exp(\Theta(w))} + r(t)^{\exp(\Theta(w))},
\end{align*}
where the second identity follows from Remark \ref{rem:1d convergence rate}.
\end{proof}
\subsection{Sampled distributions for $D=2$}\label{append:explicit expression asymp w 2d}
\begin{proof}[Proof of Proposition \ref{prop:2d sampled distribution property}] According to \eqref{eq:sampled distribution 2d} and Assumption \ref{assup:full distribution}, we have
\begin{align*}
    q_T^{z_1,w}(x) &\propto (1/c_{x_1}+1/d_{x_2}) p(x)^{-w} p(x|z_1)^{1+w} \\
    & \propto \left( \underbrace{\big( \frac{a_1 p(x_1|z_1)}{\sum_k a_k p(x_1|z_k)}  \big)^w}_{\text{\rom{1}}} +\underbrace{\big( \frac{a_1 p(x_2|z_1)}{\sum_k a_k p(x_2|z_k)}  \big)^w}_{\text{\rom{2}}} \right) \underbrace{\big(\frac{a_1 p(x|z_1)}{\sum_k a_k p(x|z_k)}\big)^w}_{\text{\rom{3}}} p(x|z_1).
\end{align*}
Each of the terms \rom{1}, \rom{2} and \rom{3} is within the range $[0,1]$ and exponentially dependent to $w$. Therefore, the values of \rom{1}, \rom{2} and \rom{3} affect the sampled distribution significantly when $w$ is large. By evaluating \rom{1}, \rom{2} and \rom{3} in different regions depending on relations between the marginal supports, we express $q_T^{z_1,w}$ as presented in Proposition \ref{prop:2d sampled distribution property}. The last statement in Proposition \ref{prop:2d sampled distribution property} follows from the \textbf{discussion on $q^{z_1,\infty}(\cdot|z_1)$} in this section.
\end{proof}
\noindent\textbf{Effect of guidance on sampled distributions.} According to Proposition \ref{prop:2d sampled distribution property}-(2), $q_T^{z_1,w}$ is defined with different weight-adjustment in $5$ different type of regions. For simplicity, we denote them as 
\begin{align*}
& \mc{R}_1\coloneqq \{x | x\in \mc{X}_1, x_1\in \mc{X}_{1,1}\setminus S_{1,1}, x_2\in \mc{X}_{1,2}\setminus S_{1,2}\} ,\\
&\mc{R}_{2,i}\coloneqq \{x | x\in \mc{X}_1, x_i\in S_{1,i}, x_{\setminus i}\in \mc{X}_{1,\setminus i}\setminus S_{1,\setminus i}\},\qquad i=1,2,\\
&\mc{R}_3\coloneqq \{ x | x\in \mc{X}_1\setminus S_1, x_1\in S_{1,1}, x_2\in S_{1,2}  \},\\
&\mc{R}_4 \coloneqq S_1 .    
\end{align*} 
The above sets reflect different level of ``privacy'' of class $z_1$. $\mc{R}_4$ is the shared region with other classes. $\mc{R}_{1},\mc{R}_{2,i},\mc{R}_3$ are not shared with other classes. But $\mc{R}_3$ has both marginals shared with other classes and $\mc{R}_{2,i}$ has one of the marginals shared with other classes. $\mc{R}_1$ is the most private set in class $z_1$, with no intersection with other classes even for marginals. If we denote the associated weights (before normalization) on different regions by $A^{z_1,w}$ with the corresponding sub-index:
\begin{align*}
    &A^{z_1,w}=2,\\
    &A^{z_1,w}_{2,i}= 1+ (\frac{a_1 p(x_i|z_1)}{\sum_{k\in I_{1,i}} a_k p(x_i|z_k) })^{w}, \qquad i=1,2, \\
    &A^{z_1,w}_3 =  \sum_{i=1}^2(\frac{a_1 p(x_i|z_1)}{\sum_{k\in I_{1,i}} a_k p(x_i|z_k) })^w , \\
    & A^{z_1,w}_4 = \big( \sum_{i=1}^2(\frac{a_1 p(x_i|z_1)}{\sum_{k\in I_{1,i}} a_k p(x_i|z_k) } )^w\big) \big(\frac{a_1 p(x|z_1)}{\sum_{k\in I_{1}} a_k p(x|z_k)}\big)^w,
\end{align*}
we can notice that for all $w\ge 0$, $A^{z_1,w}_1\ge A^{z_1,2}_{2,i} \ge A^{z_1,w}_3\ge A^{z_1,w}_4$. This reflects that \textit{the sampled distribution from the discrete diffusion with CFG can leverage the geometric information of the full data distribution: the sampled distribution puts larger weights on more private regions of class $z_1$}. We conjecture that the above fact is also true in high dimension:
\begin{conjecture}\label{conject:high dimension} For any $D\ge 2$, discrete diffusion with CFG leverages the geometric information from the full data distribution. More specifically, under Assumption \ref{assup:full distribution}, the sampled distribution $q_T^{z_1,w}$ adapts the class distribution $p(\cdot|z_1)$ by putting larger weights on more private regions of class $z_1$, where those regions with different privacy are defined based on the support sets and their marginals. 
\end{conjecture}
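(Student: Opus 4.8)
\noindent\textbf{A route towards Conjecture~\ref{conject:high dimension}.} The plan is to run, for general $D$, the same three–stage program used for $D=2$ in Appendix~\ref{append:2d density}. The organizing principle is that the state space is graded by the set of unmasked coordinates $A(x)\coloneqq\{i:x_i<N\}$: by \eqref{eq:guided reverse diffusion matrix} the guided dynamics only moves mass from $x$ to states $y$ with $A(y)=A(x)\cup\{i\}$ for a single new index $i$, so, in the basis grouped by $A(x)$, the matrix $\hat Q^{z,w}_t$ is block triangular along the Boolean lattice of subsets of $\{1,\dots,D\}$ ($2^D$ blocks, the block of $A$ indexed by the unmasked values $x_A\in\{1,\dots,N-1\}^{|A|}$). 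This is precisely the structure exploited in the proofs of Proposition~\ref{prop:reverse rate matrix 2d} and Theorem~\ref{thm:2d reverse}, now at every scale.

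First I would derive the rate matrix: combining \eqref{eq:guided reverse diffusion matrix} with Proposition~\ref{prop:forward density evolution} exactly as in the proof of Proposition~\ref{prop:reverse rate matrix 2d}, each nonzero off–diagonal entry $\hat Q^{z,w}_t(y,x)$ with $A(y)=A(x)\cup\{i\}$ equals $\tfrac{e^{-t}}{1-e^{-t}}$ times a ratio of partially marginalized tilted masses $p(y_{A(y)})^{-w}p(y_{A(y)}\mid z)^{1+w}\big/p(y_{A(x)})^{-w}p(y_{A(x)}\mid z)^{1+w}$. Next, I would diagonalize $\hat Q^{z,w}$ blockwise by the lattice recursion generalizing the $i=N$ case of the proof of Theorem~\ref{thm:2d reverse}: the eigenvalues are $0$ (with high multiplicity) together with escape rates $-c_{A,x_A}$, one per $A\subsetneq\{1,\dots,D\}$ and configuration $x_A$, which should take the form
\[
c_{A,x_A}=\sum_{i\notin A}\exp\!\big(w\,\D_{1+w}\big(p(x_i\mid x_A,z)\,\big\|\,p(x_i\mid x_A)\big)\big)\ \ge\ 1 ,
\]
generalizing the coefficients $c_i,d_j$ of \eqref{eq:coefficient c d definition} (and, for $A=\emptyset$, the rate $-\lambda^{z,w}_{NN}$ of Theorem~\ref{thm:2d reverse}), with eigenvectors obtained by back–substituting along the lattice, as in the construction \eqref{eq:eigenvector 1}–\eqref{eq:eigenvector 2}. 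Using $\int_0^t\tfrac{e^{-(T-s)}}{1-e^{-(T-s)}}\,\dee s=\ln\tfrac{1-e^{-T}}{1-e^{-(T-t)}}$, inserting $q_0^{z,w}=\delta_{[M]}$ and setting $t=T$, this should give $q_T^{z,w}(x)\propto W_D(x)\,p^{z,w}(x)$ on $\{1,\dots,N-1\}^D$, where $W_D(x)$ is a positive combination — over the unmasking orders $\emptyset=A_0\subset\cdots\subset A_D=\{1,\dots,D\}$ — of products of reciprocals of the $c_{A_k,x_{A_k}}$, reducing to $\tfrac{1/c_{x_1}+1/d_{x_2}}{1/c_N+1/d_N}$ of \eqref{eq:sampled distribution 2d} when $D=2$.

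The last stage is to insert the mixture $p=\sum_k a_k p(\cdot\mid z_k)$ and evaluate $W_D(x)\,p^{z,w}(x)/p(x\mid z_1)$ region by region, as in the proof of Proposition~\ref{prop:2d sampled distribution property}: every factor then becomes a power of an overlap ratio $\big(a_1p(x_B\mid z_1)/\sum_{k\in I_{1,B}}a_kp(x_B\mid z_k)\big)^{w}\in(0,1]$ — with $B$ the full index set for the $p^{z,w}$ factor and proper marginals for the $c_{A_k,x_{A_k}}$ factors — which equals $1$ exactly when the $B$–marginal of $\mc{X}_1$ is disjoint from the $B$–marginals of all other classes. Declaring a region $\mc{R}\subset\mc{X}_1$ at least as private as $\mc{R}'$ when the multiset of marginal projections on which $\mc{R}$ meets other classes is contained in that for $\mc{R}'$ defines a partial order; since moving to a less private region only appends extra sub–one factors into every term of $W_D$, the normalized weight $q_T^{z_1,w}(x)/p(x\mid z_1)$ is monotone nonincreasing along this order for all $w\ge 0$, the maximally private region carries the largest weight (the weight $2$ of $\mc R_1$ when $D=2$), and as $w\to\infty$ all mass is driven onto the fully non–overlapping part of $\mc{X}_1$ — exactly the conjecture, recovering Proposition~\ref{prop:2d sampled distribution property}, Remark~\ref{rem:guidance effect 2d} and the $5$D experiment.

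The main obstacle is the middle stage. Already at $D=2$ the block diagonalization split into several eigenvector cases, and for general $D$ both the number of blocks and the depth of the nested divided–difference expressions defining the $c_{A,x_A}$ grow with $D$, so extracting a clean closed form for $W_D$ demands careful inductive bookkeeping over the subset lattice, with special attention to degeneracies when distinct $c_{A,x_A}$ coincide (paralleling the nullity caveats in Remark~\ref{eq:asymp distribution w inf 2d}). For the purely qualitative statement of the conjecture one can probably bypass the explicit solution: it suffices to prove that $q_T^{z,w}(x)\propto W_D(x)\,p^{z,w}(x)$ with $W_D(x)$ \emph{some} positive combination of products of overlap factors in $(0,1]$ indexed by the marginal/full support overlaps, after which the monotonicity in the privacy order follows from the elementary observation that inserting another factor in $(0,1]$ can only shrink a product — with no evaluation of the combinatorial coefficients required.
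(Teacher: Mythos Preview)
The statement is a \emph{conjecture}: the paper does not prove it. It is stated immediately after the $2$D analysis in Appendix~\ref{append:explicit expression asymp w 2d} as an extrapolation of the weight ordering $A^{z_1,w}_1\ge A^{z_1,w}_{2,i}\ge A^{z_1,w}_3\ge A^{z_1,w}_4$, with the $5$D numerical experiments in Appendix~\ref{append:numerical examples} offered as empirical support. There is therefore no ``paper's own proof'' to compare against; your proposal is a plan of attack on an open problem.

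That said, your program is precisely the natural continuation of the paper's methodology: lift the block computations of Proposition~\ref{prop:reverse rate matrix 2d} and Theorem~\ref{thm:2d reverse} from the two-level lattice $\{\emptyset,\{1\},\{2\},\{1,2\}\}$ to the full Boolean lattice of unmasked index sets, then repeat the region-by-region decomposition of Proposition~\ref{prop:2d sampled distribution property}. Your identification of the escape rates $c_{A,x_A}$ as R\'enyi-divergence exponentials is correct and matches the paper's $c_i,d_j,-\lambda_{NN}^{z,w}$, and your claim that $W_D(x)$ is a sum over unmasking orders of products of reciprocal escape rates does reduce to $\tfrac{1/c_{x_1}+1/d_{x_2}}{1/c_N+1/d_N}$ when $D=2$ (one can verify this directly from $\alpha_T(x)$ in Theorem~\ref{thm:2d reverse}).

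Two cautions on the final monotonicity step. First, the overlap factors that drive the ordering in Proposition~\ref{prop:2d sampled distribution property} arise from the \emph{combination} of $W_D(x)$ with $p^{z,w}(x)$, not from $W_D$ alone: the factor $\big(a_1 p(x\mid z_1)/\sum_{k\in I_1}a_k p(x\mid z_k)\big)^w$ comes from $p^{z,w}$, while the marginal factors come from the $1/c_{A,x_A}$ terms. Your description conflates these slightly; the clean statement is that each \emph{term} in the path expansion of $W_D(x)\,p^{z,w}(x)/p(x\mid z_1)$ is a product of such overlap ratios indexed by the chain $\emptyset\subset A_1\subset\cdots\subset\{1,\dots,D\}$. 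Second, your privacy partial order (``the multiset of overlapping marginal projections is contained'') needs to be made precise enough that comparability of two regions guarantees termwise domination across \emph{all} $D!$ paths simultaneously --- in $2$D this is automatic because there are only two paths and the regions are nested in the obvious way, but for general $D$ two regions may overlap other classes on incomparable collections of marginals, and then the weight comparison is not immediate from the ``append a sub-one factor'' argument. Getting this right is where the real content of a proof would lie.
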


\noindent\textbf{Discussion on $q^{z_1,\infty}(\cdot|z_1)$.} Now we look at the structure of the sampled distribution as $w\to\infty$ in further detail. 
According to the expression of weights $A^{z_1,w}$, since $1\in I_{1,i}$ for $i=1,2$ and $1\in I_1$, the rational factors inside the parentheses is in $(0,1]$. In particular, if $S_1\neq \emptyset$, i.e., class $z_1$ has intersected domain with other classes, $|I_1|\ge 2$. Hence $\tfrac{a_1 p(x|z_1)}{\sum_{k\in I_{1}} a_k p(x|z_k)}\in (0,1)$. Therefore, as $w\to \infty$, we have
\begin{align*}
    A^{z_1,\infty}_1 =2 ,\quad A^{z_1,\infty}_{2,i}\in \{ 1,2 \}, \quad  A^{z_1,\infty}_{3}\in \{0, 1,2 \},\quad A^{z_1,\infty}_{4}=0.
\end{align*}
Then, we have $q^{z_1,\infty}(\cdot|z_1)|_{A_4^{z_1,\infty}}=0$, i.e., $\mathrm{Supp}(q^{z_1,\infty}(\cdot|z_1))\subset \mc{X}_1\setminus S_1$. 

It is worth noting that it is possible that some sets among $\mc{R}_1,\mc{R}_{2,i},\mc{R}_3$ could be empty. Therefore, for a general data distribution $p$ satisfying Assumption \ref{assup:full distribution}, in order to derive $q^{z_1,\infty}(\cdot|z_1)$ completely, we need to first identity whether $\mc{R}_1,\mc{R}_{2,i},\mc{R}_3$ are non-empty or not, and then compute the associated limiting weights on the non-empty regions. In the following, we will use a simple example to illustrate this procedure.

\noindent\underline{\textit{An example with $D=2, N=5$.}} We consider the data distribution $p$ is a mixture of two classes with equal weights: $p(x)=\tfrac{1}{2}p(x|z_1)+\tfrac{1}{2}p(x|z_2)$ for all $x\in \{1,2,3,4,5\}^2$ with $5$ being the masked state. The heat maps for $p(\cdot|z_1), p(\cdot|z_2)$ and $p$ are given in Figure \ref{fig:heat map setting}.
\begin{figure}[h]
    \centering    \includegraphics[width=\textwidth]{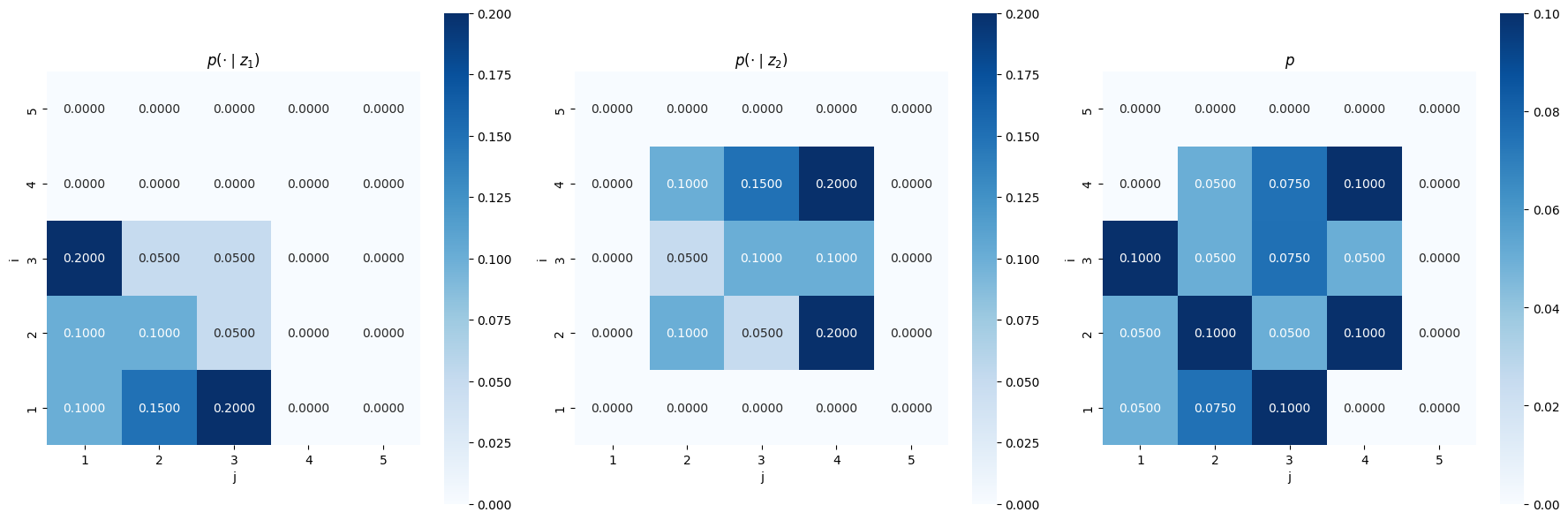}
    \caption{heat maps for $p(\cdot|z_1), p(\cdot|z_2)$ and $p$.}
    \label{fig:heat map setting}
\end{figure}
We can distinguish the regions with different level of privacy based on our formulas. As shown in Figure \ref{fig:regions}, we notice that $\mc{R}_3= \emptyset$ and $\mc{R}_1,\mc{R}_{2,1},\mc{R}_{2,2},\mc{R}_4$ are identified with different colors.
\begin{figure}[h]
    \centering
    \includegraphics[width=0.5\linewidth]{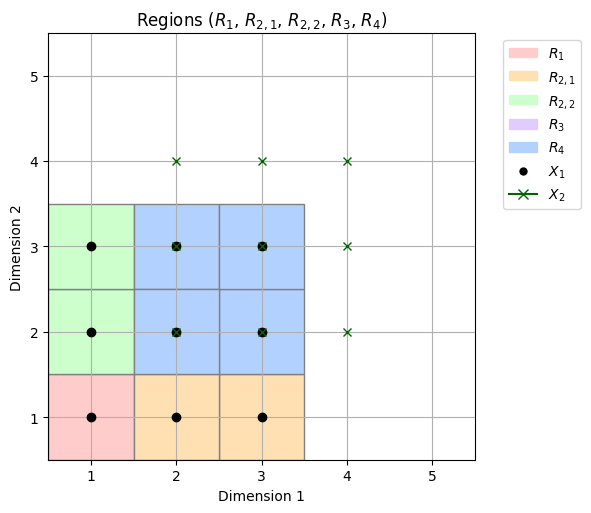}
    \caption{identification of different regions.}
    \label{fig:regions}
\end{figure}
Based on the information of $p$, we can compute the limiting weights as $w\to\infty$. We have
\begin{align*}
     A^{z_1,\infty}_1 =2 ,\quad A^{z_1,\infty}_{2,1}=1_{x_1=1}, \quad  A^{z_1,\infty}_{2,2}=1_{x_2=1},\quad A^{z_1,\infty}_{4}=0.
\end{align*}
Therefore, the sampled distribution $q_T^{z_1,\infty}(\cdot|z_1)$ adapts $p(\cdot|z_1)$ by putting these weights on the 4 regions respectively, i.e.,
\begin{align*}
    q_{T}^{z_1,\infty}(x|z_1)\propto \left\{
    \begin{aligned}
        & 2 p(x|z_1) , \qquad & x\in \mc{R}_1=\{(1,1)\} ,\\
        &  p(x|z_1), \qquad & x\in \mc{R}_{2,1}=\{(2,1),(3,1))\},\\
        &  p(x|z_1), \qquad & x\in \mc{R}_{2,2}=\{(1,2),(1,3))\},\\
        & 0,\qquad &\text{otherwise},
    \end{aligned}\right.
\end{align*}
which implies that $q_T^{z_1,\infty}(1,1|z_1)= q_T^{z_1,\infty}(1,3|z_1)=q_T^{z_1,\infty}(3,1|z_1)=4/17$, $q_T^{z_1,\infty}(1,2|z_1)=q_T^{z_1,\infty}(2,1|z_1)=3/17$ and $q_T^{z_1,\infty}(x_1,x_2|z_1)=0$ otherwise. In Figure \ref{fig:sampled distributions guidance}, we present the heatmaps for the class distribution of $z_1$, the tilted distributions and the sampled distributions with $w=1,5,15$. We can observe the following facts that match our theory.
\begin{itemize}
    \item [(1)] the sampled distribution deviates from the tilted distribution for all $w>0$.
    \item [(2)] the effects of guidance differ in different regions: as $w$ increases, the probability mass decreases in $S_1=\{(2,2),(2,3),(3,2), (3,3)\}$; the probability mass increases in regions $\mc{R}_{2,1}=\{(2,1),(3,1)\}$ and $\mc{R}_{2,2}=\{(1,2),(1,3)\}$ at the same rate; the probability mass increases in the region $\mc{R}_1=\{1,1\}$ at the largest rate. 
    \item [(3)] for large guidance ($w=15$), the sampled distribution $q_T^{z_1,w}$ can be approximately understood as $q_T^{z_1,\infty}(\cdot|z_1)$. The last plot in Figure \ref{fig:sampled distributions guidance} matches our computation for $q_T^{z_1,\infty}(\cdot|z_1)$.
    \item [(4)] for small guidance ($w=1$), the effect of guidance is also small. The sampled distribution $q_T^{z_1,w}$ deviates a little bit from the target distribution $p(\cdot|z_1)$ in the way we described in (2). 
\end{itemize}
In practice, people observe that the optimal guidance is usually positive but small (of order $\Theta(1)$). Our theory and numerical observations bring insights in understanding the optimal guidance. Roughly speaking, if we can show that the effects of guidance presented above actually compensate the effect of score approximation, by quantifying the inductive bias in learning the scores, we can rigorously analyze the optimal guidance in the CFG setting. This will be left as an interesting future work to explore.
\begin{figure}[H]
    \centering
    \includegraphics[width=\linewidth]{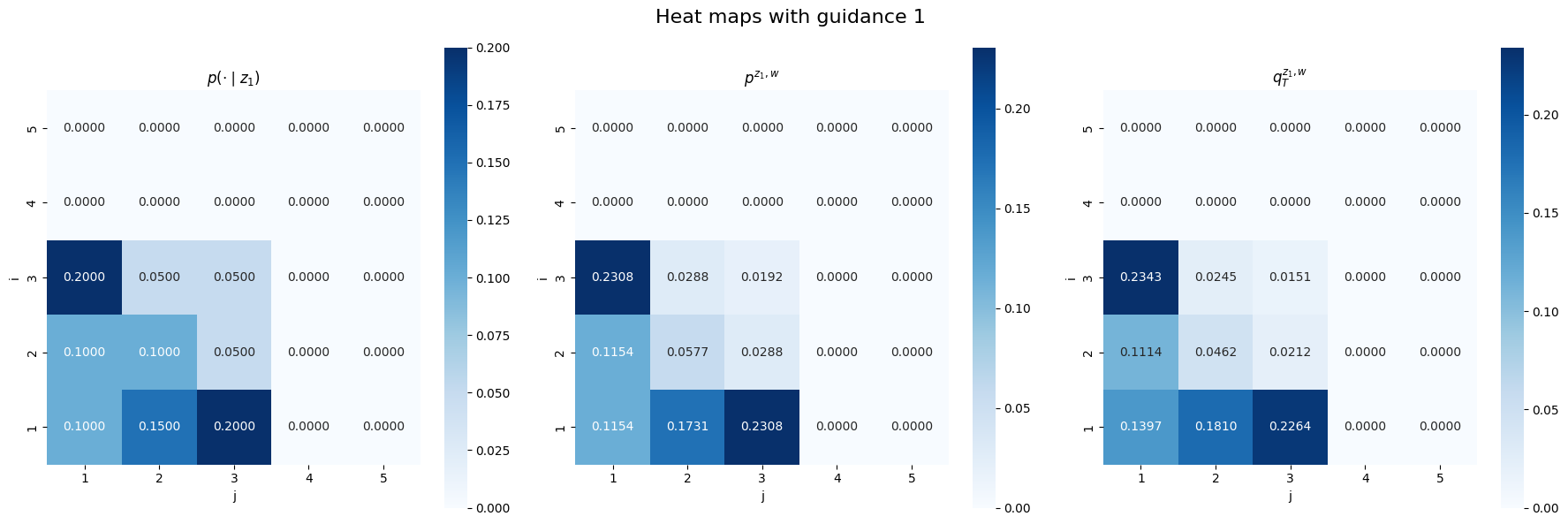}
    \includegraphics[width=\linewidth]{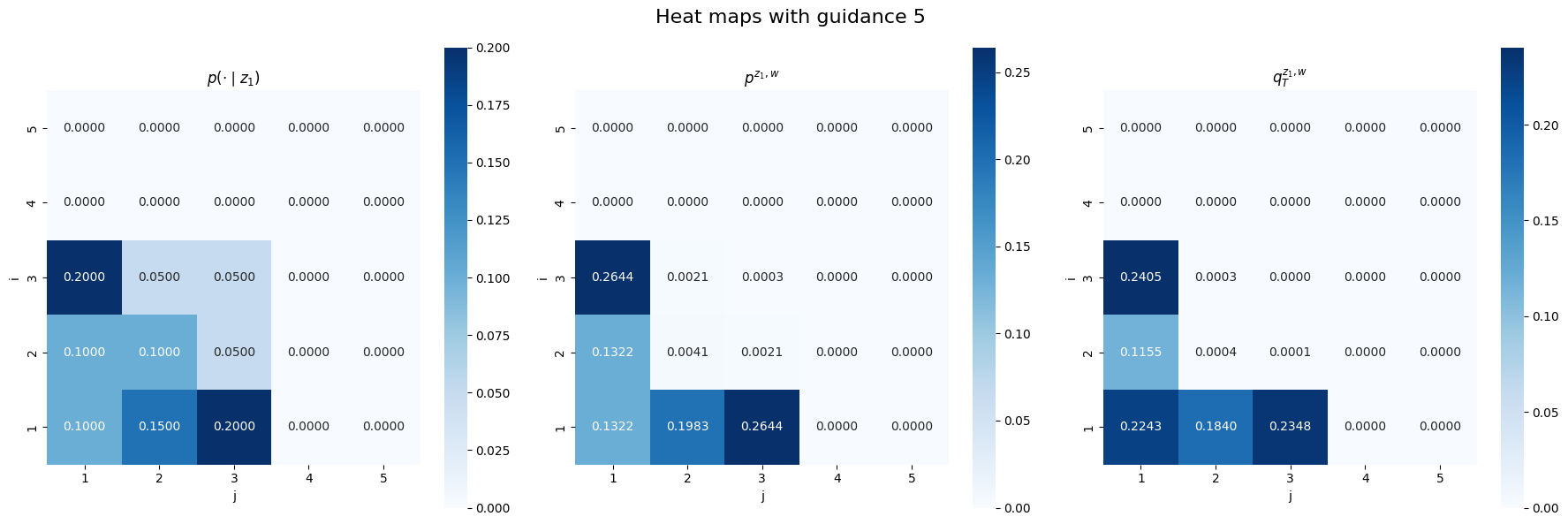}
    \includegraphics[width=\linewidth]{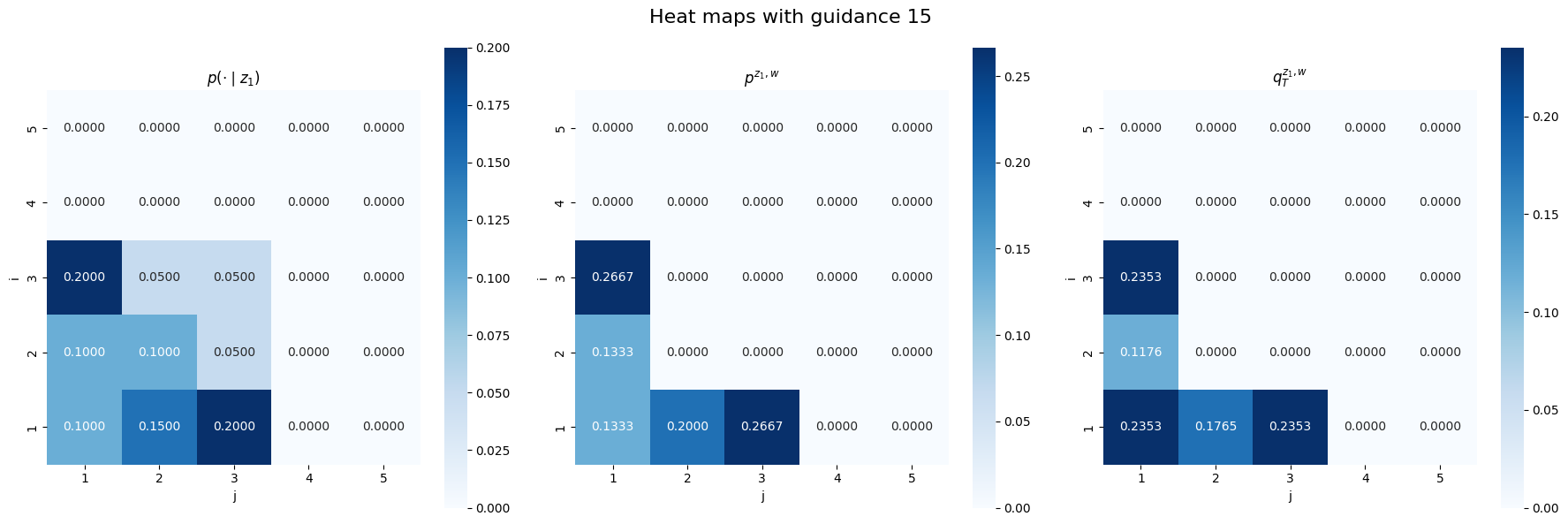}
    \caption{distributions under different guidance strengths: $w=1,5,15$. The first column presents the class distribution of $z_1$. The second column presents the tilted distributions. The third column presents the sampled distributions which are obtained using exact evaluations of scores and integrals. }
    \label{fig:sampled distributions guidance}
\end{figure}
\section{Numerical Experiments}\label{append:numerical examples}
\subsection{Details on $1$D experiment}
We consider each cluster to be defined by the following vector:
\[ (0.1, 0.2, 0.4, 0.2, 0.1)\]
We consider two classes, each containing two of the clusters above. We consider a mixture of both classes with equal weight assigned to each class. 

\noindent\textbf{Disjoint Example:} We plot the class conditional and full probability distributions for the disjoint example in Figure \ref{fig:disjoint-example}.
\begin{figure}[H]
    \centering   
    \begin{subfigure}[b]{0.3\textwidth} \includegraphics[width=\linewidth]{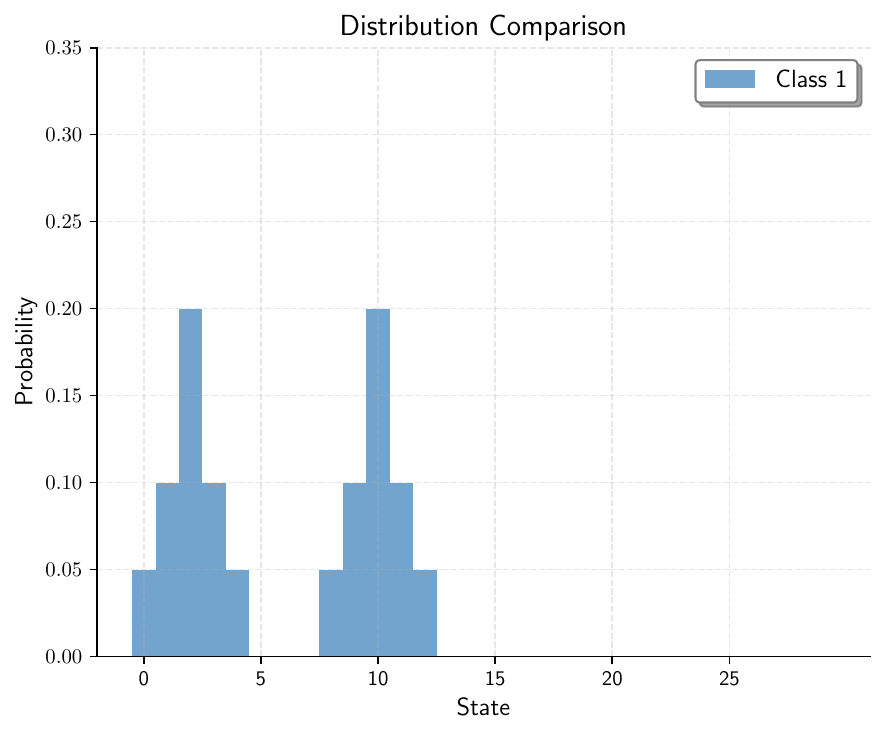}
        \caption{Histogram corresponding to class $1$}
    \end{subfigure}
    \hfill
    \begin{subfigure}[b]{0.3\textwidth}        \includegraphics[width=\linewidth]{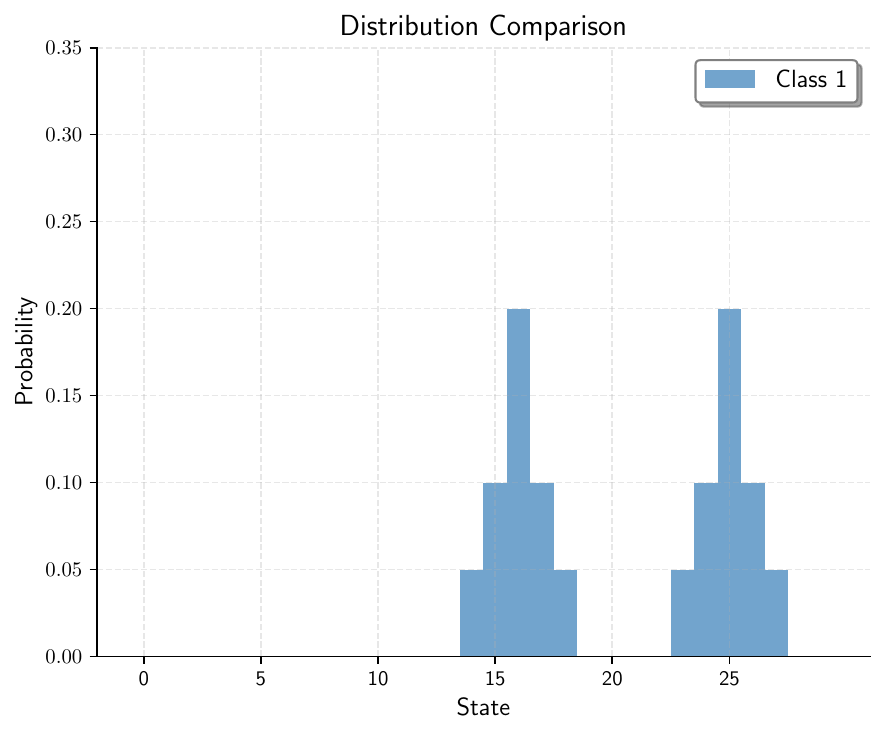}
        \caption{Histogram corresponding to class $2$}
    \end{subfigure}
    \hfill
    \begin{subfigure}[b]{0.3\textwidth}      \includegraphics[width=\linewidth]{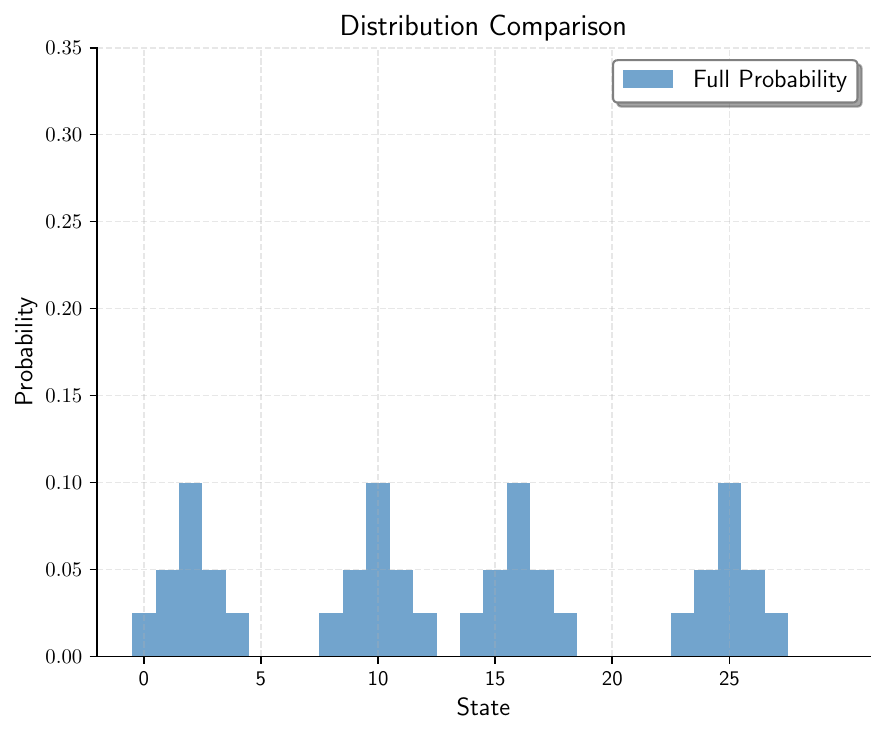}
        \caption{Histogram corresponding to the full probability}
    \end{subfigure}
    \hfill
    \caption{Histograms corresponding to the disjoint example.}
    \label{fig:disjoint-example}
\end{figure}

\noindent\textbf{Intersection Example:} We pull the classes together to create a region of intersection. We plot the class conditional and full probability distributions for the intersection example in Figure \ref{fig:intersection-example}.
\begin{figure}[H]
    \centering   
    \begin{subfigure}[t]{0.3\textwidth}     \includegraphics[width=\linewidth]{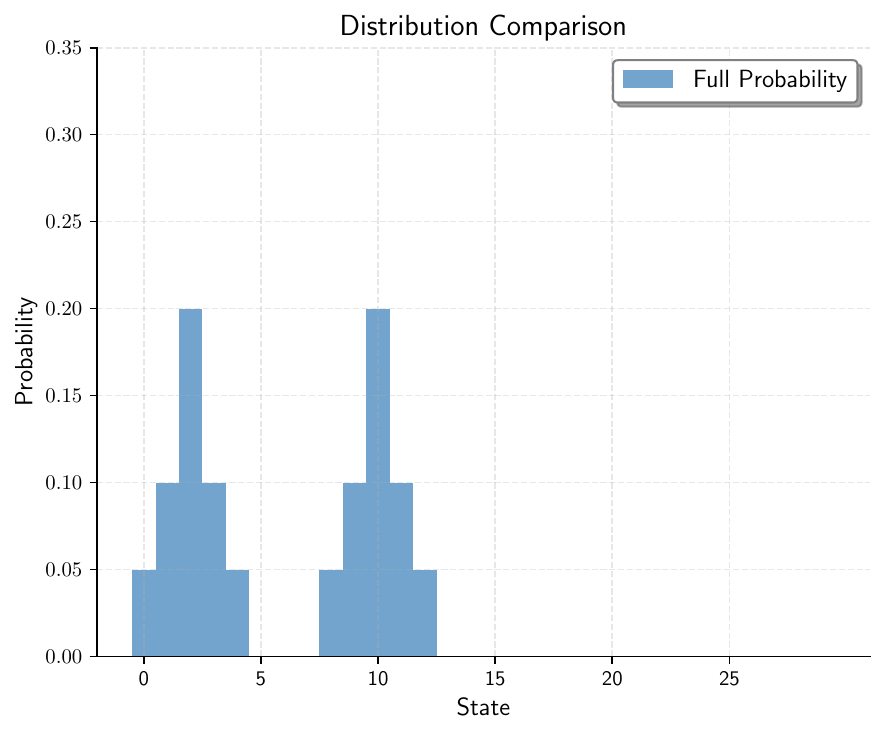}
        \caption{Histogram corresponding to class $1$}
    \end{subfigure}
    \hfill
    \begin{subfigure}[t]{0.3\textwidth}
        \includegraphics[width=\linewidth]{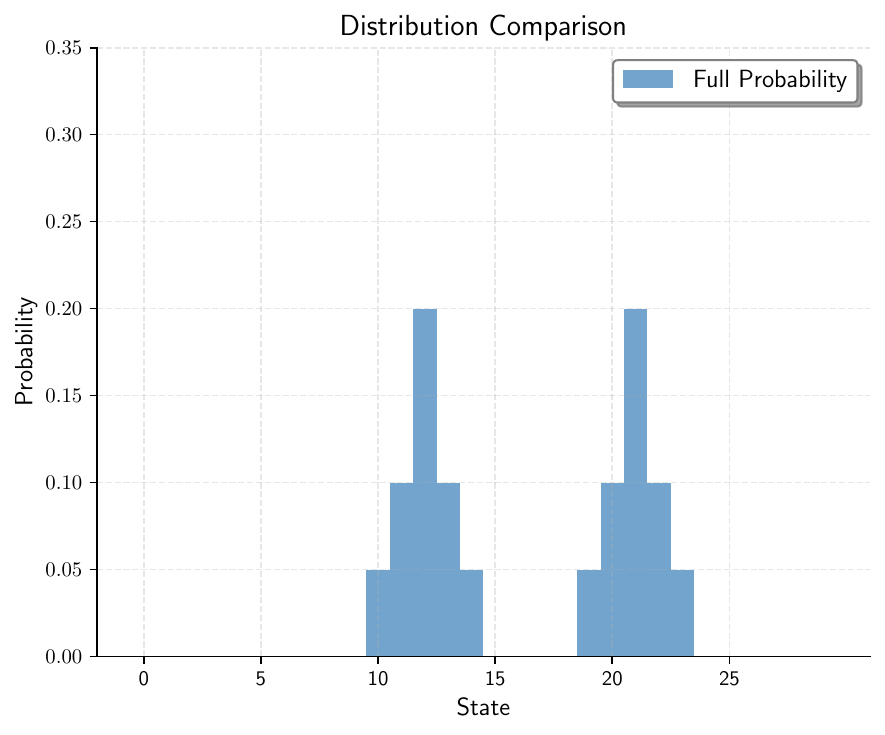}
        \caption{Histogram corresponding to class $2$}
    \end{subfigure}
    \hfill
    \begin{subfigure}[t]{0.3\textwidth}
        \includegraphics[width=\linewidth]{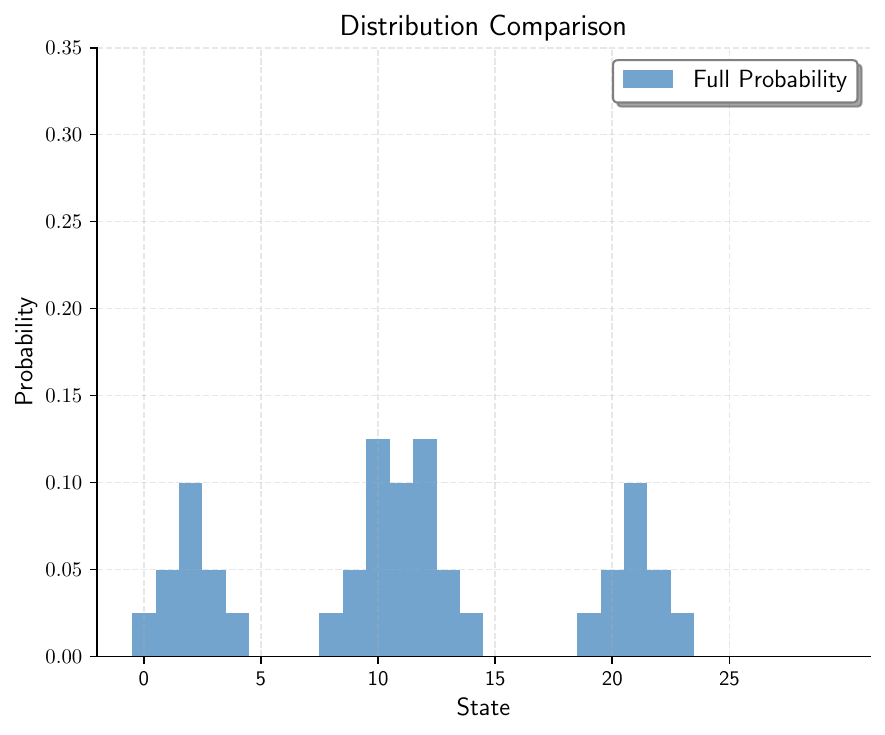}
        \caption{Histogram corresponding to the full probability}
    \end{subfigure}
    \hfill
    \caption{Histograms corresponding to the intersection example.}
    \label{fig:intersection-example}
\end{figure} 

\subsection{Details on $2$D experiment}
\textbf{Disjoint Example:} We plot the class conditional and full probability distributions for the disjoint example in Figure \ref{fig:2d-disjoint-example}.
\begin{figure}[H]
    \centering
    
    \begin{subfigure}[t]{0.3\textwidth}
        \includegraphics[width=\linewidth]{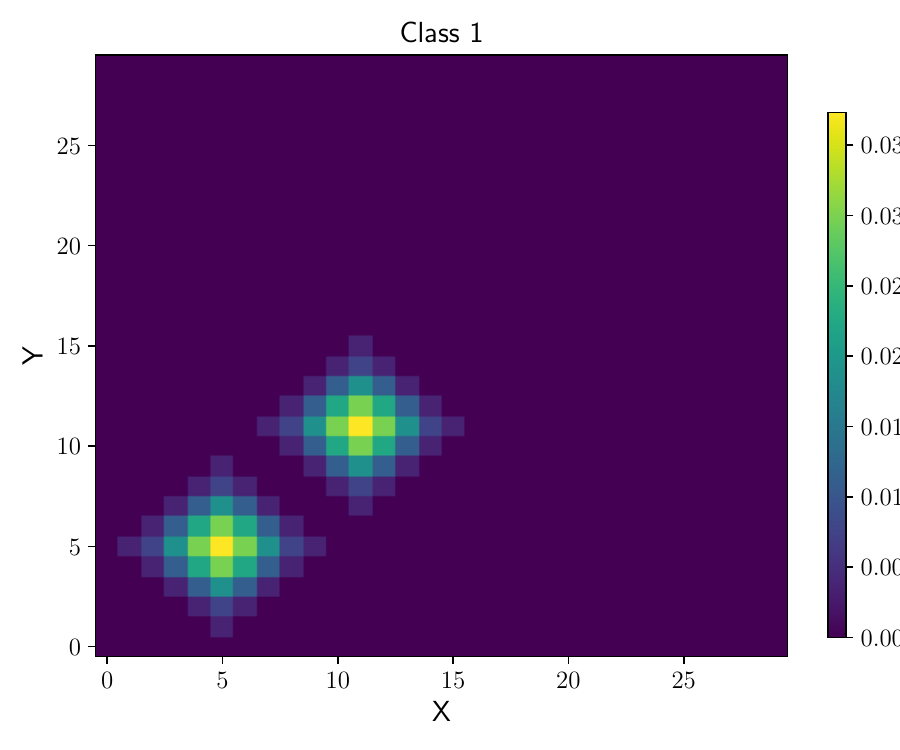}
        \caption{Heat plot corresponding to class $1$}
    \end{subfigure}
    \hfill
    \begin{subfigure}[t]{0.3\textwidth}
        \includegraphics[width=\linewidth]{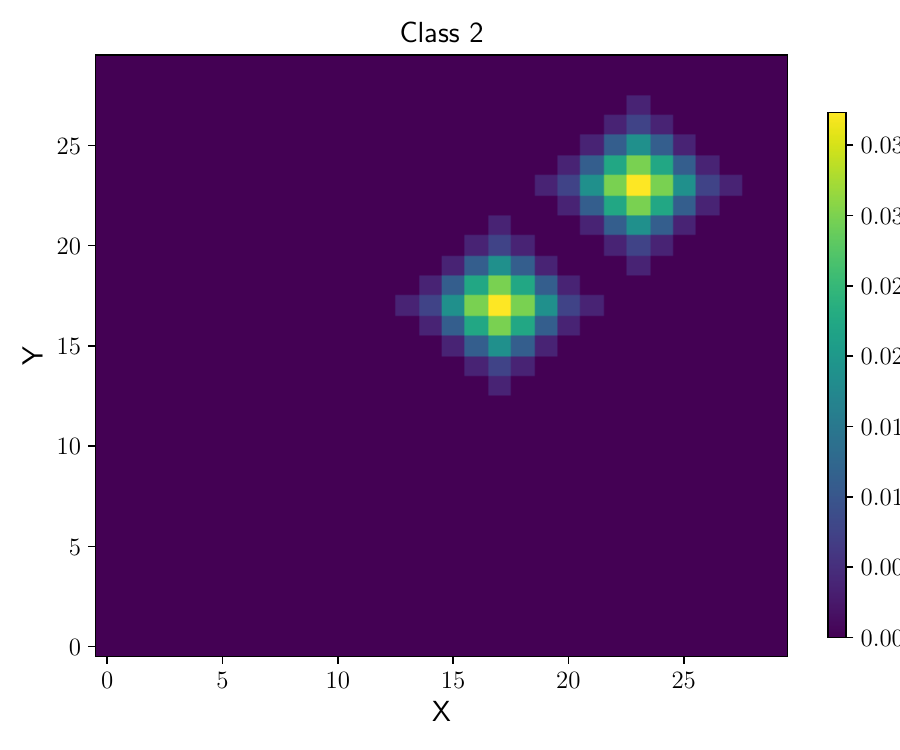}
        \caption{Heat  plot corresponding to class $2$}
    \end{subfigure}
    \hfill
    \begin{subfigure}[t]{0.3\textwidth}
        \includegraphics[width=\linewidth]{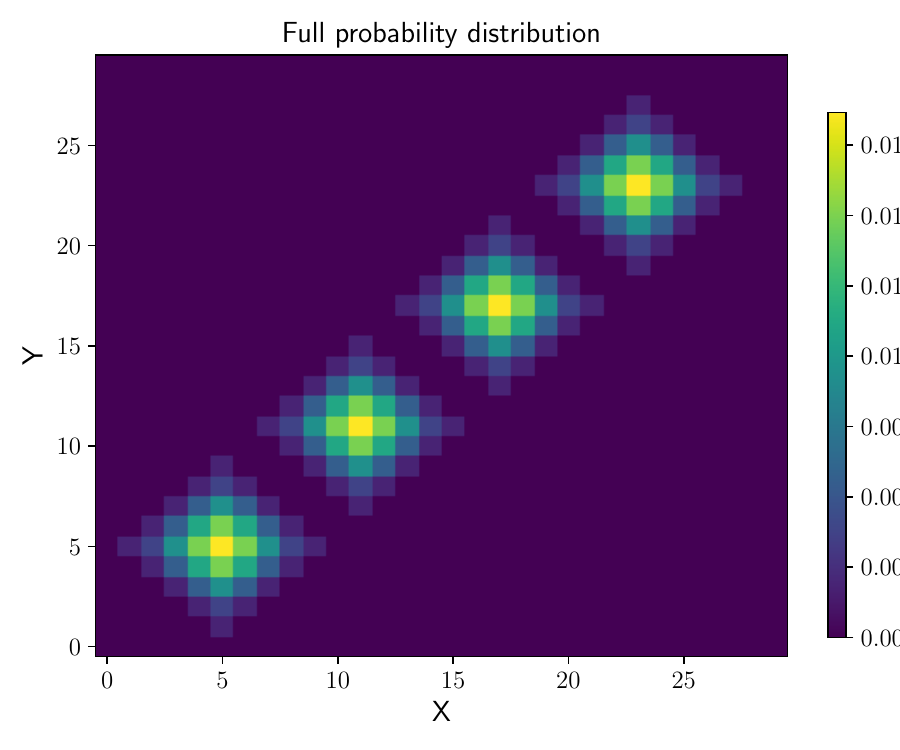}
        \caption{Heat plot corresponding to the full probability}
    \end{subfigure}
    \hfill
    \caption{Heat plot corresponding to the disjoint example.}
    \label{fig:2d-disjoint-example}
\end{figure} 

\noindent\textbf{Intersection Example:} We pull the classes together to create a region of intersection. We plot the class conditional and full probability distributions for the intersection example in Figure \ref{fig:2d-intersection-example}.
\begin{figure}[h!]
    \centering
    \begin{subfigure}[t]{0.3\textwidth}
        \includegraphics[width=\linewidth]{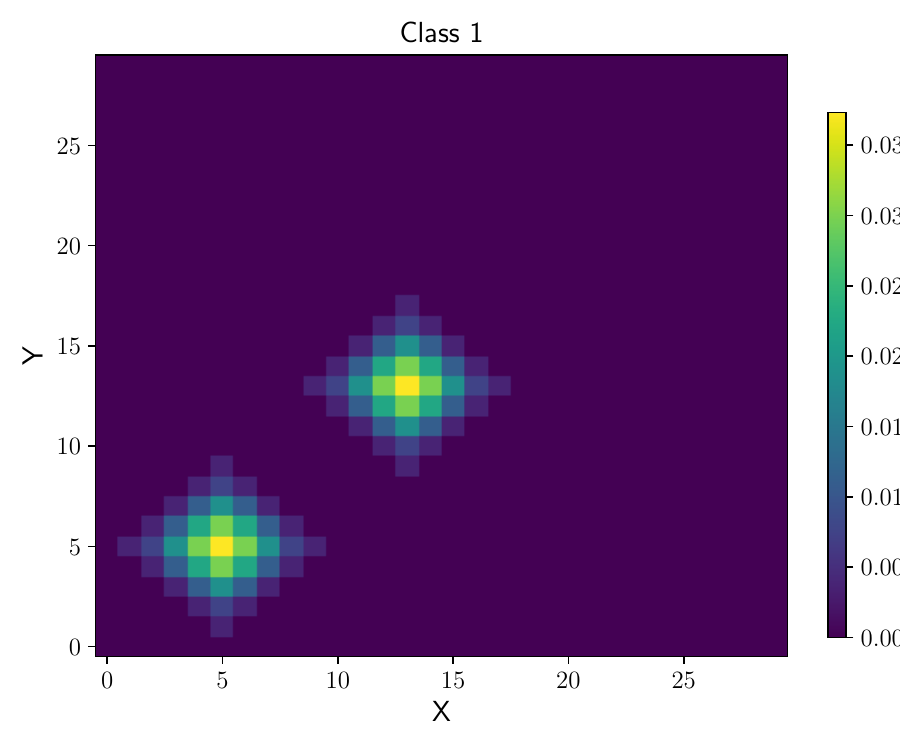}
        \caption{Heat plot corresponding to class $1$}
    \end{subfigure}
    \hfill
    \begin{subfigure}[t]{0.3\textwidth}
        \includegraphics[width=\linewidth]{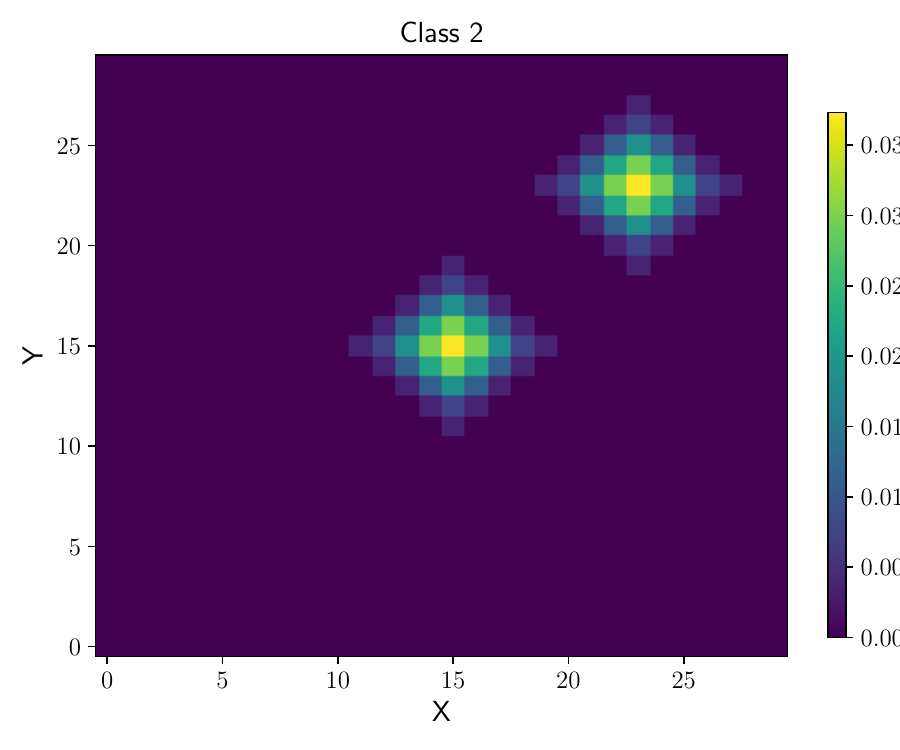}
        \caption{Heat  plot corresponding to class $2$}
    \end{subfigure}
    \hfill
    \begin{subfigure}[t]{0.3\textwidth}
        \includegraphics[width=\linewidth]{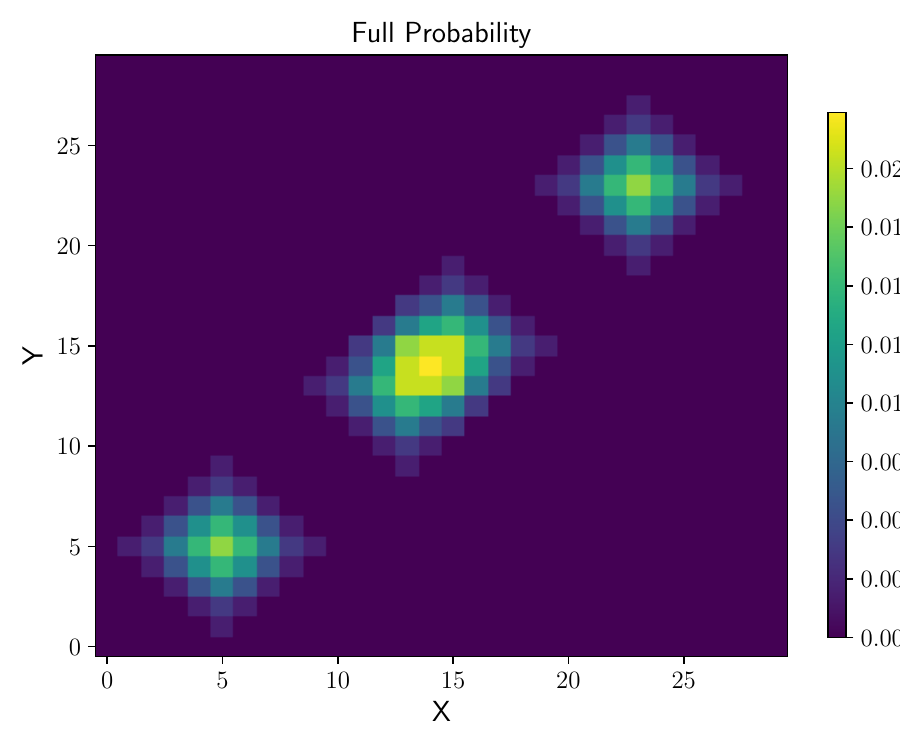}
        \caption{Heat plot corresponding to the full probability}
    \end{subfigure}
    \hfill
    \caption{Heat plot corresponding to the intersection example.}
    \label{fig:2d-intersection-example}
\end{figure} 
\subsection{Experiments in $5$D}
\textbf{Defining the distribution}. We define a 5-dimensional Gaussian mixture distribution with three components to induce structured overlaps along specific dimensions. The probability density function is given by:

\[
p({x}) = \sum_{k=1}^3 \pi_k \, \mathcal{N}({x} \mid {\mu}_k, {\Sigma}_k),
\]

where:
\begin{itemize}
  \item The mixture weights are \( {\pi} = [0.4, 0.3, 0.3] \).
  \item The component means are:
  \[
  \begin{aligned}
  {\mu}_1 &= [0,\, 0,\, 0,\, 0,\, 0]^\intercal, \\
  {\mu}_2 &= [2,\, 2,\, 0,\, -1,\, -1]^\intercal, \\
  {\mu}_3 &= [-2,\, -2,\, 0,\, 1,\, 1]^\intercal.
  \end{aligned}
  \]
  \item The covariance matrices are diagonal and given by:
  \[
  \begin{aligned}
  {\Sigma}_1 &= \operatorname{diag}\left(\frac{0.8}{0.4},\, \frac{0.8}{0.4},\, \frac{0.5}{0.4},\, \frac{0.5}{0.4},\, \frac{0.5}{0.4} \right) ,\\
  {\Sigma}_2 &= {\Sigma}_3 = \operatorname{diag}\left(\frac{0.8}{0.8},\, \frac{0.8}{0.8},\, \frac{0.5}{0.8},\, \frac{0.5}{0.8},\, \frac{0.5}{0.8} \right).
  \end{aligned}
  \]
\end{itemize}
This construction ensures that all components overlap along dimensions $3–5$ while differing significantly along dimensions $1$ and $2$, allowing for structured ambiguity in a subspace of the input. We then generate a discrete distribution by looking at a grid of $10$ points per side on the interval $[-3,3]$. After evaluating on these grids, we generate a tensor and normalize to create the distribution in our discrete space. 

\noindent\textbf{Experiment setting.} We generate $10$K samples and plot several marginals for each class distribution, each of the associated conditional generated distributions with guidance $w=1$, $w=3$, and each of the associated unconditional generated distributions. Our results show that as we increase the guidance strength, the probability mass in the intersection region decreases in all the marginal plots. These numerical results support our Conjecture \ref{conject:high dimension} for $D\ge 2$ in Section \ref{append:explicit expression asymp w 2d}. 

\noindent\textbf{Numerical results.} We first generate $10$K samples and plot several marginals for each class on Figures \ref{fig:5d-class-0}, \ref{fig:5d-class-1}, \ref{fig:5d-class-2} and the unconditional distribution in \ref{fig:5d-class--1}. 

\begin{figure}[H]
    \centering
    \includegraphics[width=\linewidth]{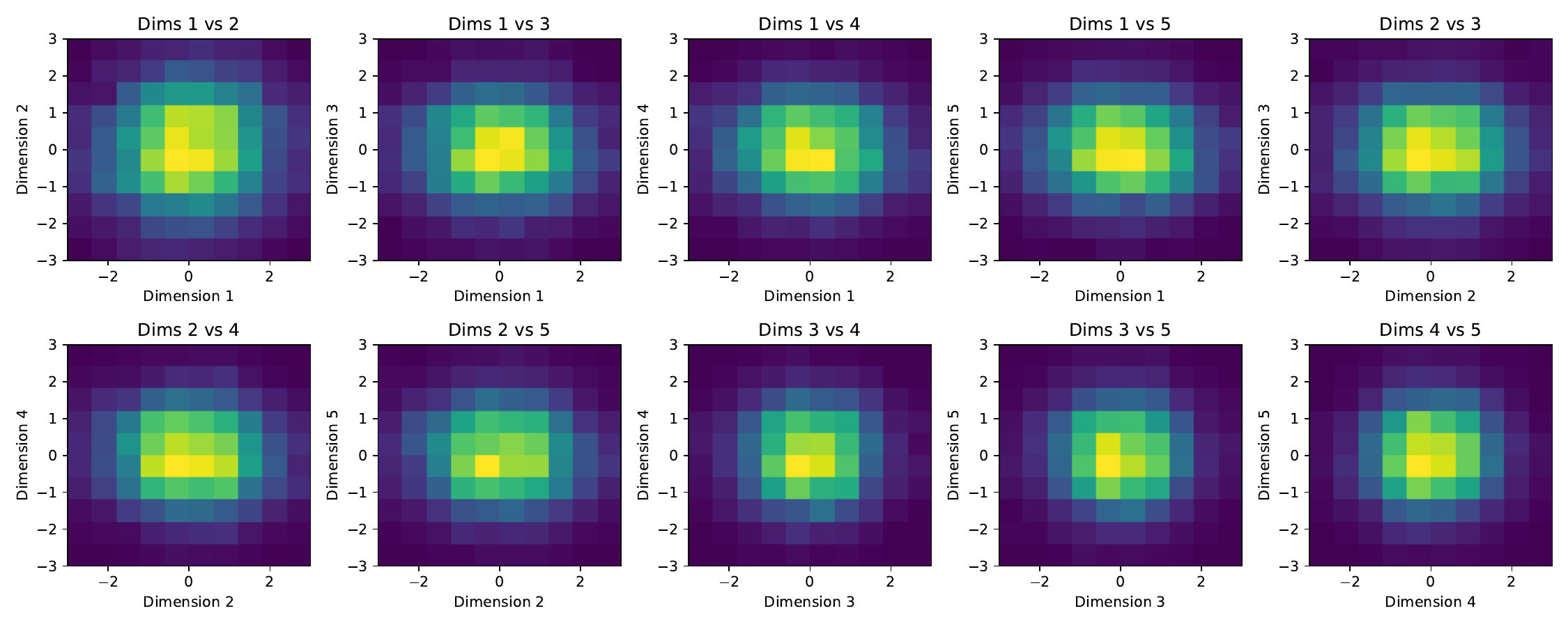}
    \caption{Class 0}
    \label{fig:5d-class-0}
\end{figure} 
\vspace{-.5cm}
\begin{figure}[H]
    \centering
    \includegraphics[width=\linewidth]{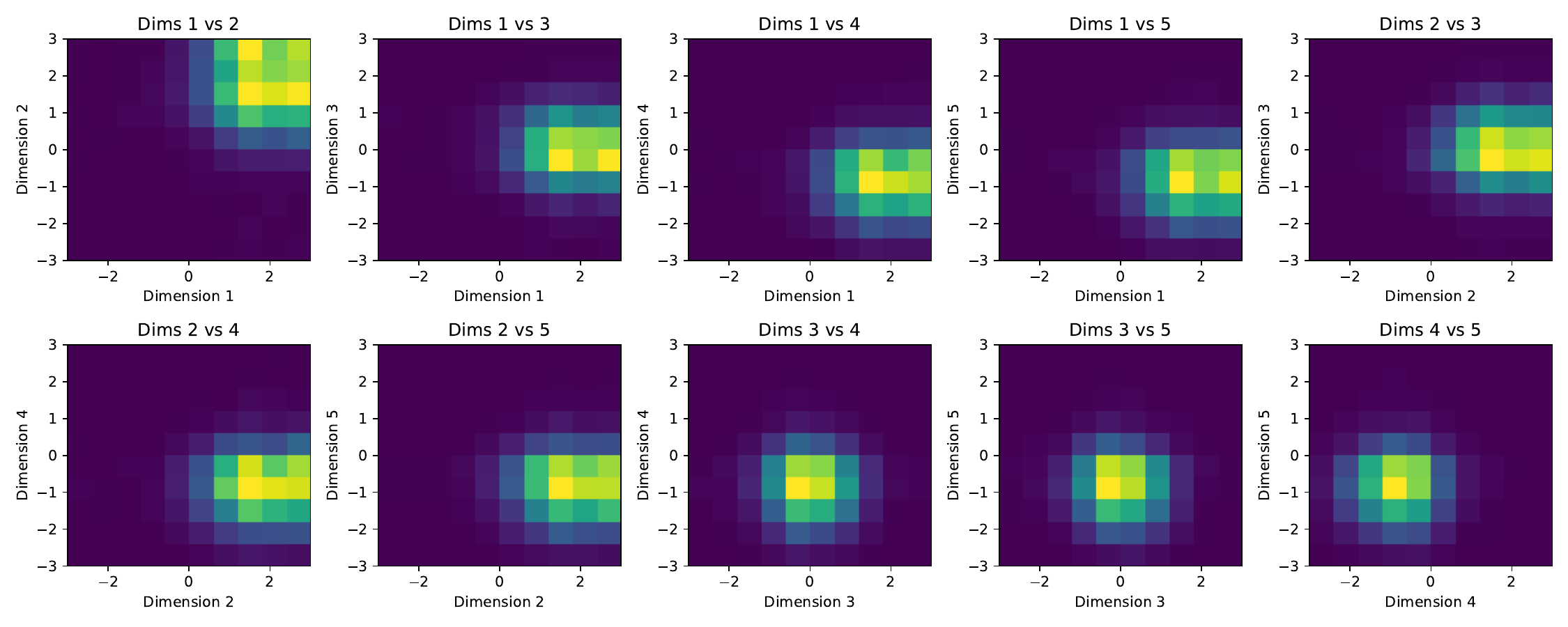}
    \caption{Class 1}
    \label{fig:5d-class-1}
\end{figure} 
\vspace{-.5cm}
\begin{figure}[H]
    \centering
    \includegraphics[width=\linewidth]{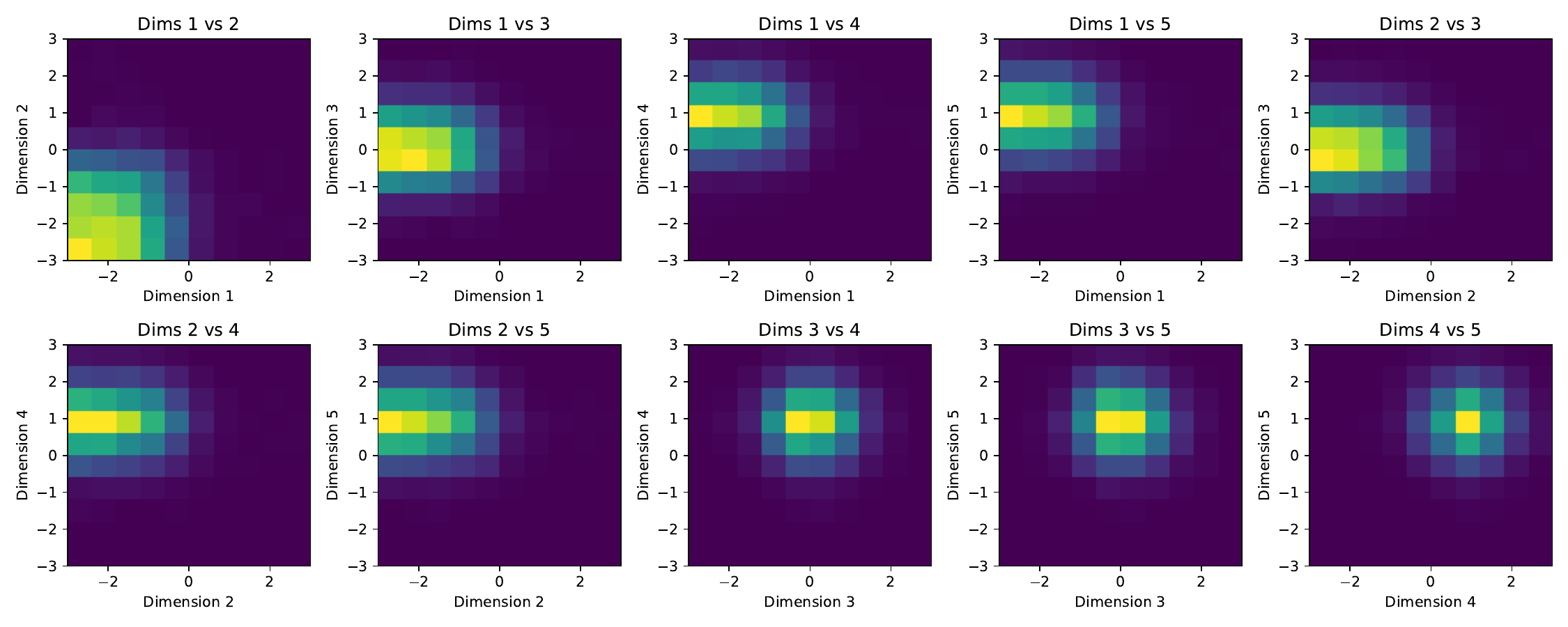}
    \caption{Class 2}
    \label{fig:5d-class-2}
\end{figure} 
\vspace{-.5cm}
\begin{figure}[H]
    \centering
    \includegraphics[width=\linewidth]{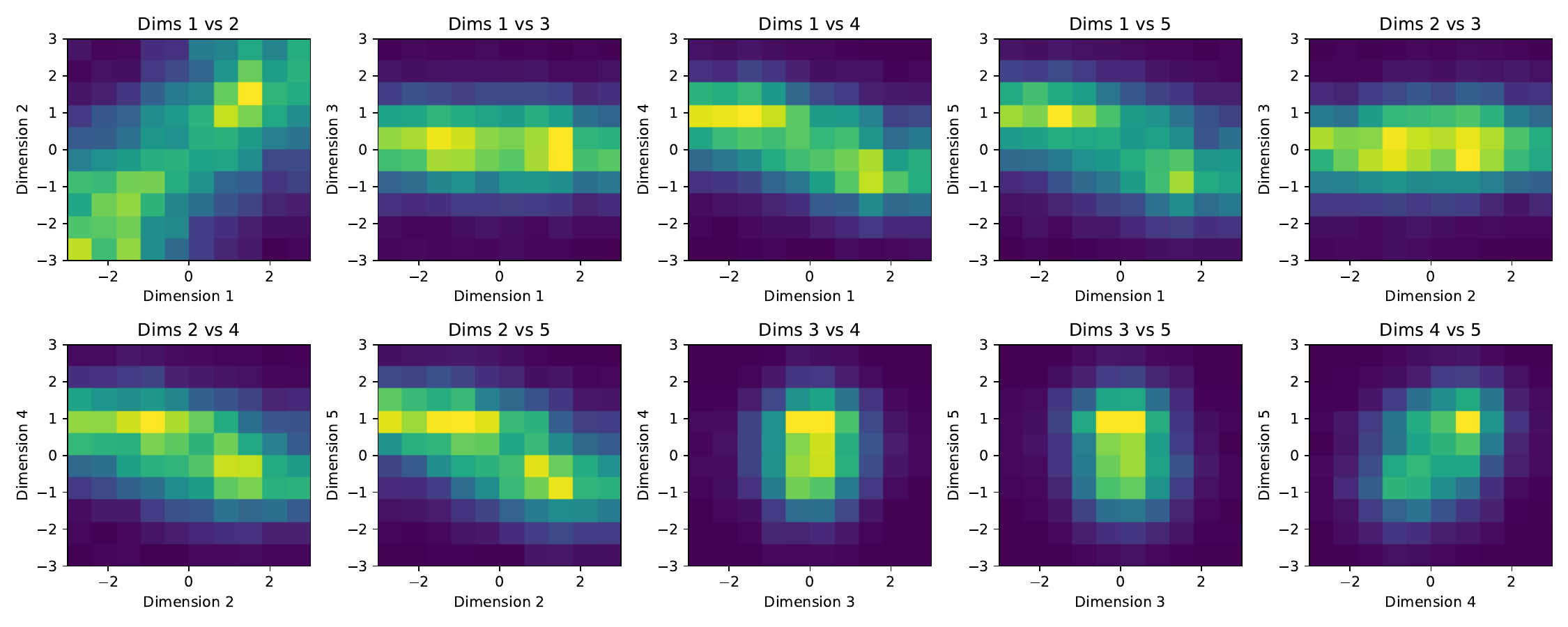}
    \caption{Unconditional Generation}
    \label{fig:5d-class--1}
\end{figure} 
We now generate some samples using guidance $w=1$ in Figures \ref{fig:5d-class-0-guid-2}, \ref{fig:5d-class-1-guid-2}, \ref{fig:5d-class-2-guid-2} and \ref{fig:5d-class--1-guid-2}. 

\begin{figure}[H]
    \centering
    \includegraphics[width=\linewidth]{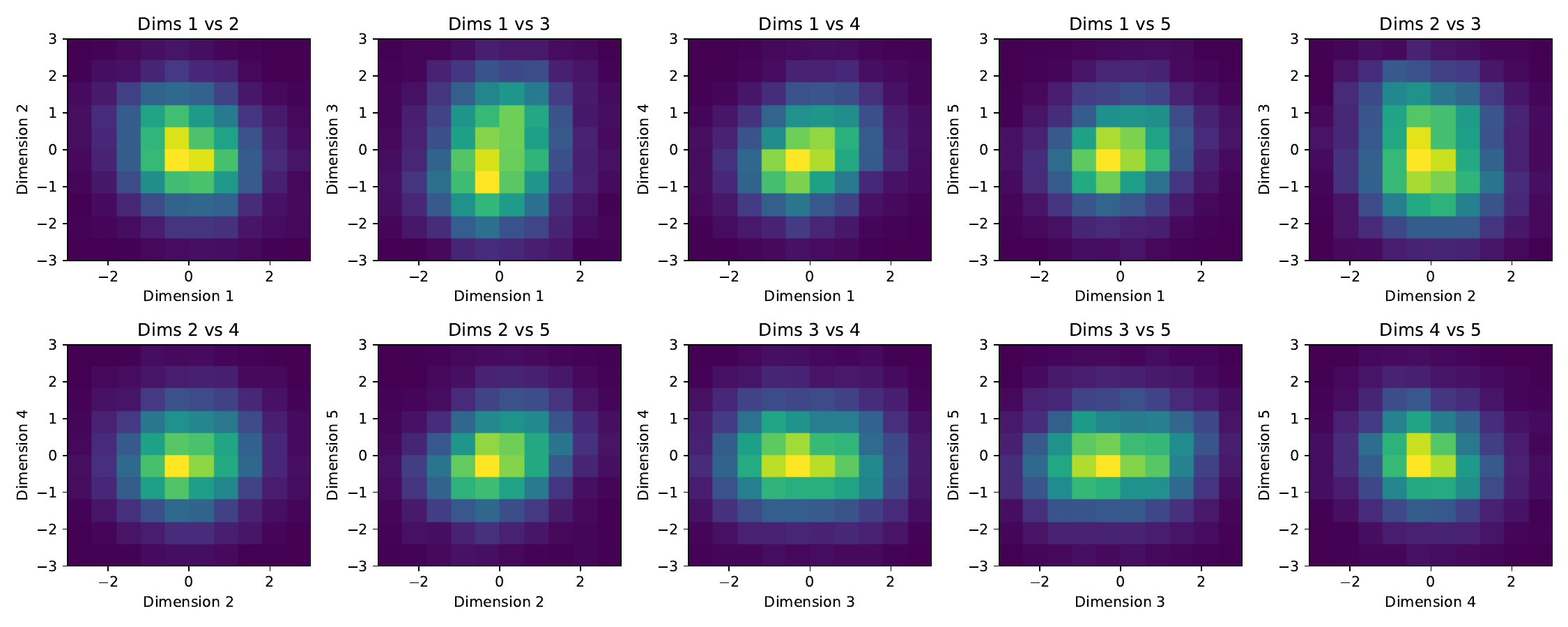}
    \caption{Class 0 with $w=1$}
    \label{fig:5d-class-0-guid-2}
\end{figure} 

\begin{figure}[H]
    \centering
    \includegraphics[width=\linewidth]{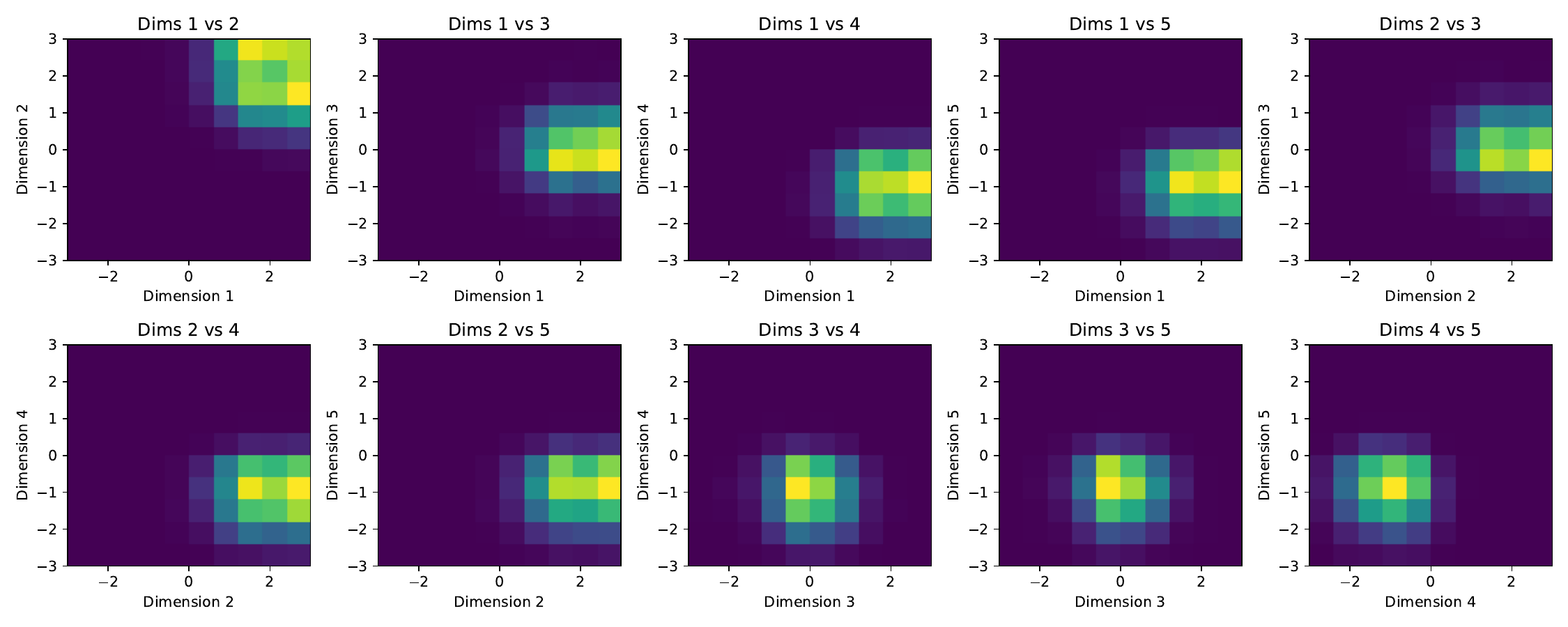}
    \caption{Class 1 with $w=1$}
    \label{fig:5d-class-1-guid-2}
\end{figure} 
\begin{figure}[H]
    \centering
    \includegraphics[width=\linewidth]{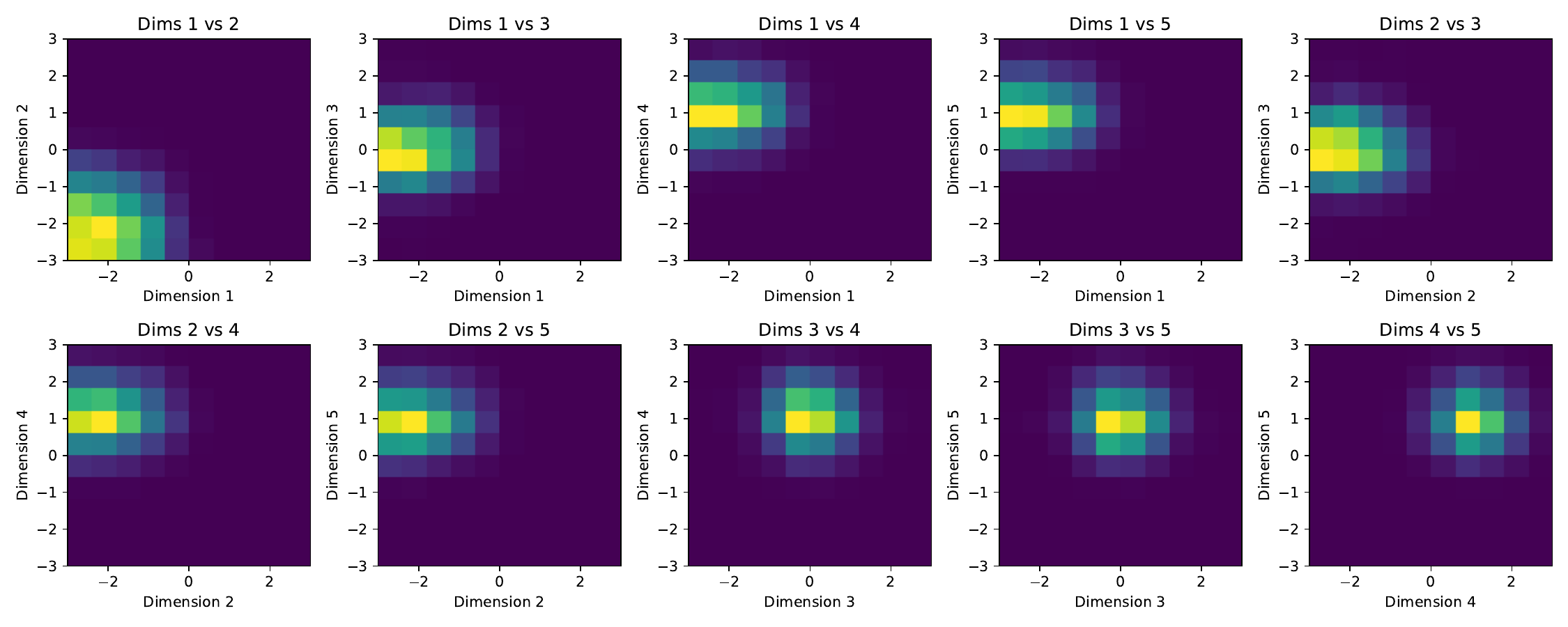}
    \caption{Class 2 with $w=1$}
    \label{fig:5d-class-2-guid-2}
\end{figure} 
\begin{figure}[H]
    \centering
    \includegraphics[width=\linewidth]{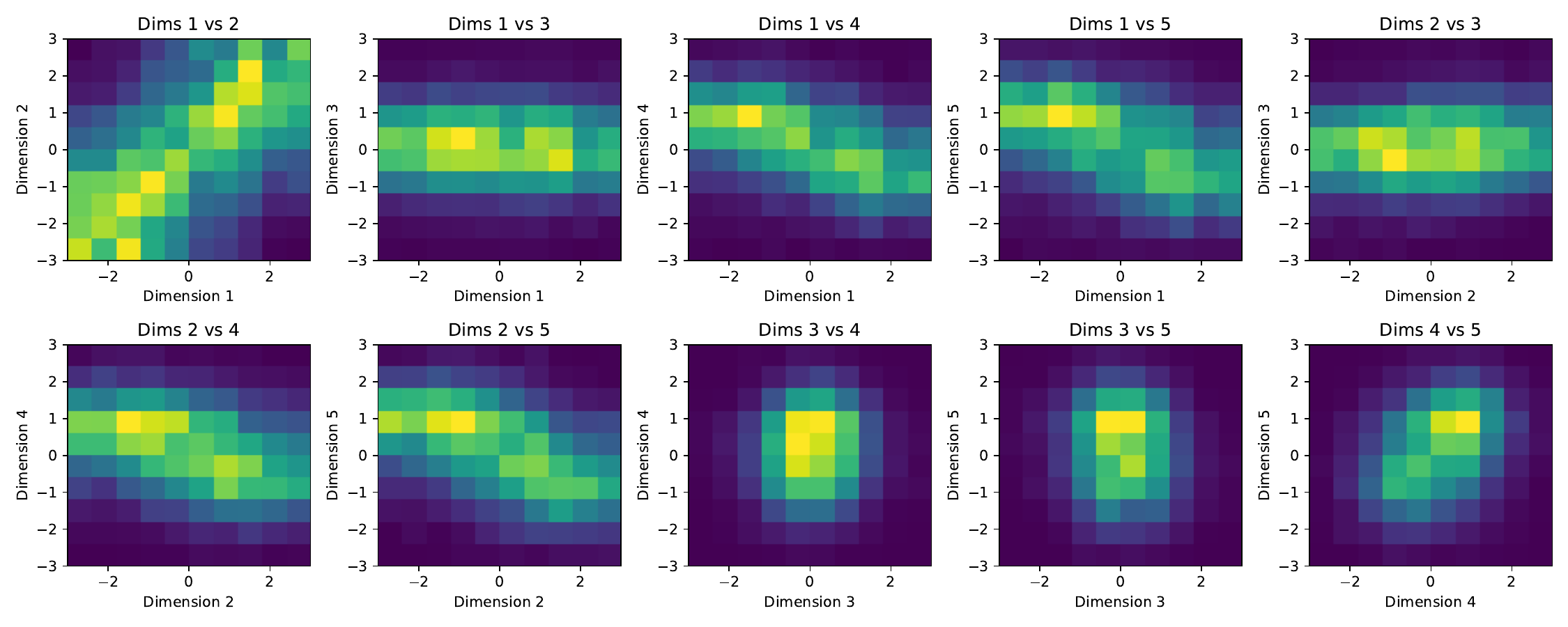}
    \caption{Unconditional Generation with $w = 1$}
    \label{fig:5d-class--1-guid-2}
\end{figure} 

\newpage
We now generate some samples using guidance $w=3$ in Figures \ref{fig:5d-class-0-guid-4}, \ref{fig:5d-class-1-guid-4}, \ref{fig:5d-class-2-guid-4} and \ref{fig:5d-class--1-guid-4}. Observe how probability mass decreases in the intersection region.

\begin{figure}[H]
    \centering
    \includegraphics[width=\linewidth]{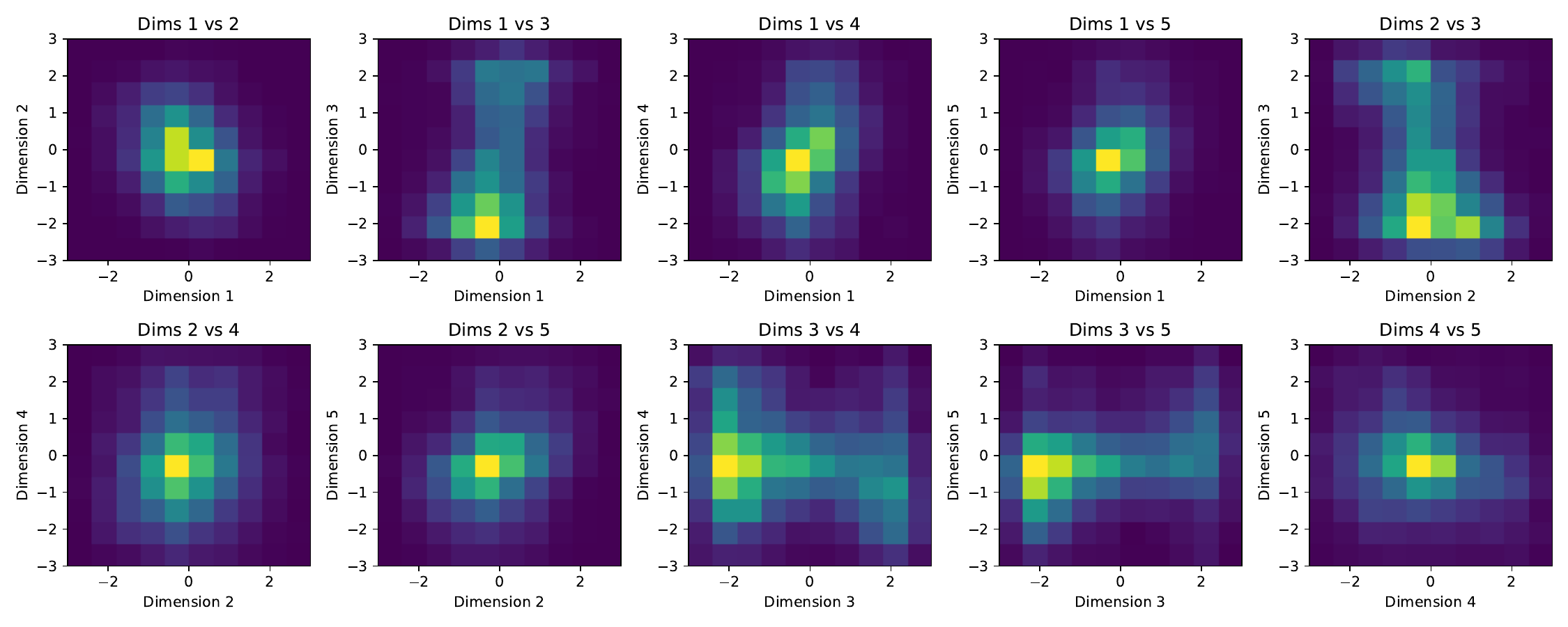}
    \caption{Class 0 with $w=3$}
    \label{fig:5d-class-0-guid-4}
\end{figure} 

\begin{figure}[H]
    \centering
    \includegraphics[width=\linewidth]{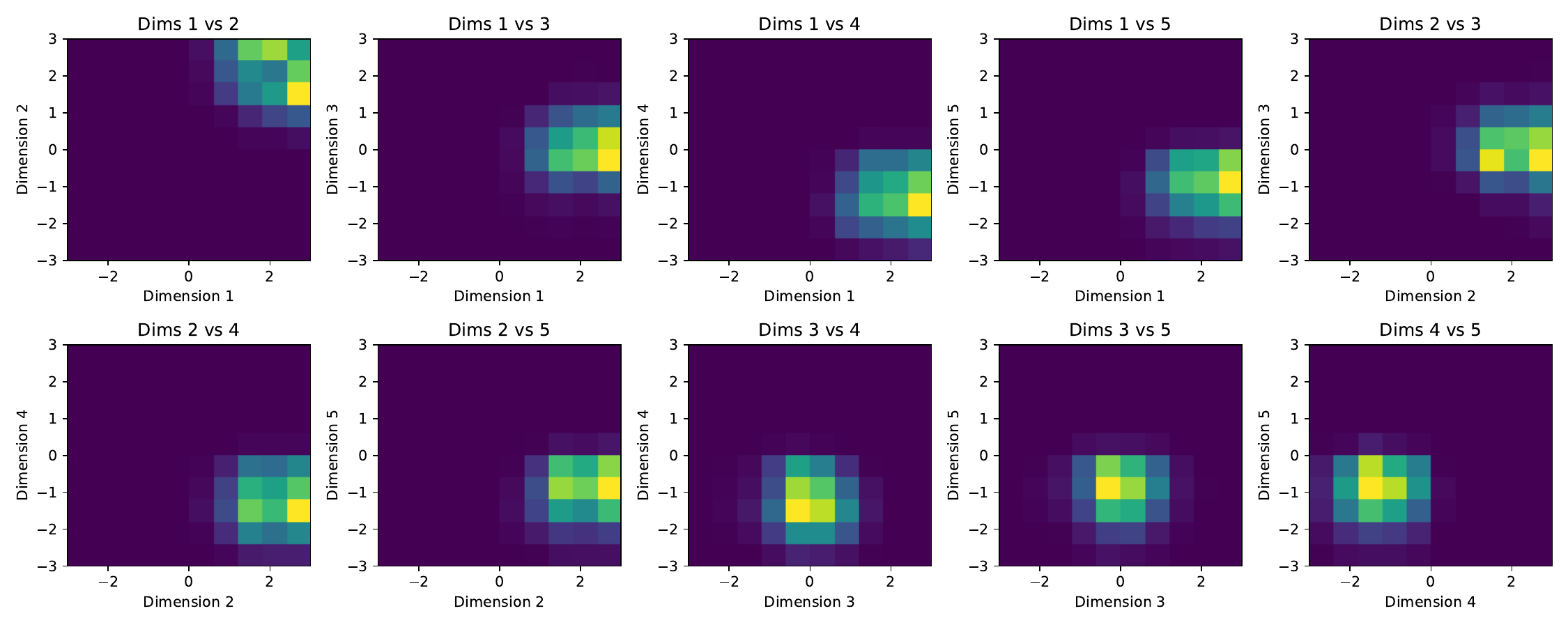}
    \caption{Class 1 with $w=3$}
    \label{fig:5d-class-1-guid-4}
\end{figure} 
\begin{figure}[H]
    \centering
    \includegraphics[width=\linewidth]{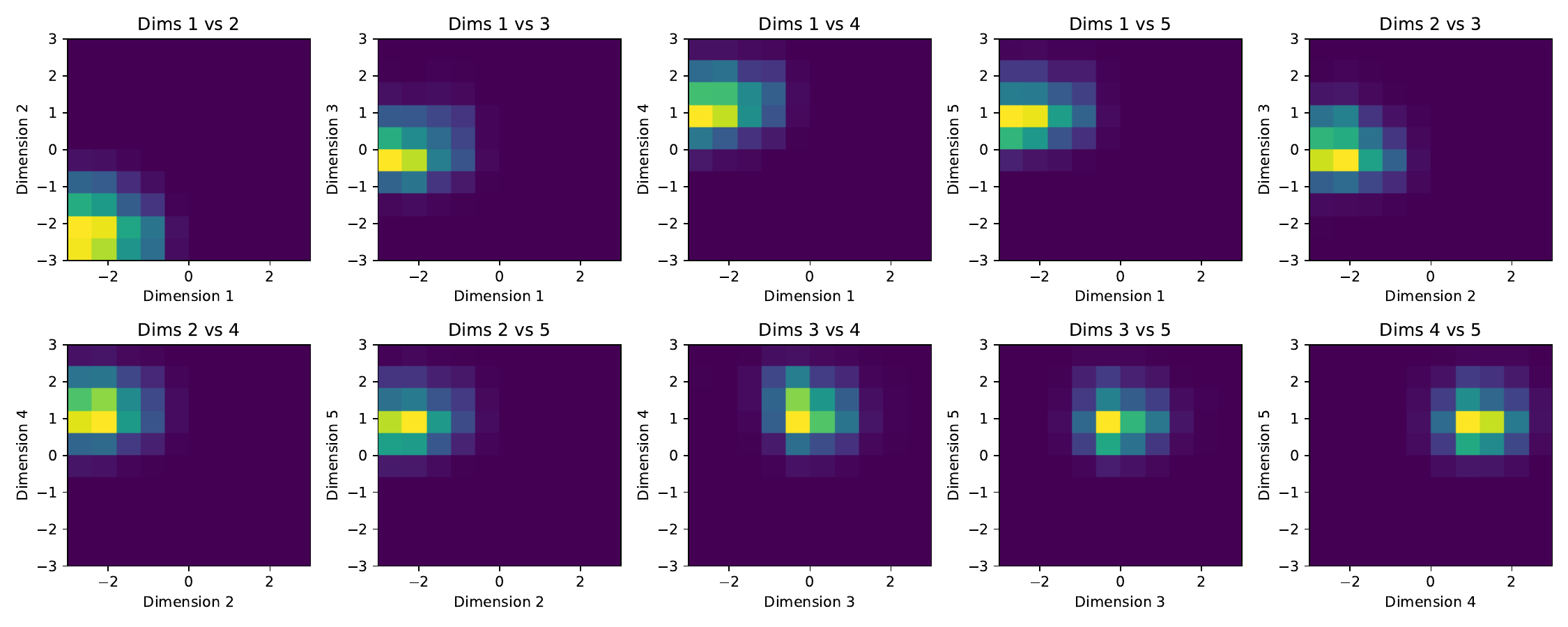}
    \caption{Class 2 with $w=3$}
    \label{fig:5d-class-2-guid-4}
\end{figure} 
\begin{figure}[H]
    \centering
    \includegraphics[width=\linewidth]{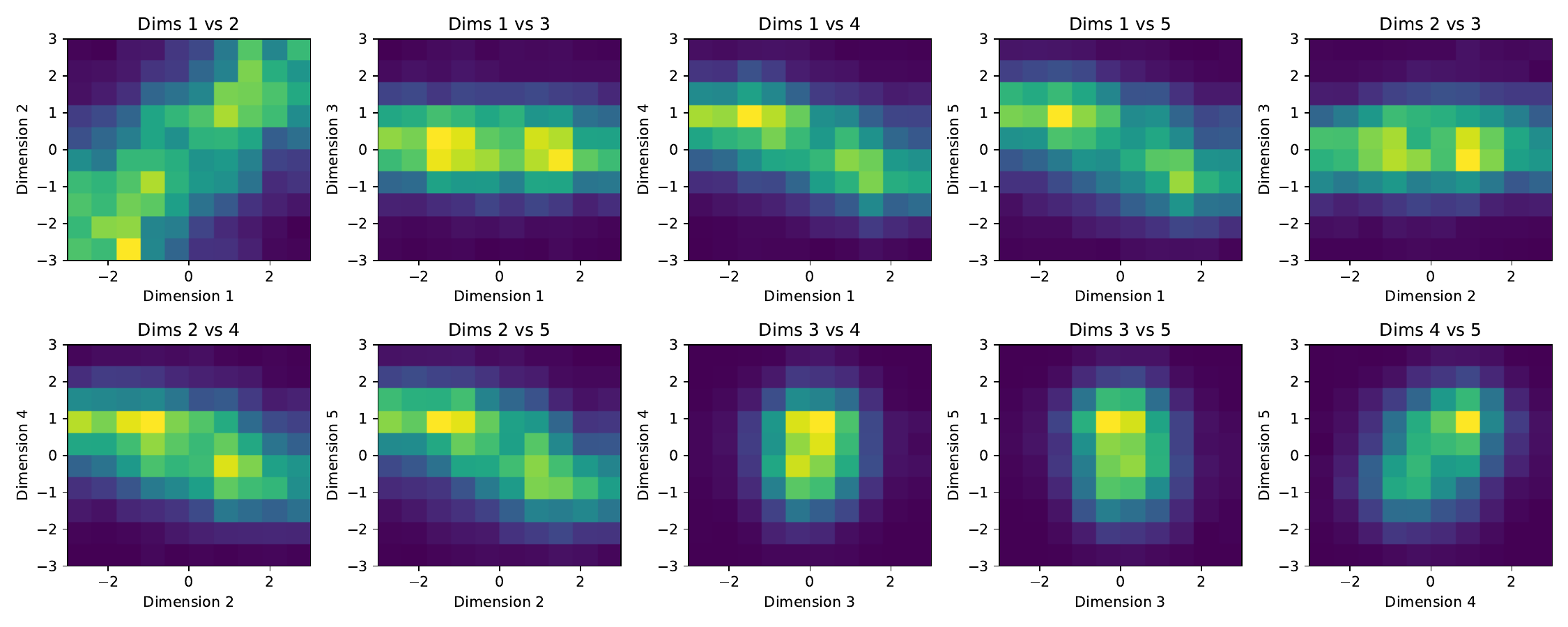}
    \caption{Unconditional Generation with $w = 3$}
    \label{fig:5d-class--1-guid-4}
\end{figure} 
\newpage 
\subsection{Experiments on MNIST}
We demonstrate that our findings apply in high dimensional problems and practical settings. We trained a U-ViT network \citep{bao2023all} for $100$K iterations using the Adam optimizer with $1e-4$ learning rate. The hyperparameters for the network can be found in Table \ref{tab:model-config}.

\begin{table}[h!]
  \centering
  \caption{Model Configuration}
  \label{tab:model-config}
  \begin{tabular}{>{\bfseries}l l}
    \toprule
    Parameter & Value \\
    \midrule
    img\_size        & 28 \\
    in\_chans        & 1 \\
    patch\_size      & 2 \\
    embed\_dim       & 512 \\
    depth            & 12 \\
    num\_heads       & 8 \\
    mlp\_ratio       & 4 \\
    qkv\_bias        & False \\
    mlp\_time\_embed & False \\
    labels\_dim      & 11 \\
    \bottomrule
  \end{tabular}
\end{table}

We demonstrate that guidance eliminates the region of intersection. To do so, we consider two examples. In the first one, we sample digit $8$ without guidance, with guidance, and using the class of digit $3$ as the guiding distribution. The results show that samples of $8$ that resemble $3$ disappear when using guidance, this effect is even more pronounced when we use $3$ to guide the generation of $8$. This can be observed in Figure \ref{fig:8conditioned3}. To further demonstrate our point, we repeat the same experiment using $7$ conditioned on $1$. With the numerical results in Figure \ref{fig:8conditioned3} and Figure \ref{fig:7conditioned1}, it becomes clear that even in practical settings, the theoretical results move on to higher dimensions.

\begin{figure}[H]
    \centering
    
    \begin{subfigure}[t]{0.3\textwidth}
        \includegraphics[width=\linewidth]{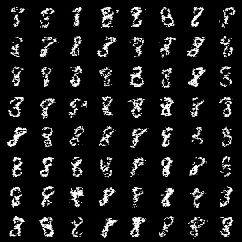}
        \caption{Generating samples of $8$ with no guidance}
    \end{subfigure}
    \hfill
    \begin{subfigure}[t]{0.3\textwidth}
        \includegraphics[width=\linewidth]{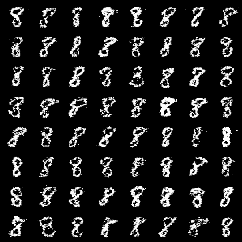}
        \caption{Generating samples of $8$ using guidance with $w=1$}
    \end{subfigure}
    \hfill
    \begin{subfigure}[t]{0.3\textwidth}
        \includegraphics[width=\linewidth]{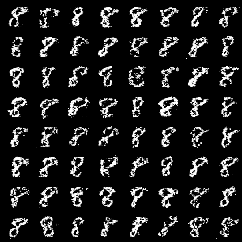}
        \caption{Generating samples of $8$ but using the class of number $3$ as the guiding distribution using $w=1$}
    \end{subfigure}
    \caption{Applying guidance reduces the are of intersection between classes. Notice how number $8$'s that look similar to a $3$ disappear.}
    \label{fig:8conditioned3}
\end{figure}

\begin{figure}[H]
    \centering
    
    \begin{subfigure}[t]{0.3\textwidth}
        \includegraphics[width=\linewidth]{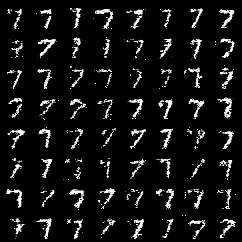}
        \caption{Generating samples of $7$ with no guidance}
    \end{subfigure}
    \hfill
    \begin{subfigure}[t]{0.3\textwidth}
        \includegraphics[width=\linewidth]{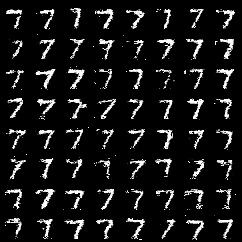}
        \caption{Generating samples of $7$ using guidance with $w=2$}
    \end{subfigure}
    \hfill
    \begin{subfigure}[t]{0.3\textwidth}
        \includegraphics[width=\linewidth]{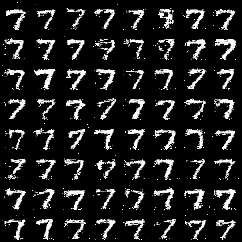}
        \caption{Generating samples of $7$ but using the class of number $1$ as the guiding distribution using $w=2$}
    \end{subfigure}
    \caption{Applying guidance reduces the area of intersection between classes. Notice how number $7$'s that resemble a $1$ disappear.}
    \label{fig:7conditioned1}
    \vspace{-.6cm}
\end{figure}

\end{document}